\documentclass[11pt]{article}

\usepackage{times}
\usepackage{latexsym}
\usepackage[T1]{fontenc}

\usepackage{enumitem}
\usepackage{microtype}
\usepackage{graphicx}
\usepackage{subfigure}
\usepackage{tablefootnote}
\usepackage{multirow}
\usepackage{booktabs} 
\newcommand{\RETURN}{\STATE \textbf{return} }
\usepackage{hyperref}

\def\cX{{\mathcal{X}}}
\def\cD{{\mathcal{D}}}

\def\cY{{\mathcal{Y}}}
\def\RR{{\mathbb{R}}}
\def\EE{{\mathbb{E}}}
\def\la{{\langle}}
\def\ra{{\rangle}}

\usepackage{acl}

\usepackage{amsmath}
\usepackage{amssymb}
\usepackage{mathtools}
\usepackage{amsthm}
\usepackage{xcolor}
\usepackage[table]{xcolor}
\usepackage{algorithm}
\usepackage{algorithmic}

\definecolor{spifcyan}{RGB}{220,245,245}

\DeclareMathOperator*{\argmax}{argmax}
\DeclareMathOperator*{\argmin}{argmin}

\usepackage[capitalize,noabbrev]{cleveref}

\theoremstyle{plain}
\newtheorem{theorem}{Theorem}[section]
\newtheorem{proposition}[theorem]{Proposition}
\newtheorem{lemma}[theorem]{Lemma}

\theoremstyle{definition}
\newtheorem{definition}[theorem]{Definition}
\newtheorem{assumption}[theorem]{Assumption}
\theoremstyle{remark}
\newtheorem{remark}[theorem]{Remark}

\usepackage[textsize=tiny]{todonotes}
\title{Your Self-Play Algorithm is Secretly an Adversarial Imitator:\\
Understanding LLM Self-Play through the Lens of Imitation Learning}

\author{
Shangzhe Li$^{1}$ \quad
Xuchao Zhang$^{2}$ \quad
Chetan Bansal$^{2}$ \quad
Weitong Zhang$^{1}$ \\
$^{1}$University of North Carolina at Chapel Hill \\
$^{2}$Microsoft Research \\
\texttt{\{shangzhe, weitongz\}@unc.edu} \\
\texttt{\{xuchaozhang, chetanb\}@microsoft.com}
}

\begin{document}

\maketitle

\begin{abstract}
Self-play post-training methods has emerged as an effective approach for finetuning large language models and turn the weak language model into strong language model without preference data. 
However, the theoretical foundations for self-play finetuning remain underexplored. 
In this work, we tackle this by connecting self-play finetuning with adversarial imitation learning by formulating finetuning procedure as a min–max game between the model and a regularized implicit reward player parameterized by the model itself. 
This perspective unifies self-play imitation and general preference alignment within a common framework.
Under this formulation, we present a game-theoretic analysis showing that the self-play finetuning will converge to it's equilibrium.
Guided by this theoretical formulation, we propose a new self-play imitation finetuning algorithm based on the $\chi^2$-divergence variational objective with bounded rewards and improved stability. 
Experiments on various of language model finetuning tasks demonstrate consistent improvements over existing self-play methods and validate our theoretical insights.
\end{abstract}

\section{Introduction}
\label{sec:intro}
Large language models (LLMs) have demonstrated remarkable success across a wide range of applications that require complex reasoning or specialized domain knowledge. A major recent advance in LLM development is post-training alignment toward more desirable behaviors~\citep{mishra2022cross,thoppilan2022lamda,chung2024scaling}. Modern post-training pipelines typically combine supervised finetuning (SFT)~\citep{ouyang2022training} with a variety of reinforcement learning from human feedback (RLHF) methods~\citep{bai2022training,rafailov2023direct,guo2025deepseek}.

Among these methods, Many reinforcement learning (RL)-based finetuning approaches~\citep{rafailov2023direct, wu2024self,zhang2024iterative,calandriello2024human,zhang2025improving} rely on a (general) preference oracle to label samples, which encourage the model to learn from preferred responses over the undesirable ones. To reduce reliance on human preference annotations, recent methods \citep{chen2024self,wang2026space,wang2026triplets} study a \textit{self-play} regime and treat the ground-truth responses as positive samples and the model-generated responses as negative counterparts. Despite recent empirical advancement, the theoretical understanding of the self-play regime remains limited.

This gap in theory limits principled development of self-play algorithms. For example, considering a nontrivial subset of prompts in the dataset, the ground-truth response can be closely comparable in quality, or even partially worse than, the model-generated response; in such cases, the induced preference signal becomes ambiguous or misspecified, making the implicit reward model prone to overfitting these irregular comparisons and thus failing to provide a reliable learning signal. In such a cases, existing self-play formulations lack a clear theoretical mechanism to regularize the reward model and can introduce misspecified supervision and destabilize training. 
This raises the following question: 
\begin{center}\textit{
How can we theoretically characterize the (implicit) reward learning in self-play finetuning?}
\end{center}
To answer this question, we connect the self-play finetuning with the adversarial imitation learning (AIL) framework. 
Serving long as a principled framework for imitation and inverse reinforcement learning~\citep{ho2016generative,abbeel2004apprenticeship}, AIL formulates imitation from expert demonstrations as a two-player game between a policy learner and a reward learner where the reward learner aims to distinguish expert behavior from learner behavior robustly, while the policy learner seeks to match the expert distribution by maximizing the reward model. 
This paradigm has been successfully applied to a variety of robotics tasks~\citep{rafailov2021visual,ablett2023learning}. Notably, a series of work~\citet{garg2021iq,al2023ls,ren2024hybrid} has considered regularizing the reward learning with improved empirical performance for classical RL tasks.

In this work, we establish a conceptual and algorithmic connection between adversarial imitation learning and self-play finetuning in large language models. We formulate this alignment process as a min–max game in which the policy player corresponds to the language model itself, while the reward player can be reparameterized using the model and its previous snapshots. Based on this formulation, our contributions are threefold:
\begin{itemize}[leftmargin=*, topsep=0pt,itemsep=0pt,partopsep=0pt,parsep=0pt]
\item We establish an adversarial imitation learning–based framework for self-play imitation finetuning of large language models, which naturally generalizes to self-play methods with general preference alignment.
\item We provide a rigorous game-theoretic analysis of self-play language model finetuning within the adversarial imitation learning framework. Guided by this analysis, we propose a novel self-play imitation finetuning algorithm with theoretical advantages over existing approaches.
\item We empirically evaluate our method on various families of language models, demonstrating consistent performance improvements over prior methods and validating our theoretical insights especially a more robust reward learner.
\end{itemize}

\section{Related Works}
\label{sec:related-works}
In this section, we discuss the related works on the imitation learning and self-play finetuning of language models. 

\noindent \textbf{Imitation Learning.} Imitation learning is an variation of reinforcement learning where the agent aims to \textit{imitate} expert behavior leveraging the reward-free expert demonstration. Historically, imitation learning approaches can be broadly categorized into two major classes. The first category is usually referred to as \textit{behavioral cloning}~\citep{florence2022implicit,chi2024diffusionpolicy} which directly imitate the expert demonstrations in a supervised learning manner. Several recent works have established theoretical guarantees under these settings \citep{foster2024behavior,rohatgi2025computational}.

The second category adopts a variational formulation casting imitation learning as a min-max optimization between a reward model differentiating agent's behavior from expert demonstration and a policy optimization trying to maximize the reward. This approach are usually referred to as adversarial imitation learning (AIL). Representative methods include GAIL \citep{ho2016generative}, IQ-Learn \citep{garg2021iq}, and LS-IQ \citep{al2023ls}. This line of work has also been supported by rigorous theoretical analyses \citep{liu2021provably,xu2024provably,li2025near}.
\begin{figure*}
    \centering
    \includegraphics[width=0.9\linewidth]{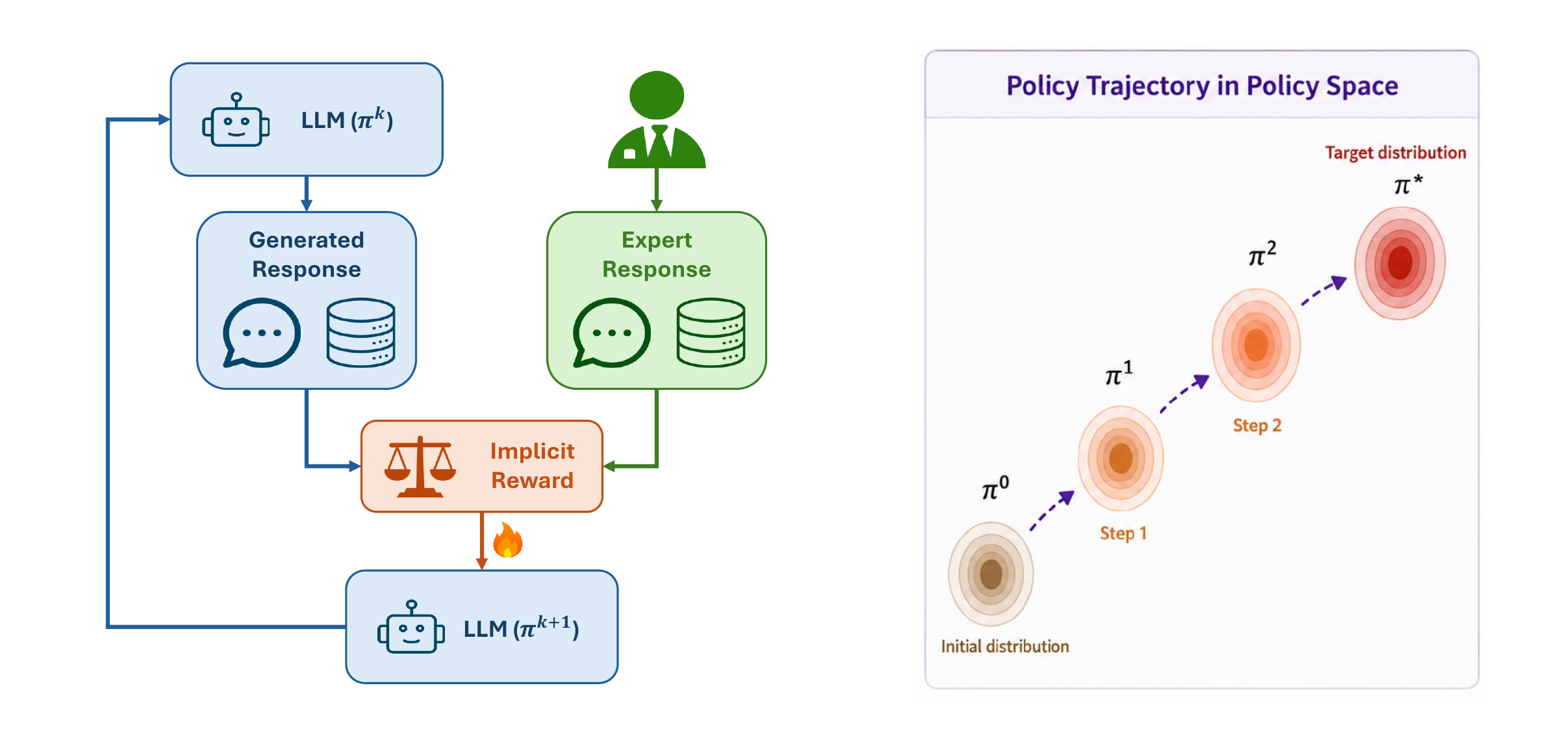}
    \caption{\textbf{Illustration of the LLM Self-Play Imitation Setup.}
We illustrate the LLM self-play imitation setup studied in our paper, following prior works \citep{chen2024self,wang2026space,wang2026triplets}. Given a batch of SFT data sampled from an expert policy $\pi^\star$, the language model is trained to align with $\pi^\star$ through iterative refinement. At each iteration, an implicit reward model is constructed by reparameterizing the LLM, which learns to discriminate between expert data and generated data from the previous policy $\pi^k$.}
\vspace{-15pt}
\label{fig:pipeline}
\end{figure*}

\noindent \textbf{Self-play with Preference Feedback.} Self-play algorithms have been extensively studied in the context of large language model alignment. Many existing approaches focus on general preference alignment, where the model is iteratively updated using samples labeled by a preference oracle, often inspired by the DPO framework~\cite{rafailov2023direct}. This line of work is frequently described as RL with AI feedback, in which the preference oracle is typically instantiated by an LLM. For example, CPL~\citep{hejna2023contrastive} optimizes policies from preference data via contrastive learning; iterative-DPO~\citep{tu2025enhancing,wu2024self} repeatedly relabels model-generated responses with a preference oracle and applies DPO-style updates; and $\chi$PO~\citep{huang2024correcting} introduces $\chi^2$-based regularization to stabilize policy optimization. In addition, SPPO~\citep{wu2024self} develops a preference-based self-play framework with a squared-loss objective and formulates the interaction as a constant-sum two-player game for general preference model. Closely related works~\citep{calandriello2024human,zhang2024iterative,zhang2025improving} further cast preference alignment as seeking a Nash equilibrium in two-player games, yielding both improved empirical performance and stronger theoretical guarantees.

\noindent \textbf{Self-Play Finetuning with SFT Datasets.} In parallel to self-play with a preference oracle, another line of work studies self-play finetuning using standard SFT datasets that contain expert (ground-truth) demonstrations. For instance, SPIN~\citep{chen2024self} introduces a DPO-style self-play objective that enables the model to iteratively imitate SFT data by competing against its own past instances. While SPIN does not require an explicit preference oracle, it relies on expert prompt–response pairs from SFT as the self-play targets. We refer to these methods as \textit{self-play imitation}, as it directly imitates SFT behavior rather than indirectly aligning to a learned or external preference oracle. We provide an illustration of this setup in Figure~\ref{fig:pipeline}. Follow-up works in a similar setting include SPACE \citep{wang2026space} and T-SPIN \citep{wang2026triplets}, which further improve upon SPIN.

Our work lies at the intersection of imitation learning and large language model self-play. We reinterpret self-play imitation finetuning methods, such as SPIN~\citep{chen2024self}, through the lens of adversarial imitation learning, and provide the first unified game-theoretic analysis of this class of methods. We further show that the same perspective naturally extends to self-play algorithms for general preference, including SPPO~\citep{wu2024self} and INPO~\citep{zhang2024iterative}. Finally, by instantiating our framework with a variational formulation based on the $\chi^2$ divergence, in the spirit of IQ-Learn~\citep{garg2021iq} and LS-IQ~\citep{al2023ls}, we derive a self-play imitation finetuning algorithm that is both more stable and empirically more effective. Notably, Appendix~\ref{sec:discussion} provides a unified and rigorous discussion of both self-play imitation and preference-based self-play finetuning under our AIL formulation.

\section{Preliminaries}
\label{sec:prelim}
We formulate the language model generation process as a contextual bandit problem. For each round, the language model observes a context $x\in\cX$ and generates a response $y\in\cY$. There exists a reward function $r(x,y)$ in a reward class $\mathcal{R}$ can be learned for each context and response pairs. We represent the language model as a policy $\pi(y|x)$ in a policy class $\Pi$. Since the model aims to align with a domain which an inaccessible generation policy $\pi^\star$ is preferred, the policy learning process can be formulated as an adversarial process $\max_r\min_{\pi\in\Pi}~\EE_{\pi^\star}[r]-\EE_\pi[r]$ that jointly learn the reward and policy. This formulation is consistent with the standard adversarial imitation learning setup.

\noindent \textbf{Self-play finetuning.} Self-play finetuning (SPIN) \citep{chen2024self} aims to align a language model with an SFT dataset $\mathcal{D}^\star$ generated by an expert policy $\pi^\star(y\mid x)$, which may represent human behavior. At each iteration, SPIN updates the model by maximizing the following objective:
\begin{align*} 
\underset{\substack{(x,y)\sim\mathcal{D}^\star\\ y' \sim \pi^k(\cdot | x)}}{\mathbb{E}}\!\left[\sigma\left(\beta\log\frac{\pi(y|x)}{\pi^k(y|x)}\!-\!\beta\log\frac{\pi(y'|x)}{\pi^k(y'|x)}\right)\right], 
\end{align*}
where $\pi^k$ denotes the model from the previous iteration, 
$\sigma(\cdot)$ is a monotonically non-decreasing link function. 
By iteratively optimizing this objective and resampling data, SPIN drives the policy $\pi$ toward the expert policy $\pi^\star$.

\noindent \textbf{Adversarial Imitation Learning.} Instead of directly mimics the expert demonstrations using behavioral cloning,
adversarial imitation learning (AIL) formulates the learning process as a two-player game, providing a variational characterization of distribution matching between the expert and behavioral policies. Formally, this involves jointly learning the reward and policy via the following optimization:
\begin{align*}
\max_r\min_\pi\underset{(s,a)\sim d^\star}{\mathbb{E}}[r(s,a)] -\!\!\!\!\!\underset{(s,a)\sim d^\pi}{\mathbb{E}}[r(s,a)] -\psi(r),
\end{align*}
where $d^\pi = (1-\gamma)\sum_{t=0}^{\infty}\gamma^t \Pr_\pi(s_t = s,\, a_t = a)$ denotes the discounted occupancy measure induced by the behavioral policy $\pi$, and $d^\star$ is defined analogously for the expert policy $\pi^\star$. $\psi(r)$ is a convex regularizer associated with the choice of statistical distance between the expert and behavioral occupancy measures \citep{garg2021iq}. Although we formulate adversarial imitation learning (AIL) within the Markov Decision Process (MDP) framework, in the LLM setting considered in this work the formulation naturally reduces to a contextual bandit problem, since no meaningful transition dynamics are present.

\looseness=-1

For the simplicity of the theoretical analysis, we assume that the optimal expert policy as well as the ground truth reward is realizable. 
\begin{assumption}[Realizability]
\label{ass:realize}
The ground-truth reward and optimal policy are realizable with the corresponding function classes, i.e., $r^\star{\in}\mathcal{R}$, $\pi^\star{\in}\Pi$.
\end{assumption}

\section{Adversarial Imitation Learning View}

\label{sec:ail-view}
\subsection{A Single-Stage Formulation}
\label{sec:single-stage}
In this section, we focus on formulating the self-play finetuning problem imitating an expert data distribution $\pi^\star$ from an initial policy distribution as an adversarial imitation learning process:
\begin{align}
\label{eqn:ail-formulation}
&\max_{r} \min_{\pi}\EE_x\Big[\sigma\big(\!\!\!\!\!\underset{\substack{\,\\[.01em]\;\;\;\;y{\small \sim}\pi^\star}}{\EE}\!\!\!\![r(x,y)]{-}\!\!\!\!\underset{\substack{\,\\[.01em]\;\;\;\;y{\small \sim}\pi}}{\EE}\!\!\!\![r(x,y)]\big)\!\!-\!\!\psi(r)\Big],
\end{align}
where $\sigma$ denotes the monotonically non-decreasing link function such as $\sigma(t)=t$ or $\sigma(t)=-\log(1+\exp(-t))$, and $\psi(r)$ is a convex regularizer \citep{ho2016generative}. Notably, optimizing the objective in~\eqref{eqn:ail-formulation} is equivalent to minimizing a statistical distance between the expert and current policy distributions \citep{ho2016generative,garg2021iq}, given a identical link function $\sigma(t)$.

In this work, we focus on the choice of the convex regularizer $\psi(r)$, corresponding to the original regularization term $\phi(r,r^{k-1})$ with the Bregman divergence component removed. Different choices of $\psi(r)$ induce different properties in the resulting self-play algorithms. We summarize the connection between the choice of the regularizer and the resulting self-play algorithm in Table~\ref{tab:divergences}. In particular, we study two specific regularizer choices that cast the min–max optimization in~\eqref{eqn:ail-formulation} as a statistical distance minimization:
\begin{table*}[t]
    \centering
    \resizebox{\linewidth}{!}{
    \begin{tabular}{c|c|c|c|c}
    \toprule
        Imitation Objective & Algorithm & $\sigma(t)$ & $\psi(r)$ & Distance \\
    \midrule
        \multirow{3}{*}{Query Response Pairs}
        &SPIN~\citep{chen2024self} & $-\log(1+e^{-t})$ & $\psi(r)=\infty \cdot \mathbf{1}[|r|_{\infty} > R_{\max}]$ & $D_{\text{KL}}$\\
        & (Linear) SPIN & $t$ & $\psi(r)=\infty \cdot \mathbf{1}[|r|_{\infty} > R_{\max}]$ & $D_{\text{TV}}$\\
        & SPIF (Ours) & $t$ & $\psi(r)=\mathbb{E}_{\text{mix}}[r^2]$ & $D_{\chi^2,\text{mix}}$\\
    \midrule
        \multirow{3}{*}{Preference Oracle}
        & Iter-DPO~\citep{tu2025enhancing} & $-\log(1+e^{-t})$ & $\psi(r)=\infty \cdot \mathbf{1}[|r|_{\infty} > R_{\max}]$ & $D_{\text{KL}}$ \\
        & SPPO~\citep{wu2024self} & $t$ & $\psi(r)=\mathbb{E}_{\text{mix}}[r^2]$ & $D_{\chi^2,\text{mix}}$ \\
        & INPO~\cite{zhang2025improving} & $t$ & $\psi(r)=\mathbb{E}_{\text{mix}}[r^2]$ & $D_{\chi^2,\text{mix}}$ \\
    \midrule
        \multirow{3}{*}{Expert Trajectories (MDPs)}
        & GAIL~\cite{ho2016generative} & $t$ & $\psi(r)=\mathbb{E}[r-\log(2-e^{-r})]$ & $D_{\text{JS}}$ \\
        & IQ-Learn~\cite{garg2021iq} & $t$ & $\psi(r)=\mathbb{E}[r^2]$ & $D_{\chi^2}$ \\
        & LS-IQ~\cite{al2023ls} & $t$ & $\psi(r)=\mathbb{E}_{\text{mix}}[r^2]$ & $D_{\chi^2,\text{mix}}$ \\
    \bottomrule
    \end{tabular}
    }
    \caption{\textbf{Overview of the Algorithms.} This table summarizes AIL and LLM self-play algorithms under our unified framework, with different learning setting, imitation objective, choice of link function and convex regularizer, and the resulting statistical distance being minimized for each algorithm. $\mathbb{E}_{\text{mix}}$ denotes the mixed regularizer $\psi(r)=\tfrac{c}{2}\mathbb{E}_{\pi^\star}[r(x,y)^2]+\tfrac{c}{2}\mathbb{E}_{\pi}[r(x,y)^2]$, $D_{\chi^2,\text{mix}}$ denotes the mixed $\chi^2$ divergence, defined between expert and model data for non-preference-based methods, and between positive and negative samples for preference-based methods. \textit{(Linear) SPIN} is refer to as a variant of SPIN~\citep{chen2024self} using identical link function $\sigma(t) = t$. INPO is an KL-constrained optimization w.r.t $\pi^{\mathrm{ref}}$, so it has an additional regularizer compared to other methods.}
    \label{tab:divergences}
    \vspace{-15pt}
\end{table*}

\noindent \textbf{Total Variation Distance.} Consider a regularizer with $\psi(r)=0$ for $\Vert r\Vert_\infty\leq R_{\max}$ and $\psi(r)=\infty$ otherwise, and an identical link function with $\sigma(t)=t$. This formulation is equivalent to $\min_{\pi\in\Pi} R_{\max} D_{\text{TV}}(\pi,\pi^\star)$ under a trust region constraint. Taking the closed form of the policy update, can recover the learning objective of (Linear) SPIN~\citep{chen2024self}. However, original the reward reparameterization used in SPIN doesn't explicitly enforce the boundedness of the reward, thus $R_{\max}$ can be arbitrarily large. \looseness=-1

\noindent \textbf{KL Divergence.} For $\psi(r) =\infty \cdot \mathbf{1}[|r|_{\infty} > R_{\max}]$, and a link function with $\sigma(t)=-\log(1+\exp(-t))$. This formulation can be seen as minimizing the KL divergence, which recovers SPIN \citep{chen2024self} under logistic link function. We prove it under a multi-iteration two-stage surrogate, as used in SPIN, in Appendix~\ref{sec:non-linear-spin}. Notably, $R_{\max}$ here is unbounded and can be arbitrary large.

\noindent \textbf{Pearson $\chi^2$ Divergence.} Consider a regularizer with $\psi(r)=c\EE_{\pi^\star}[(r(x,y))^2]$ and a link function $\sigma(t)=t$. This formulation is equivalent to $\min_{\pi\in\Pi} D_{\chi^2}(\pi\Vert\pi^\star)$ under a trust region constraint. This regularizer can further be generalized to $\psi(r)=c\alpha\cdot\EE_{\pi^\star}[(r(x,y))^2]+c(1-\alpha)\cdot\EE_{\pi}[(r(x,y))^2]$, which can characterize the data mixing strategy during self-play and equivalent to $\min_{\pi\in\Pi} D_{\chi^2}(\pi^\star\Vert\alpha\pi^\star+(1-\alpha)\pi)$ \citep{al2023ls} under trust region constraints on policy and reward. Since SPIN with both identical and sigmoid link function cannot guarantee a bounded reward, the magnitude of the reward from SPIN may be arbitrarily large in their cases. In contrast, we can show that by applying the equally sampled mixed $\chi^2$ regularization, i.e., $\psi(r)=(1/2)c\cdot\EE_{\pi^\star}[(r(x,y))^2]+(1/2)c\cdot\EE_{\pi}[(r(x,y))^2]$, can lead to a bounded divergence and a bounded reward, which leads to theoretical benefits in the following analysis in the two-stage optimization alternative. Notably, \citet{al2023ls} has derived a similar result within MDP setting with occupancy distribution matching. We'll adapt their result to contextual bandit setting in our case. We now introduce the following proposition showing the boundedness of the reward when using the mixed $\chi^2$ convex regularizer:
\begin{proposition}[Contextual bandit version of Proposition A.2 and A.3, \citealt{al2023ls}]
\label{prop:bounded-reward}
The mixed Pearson $\chi^2$ divergence between $\pi^\star$ and the mixture distribution $\tfrac{\pi+\pi^\star}{2}$ induced by the convex regularizer $\psi(r)=\tfrac{c}{2} \cdot\left(\EE_{\pi^\star}[(r(x,y))^2] + \EE_{\pi}[(r(x,y))^2]\right)$ is bounded by:
\begin{align*}
0\leq 2D_{\chi^2}\left(\pi^\star\| \tfrac{\pi+\pi^\star}{2}\right)\leq\tfrac{1}{c}.
\end{align*}
Furthermore, the optimal reward for solving the variational form of this Pearson $\chi^2$ divergence:
\begin{align*}
&2D_{\chi^2}\big(\pi^\star\|\tfrac{\pi+\pi^\star}{2}\big)\\
&=\sup_r\EE_{\pi^\star}[r(x,y)]-\EE_{\pi}[r(x,y)]\\
&\qquad- \frac{c}{2}\left(\EE_{\pi^\star}[(r(x,y))^2] + \EE_{\pi}[(r(x,y))^2]\right)
\end{align*}
is bounded within the interval $[-\tfrac{1}{c}, \tfrac{1}{c}]$.
\end{proposition}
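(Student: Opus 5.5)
The plan is to solve the inner supremum in closed form, pointwise in $(x,y)$, and then read off both bounds from the explicit maximizer. Fix a context $x$; the expectation over $x$ only averages per-context quantities, so it suffices to work with a single $x$. Writing $p=\pi^\star(y\mid x)$ and $q=\pi(y\mid x)$, the objective inside the supremum is
\begin{align*}
\sum_{y}\Big[(p-q)\,r(x,y)-\tfrac{c}{2}(p+q)\,r(x,y)^2\Big].
\end{align*}
This is a sum of independent concave quadratics in the scalars $r(x,y)$. By realizability (Assumption~\ref{ass:realize}) together with richness of $\mathcal{R}$, the supremum over $r$ can be taken pointwise.

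\textbf{Step 1 (optimal reward).} For every $y$ with $p+q>0$, the first-order condition gives
\begin{align*}
r^\star(x,y)=\frac{p-q}{c\,(p+q)}.
\end{align*}
On $y$ with $p+q=0$, the reward does not affect the objective, so we set it to $0$. Since $|p-q|\le p+q$ for nonnegative $p,q$, we get $r^\star(x,y)\in[-\tfrac1c,\tfrac1c]$, which is the second claim. The same bound holds after averaging over $x$, because it holds for every $x$.

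\textbf{Step 2 (value and identification with the divergence).} Substituting $r^\star$ back, each term contributes $\frac{(p-q)^2}{2c(p+q)}$. With $m=\tfrac{p+q}{2}$ we have $p-m=\tfrac{p-q}{2}$, so
\begin{align*}
D_{\chi^2}(\pi^\star\|m)=\sum_y\frac{(p-m)^2}{m}=\sum_y\frac{(p-q)^2}{2(p+q)}.
\end{align*}
The variational value is therefore $\tfrac1c D_{\chi^2}(\pi^\star\|\tfrac{\pi+\pi^\star}{2})$. This matches $2D_{\chi^2}$ exactly at the normalization $c=\tfrac12$ used in \citet{al2023ls}. For general $c$, I would keep track of the factor $\tfrac{1}{2c}$: the divergence so produced is the $c$-scaled mixed $\chi^2$, and the stated identity should be read with that scaling.

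\textbf{Step 3 (boundedness of the divergence).} Nonnegativity is immediate, since each summand is a square divided by a positive quantity. For the upper bound, use $(p-q)^2\le(p+q)^2$ again to get
\begin{align*}
\sum_y\frac{(p-q)^2}{p+q}\le\sum_y(p+q)=2.
\end{align*}
This gives $D_{\chi^2}(\pi^\star\|m)\le 1$. Consequently the variational value satisfies $0\le\tfrac1c D_{\chi^2}\le\tfrac1c$, which is the claimed interval in the $c$-scaled form.

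\textbf{Main obstacle.} The calculus is easy; the delicate part is Step 2's bookkeeping of the constant $c$, so that the equality in the statement and the bound $\tfrac1c$ are stated under a consistent normalization. A secondary point is justifying pointwise maximization over $\mathcal{R}$. I would handle it by noting that the closed-form $r^\star$ is exactly the bounded reward that realizability is meant to cover. For continuous $\mathcal{Y}$, the same argument goes through with sums replaced by integrals against a dominating measure, applying Step 1 to densities.
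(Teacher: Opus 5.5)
Your proposal is correct and follows the paper's proof essentially step for step. Both maximize the concave quadratic pointwise, obtain the closed form $r^\star=\frac{1}{c}\frac{\pi^\star-\pi}{\pi^\star+\pi}\in[-\frac1c,\frac1c]$, substitute it to get $\frac{1}{2c}\sum_y\frac{(\pi^\star-\pi)^2}{\pi^\star+\pi}$, and bound this by $\frac1c$. The only difference is cosmetic: the paper splits the square into three expectations instead of using $(p-q)^2\le(p+q)^2$, which is the same estimate.

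Your caveat about the constant is well founded. The paper simply takes the variational value as the definition of $2D_{\chi^2}$, and its first display even carries a factor $2$ that disappears in the next line. So with the textbook $\chi^2$ the stated identity is exact only at $c=\frac12$.

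Your treatment of points where exactly one of $\pi^\star,\pi$ vanishes is also cleaner than the paper's. Its indicator would set $r^\star=0$ there, whereas the correct maximizer is $\pm\frac1c$.
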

\subsection{A Two-Stage Iterative Alternative}
Directly solving the min-max optimization problem in~\eqref{eqn:ail-formulation} may be hard in practice, prior work usually decompose it into a two-stage iterative optimization process \citep{ho2016generative,garg2021iq}. This iterative process can be decomposed into a two-stage algorithm iteratively optimizing the reward $r$ and policy $\pi$:
\begin{align}
\label{eqn:two-stage-formulation}
r^k&\!=\argmax_r\!\EE_{\rho}\big[\sigma(\EE_{\pi^\star}[r(x,y)]\!-\!\EE_{\pi^{k}}[r(x,y)]) \notag \\
&\qquad\qquad\qquad -\phi(r,r^{k-1})\big],\notag \\
\pi^{k+1}&\!=\argmax_\pi \EE_{\rho}\left[\EE_{\pi}r^k(x,y) - \beta D_{\text{KL}}(\pi\Vert\pi^{k})\right].
\end{align}
To attain a more stable optimization process, we seek to constrain the optimization in~\eqref{eqn:two-stage-formulation} by regulating the optimization process with an additional KL constraint $D_{\text{KL}}(\pi\Vert\pi^{k})$ on the policy for mirror descent and a one-step Bregman constraint for $r$, which leads to a new convex regularizer $\phi(r,r^{k-1})=\psi(r)+\zeta D_f(r,r^{k-1})$. $D_f(r,r^{k-1})$ is a Bregman divergence defined by a convex function $f$. 
One thing worth mentioning is that the policy optimization objective has a closed-form solution $\pi^{k+1}\propto\pi^{k}\exp(\beta^{-1}r^k)$. SPIN \citep{chen2024self} leveraged this closed-form to reparameterize the reward function, converting the two-stage algorithm into a single-stage iterative policy optimization.

\begin{algorithm}[t]
\caption{Self-Play Imitation Finetuning (General)}
\begin{algorithmic}[1]
\label{alg:self-play-imitation}
\STATE \textbf{Input:} Number of iterations $K$, expert policy $\pi^\star$, reference policy $\pi^{\text{ref}}$.
\STATE Initialize $\pi^1=\pi^{\text{ref}}$.
\FOR{$k = 1, 2, \dots, K$}
\STATE Obtain $r^k$ by solving\\  $\textstyle{\argmax_r}\EE_{\pi^\star}[r(x,y)]-\EE_{\pi^{k}}[r(x,y')]-\phi(r,r^{k-1})$
\STATE Update policy $\pi^{k+1}$ by solving\\ $\textstyle{\argmax_\pi}\EE_{\pi}[r^k(x,y)]-\beta D_{\text{KL}}(\pi\Vert\pi^{k})$
\ENDFOR
\RETURN $\pi^{K+1}$
\end{algorithmic}
\end{algorithm}

\subsection{A Game-Theoretic Analysis}
\label{sec:theoretical-analysis}
We first formulate~\eqref{eqn:ail-formulation} as a two-player game between policy $\pi$ and reward $r$. Thus the following weak duality holds:

\begin{proposition}
\label{prop:ne}
The problem defined in~\eqref{eqn:ail-formulation} has the following weak duality with a identical link function $\sigma(\cdot)$:
\begin{align*}
\min_{\pi} J(\pi,r^\star)\leq J(\pi^\star,r^\star)\leq\max_{r} J(\pi^\star,r),
\end{align*}
where $J(\pi,r)=\underset{x\sim\rho(x)}{\EE}\!\!\big[\underset{y\sim\pi^\star}{\EE}r(x,y)-\underset{y\sim\pi}{\EE}r(x,y)\big]$.
\end{proposition}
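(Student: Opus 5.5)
The plan is to compute the three quantities in the chain directly, using the bilinear form of $J$ under the identity link $\sigma(t)=t$:
\begin{align*}
J(\pi,r)=\EE_{x\sim\rho}\big[\EE_{y\sim\pi^\star}r(x,y)-\EE_{y\sim\pi}r(x,y)\big].
\end{align*}
The key observation is that at $\pi=\pi^\star$ the two inner expectations are taken over the same conditional distribution $\pi^\star(\cdot\mid x)$, so they cancel for every $x$. Hence $J(\pi^\star,r)=0$ for every reward $r$, and in particular $J(\pi^\star,r^\star)=0$. Here $r^\star\in\mathcal{R}$ and $\pi^\star\in\Pi$ by Assumption~\ref{ass:realize}, so all the optima are taken over sets that contain these points.

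\textbf{Right inequality.} Since $J(\pi^\star,r)\equiv 0$ over $\mathcal{R}$, we get $\max_r J(\pi^\star,r)=0=J(\pi^\star,r^\star)$. Equality holds, which is stronger than what is required. More generally, $\sup_r J(\pi^\star,r)\ge J(\pi^\star,r^\star)$ holds trivially because $r^\star$ is a feasible reward. This argument also covers the case where the reward class is restricted, for instance to $\|r\|_\infty\le R_{\max}$.

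\textbf{Left inequality.} Because $\pi^\star\in\Pi$ is feasible for the minimization,
\begin{align*}
\min_\pi J(\pi,r^\star)\le J(\pi^\star,r^\star)=0.
\end{align*}
So the whole chain reduces to feasibility of $\pi^\star$ and $r^\star$ together with the cancellation identity. If a regularizer $\psi(r)$ is included, it is independent of $\pi$. It therefore shifts both sides of the left inequality by the same amount, and the right inequality again follows by plugging in $r^\star$.

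\textbf{Main obstacle.} There is no real technical difficulty here. The one point needing care is that the optima are attained or well-defined. I will handle this by reading $\min$ and $\max$ as $\inf$ and $\sup$ if necessary. Both inequalities only use that $\pi^\star$ and $r^\star$ are feasible points, so they hold regardless of attainment. I will also note that $J$ is linear in $r$ and linear in the mixture $\pi$, which is what makes the saddle-point interpretation natural. The statement itself, however, only needs the feasibility argument above, not a minimax theorem.
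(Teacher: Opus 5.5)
Your proof is correct and follows the paper's argument, which just says the chain follows from the definition of \eqref{eqn:ail-formulation}; you make this explicit through the cancellation $J(\pi^\star,r)=0$ for every $r$ and the feasibility of $\pi^\star\in\Pi$ and $r^\star\in\mathcal{R}$ from Assumption~\ref{ass:realize}. One small correction to your side remark: the mixed $\chi^2$ regularizer $\psi(r)=\tfrac{c}{2}(\EE_{\pi^\star}[r^2]+\EE_{\pi}[r^2])$ does depend on $\pi$, so it does not simply shift both sides by the same amount; this does not affect your proof, because the stated $J$ has no regularizer and feasibility alone would handle a regularized version as well.
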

Following Proposition~\ref{prop:ne}, we further characterize the duality gap of the alternative two-stage adversarial algorithm:
\begin{definition}
\label{def:dual-gap}
For Algorithm~\ref{alg:self-play-imitation}, define the duality gap as
\begin{align*}\textstyle{
\mathrm{DualGap}=\max_{r\in\mathcal{R}}{J}(\bar\pi,r)-\min_{\pi\in\Pi}{J}(\pi,\bar r),
}\end{align*}
where $\bar r= \tfrac1K\sum_{k=1}^K r^k$ and $\bar \pi=\tfrac1K\sum_{k=1}^K \pi^k$.
\end{definition}
By Definition~\ref{def:dual-gap} and Proposition~\ref{prop:ne}, we are ready to provide an upper bound for the duality gap of the self-play imitation algorithm:
\begin{theorem}
\label{thm:upper-dual}
Let $r \in \mathcal \{\cX{\times}\cY{\rightarrow}[\text{-}R_{\max},R_{\max}]\} \cap \mathcal R$, and set constants $D = \max_{\pi\in\Pi}\!D_{\text{KL}}(\pi^\star\Vert\pi)$, $B {=} \max_{r\in\mathcal{R}}\!D_{f}(r^\star,r) /R_{\max}^2$, and parameter $\zeta = \tfrac{\sqrt{K}}{BR_{\max}^2}, \beta{=}\tfrac{\sqrt{K}}{D}$ in Algorithm~\ref{alg:self-play-imitation}. When an identical link function $\sigma(t){=}t$ is applied, the duality gap is upper bounded by:
\begin{align*}
&\mathrm{DualGap}\!\leq\!\mathcal{O}\bigg(\frac{(D+B)R^2_{\max}}{\sqrt{K}}\bigg).
\end{align*}
\end{theorem}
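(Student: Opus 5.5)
The plan is the standard no-regret-dynamics argument for bilinear games. With $\sigma(t)=t$, $J(\pi,r)$ is linear in $\pi$ (as a distribution over $y$ for each $x$) and linear in $r$. For any comparator pair $(\pi,r)$, linearity gives
\begin{align*}
J(\bar\pi,r)-J(\pi,\bar r)=\frac1K\sum_{k=1}^K\big[J(\pi^k,r)-J(\pi^k,r^k)\big]+\frac1K\sum_{k=1}^K\big[J(\pi^k,r^k)-J(\pi,r^k)\big].
\end{align*}
So $\mathrm{DualGap}$ is at most the sum of the average regret of the reward player, who maximizes $J(\pi^k,\cdot)$, and the average regret of the policy player, who minimizes $J(\cdot,r^k)$. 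Equivalently, the policy player maximizes $\EE_\rho\EE_{\pi}[r^k]$, since $\EE_{\pi^\star}[r^k]$ does not depend on $\pi$. The task is then to bound each regret by $O(\sqrt K)$ with the stated constants. By realizability (Assumption~\ref{ass:realize}) and Proposition~\ref{prop:ne}, it suffices to take the comparators $\pi^\star$ and $r^\star$. This is also where the constants $D$ and $B$ enter, since both are defined relative to $\pi^\star$ and $r^\star$.

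\textbf{Policy player.} The update $\pi^{k+1}\propto\pi^k\exp(\beta^{-1}r^k)$ is exponential weights / mirror descent with KL, applied per context with step size $\eta=1/\beta$ on the linear losses $-r^k(x,\cdot)$. The standard three-point (or potential) argument gives, for each $x$,
\begin{align*}
\sum_k\langle \pi-\pi^k,r^k\rangle\le \beta D_{\text{KL}}(\pi\Vert\pi^1)+\frac{1}{2\beta}\sum_k\|r^k\|_\infty^2,
\end{align*}
using Pinsker/Hoeffding to control the per-step term. Taking $\EE_\rho$ and $\pi=\pi^\star$, this is at most $\beta D+KR_{\max}^2/(2\beta)$, and $\beta=\sqrt K/D$ makes it $O((D+R_{\max}^2)\sqrt K)$.

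\textbf{Reward player.} The update maximizes $J(\pi^k,r)-\psi(r)-\zeta D_f(r,r^{k-1})$, which is a regularized follow-the-leader / proximal (mirror-ascent) step on the linear gain $J(\pi^k,\cdot)$ with Bregman proximity $D_f$. I would use the proximal-point lemma: by optimality of $r^k$ and the three-point identity for Bregman divergences,
\begin{align*}
J(\pi^k,r)-J(\pi^k,r^k)\le \zeta\big[D_f(r,r^{k-1})-D_f(r,r^k)-D_f(r^k,r^{k-1})\big]+\psi(r)-\psi(r^k).
\end{align*}
Telescoping with $r=r^\star$ bounds this part of the regret by $\zeta D_f(r^\star,r^0)+\sum_k(\psi(r^\star)-\psi(r^k))$. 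The $\psi$ terms are either zero (box indicator) or controlled by $O(R_{\max}^2)$ per step. The cleaner route is to absorb $\psi$ into the per-step loss and pay an $O(KR_{\max}^2/\zeta)$-type term through the $-\zeta D_f(r^k,r^{k-1})$ slack against the gain differences, assuming $f$ is strongly convex. With $\zeta=\sqrt K/(BR_{\max}^2)$, the leading term $\zeta BR_{\max}^2$ becomes $\sqrt K$ times the appropriate scale. After normalization this yields $O(BR_{\max}^2\sqrt K)$, matching the claimed rate.

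\textbf{Main obstacle.} The hardest step is the reward player's regret, specifically handling the $\psi$ regularizer and the stability term so the bound really scales as $(D+B)R_{\max}^2/\sqrt K$. This requires either a strong-convexity assumption on $f$ relative to $\|\cdot\|_\infty$, or bounded reward differences $|J(\pi^k,r)|\le 2R_{\max}$ combined with Cauchy--Schwarz against $D_f(r^k,r^{k-1})$. I would first try treating $\psi+\zeta D_f$ as a composite proximal regularizer and invoking the standard OMD-with-composite-term bound. If needed, I would restrict to the bounded-box case, where $\psi\equiv0$ on the feasible set. Summing the two regrets and dividing by $K$ completes the bound.
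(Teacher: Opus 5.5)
Your regret decomposition and per-player tools match the paper's architecture: exponential weights / one-step descent for $\pi$ with $\eta=1/\beta$, and a Bregman-proximal bound for $r$. The gap is the step ``by realizability and Proposition~\ref{prop:ne} it suffices to take the comparators $\pi^\star$ and $r^\star$,'' which is invalid. Maximizing your own identity over $(\pi,r)$ shows the duality gap \emph{equals} $\frac1K\max_{r\in\mathcal R}\sum_k[J(\pi^k,r)-J(\pi^k,r^k)]+\frac1K\max_{\pi\in\Pi}\sum_k[J(\pi^k,r^k)-J(\pi,r^k)]$. Fixing the comparators therefore gives a lower bound on each term, not an upper bound.

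Concretely, $J(\pi^\star,\cdot)\equiv 0$, so plugging in $(\pi^\star,r^\star)$ controls only $J(\bar\pi,r^\star)$. $\mathrm{DualGap}$ is instead driven by two best responses:
\begin{itemize}
\item the best response to $\bar\pi$ (in the box case, if $\mathcal R$ contains it, $R_{\max}\,\mathrm{sgn}(\pi^\star-\bar\pi)$, which yields $R_{\max}\EE_\rho\Vert\pi^\star-\bar\pi\Vert_1$);
\item the greedy policy for $\bar r$.
\end{itemize}
Realizability only makes $\pi^\star$ and $r^\star$ admissible comparators, and weak duality says nothing about these best responses.

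The repair is easy, because both of your per-player inequalities hold for an arbitrary comparator. Keep $\pi$ and $r$ free, and bound $\EE_\rho D_{\text{KL}}(\pi\Vert\pi^1)$ and $D_f(r,r^0)$ uniformly over $\Pi$ and $\mathcal R$. This forces you to read $D$ and $BR_{\max}^2$ as uniform diameters, not divergences anchored at $\pi^\star$ and $r^\star$. The paper's proof implicitly makes the same identification: it keeps $\max_{\pi\in\Pi}$ and $\max_{r\in\mathcal R}$ in its decomposition and then bounds the telescoped divergences by $D$ and $BR_{\max}^2$.

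\textbf{The $\psi$ term.} Bounding $\psi(r^\star)-\psi(r^k)$ by $O(R_{\max}^2)$ per step sums to $O(KR_{\max}^2)$ and destroys the $1/\sqrt K$ rate. Absorbing $\psi$ into the gain instead bounds the regret of $J-\psi$, not of $J$. So the hard-constraint reading is not an optional fallback; it is necessary. It is also exactly the paper's reading: the remark after the theorem turns $\psi$ into a constraint defining $\mathcal R$, and Lemma~\ref{lem:omd} runs plain OMD on $\la r,\pi^\star-\pi^k\ra$.

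In that setting your proximal-point inequality is cleaner than the paper's appeal to the generic OMD bound. Because $r^k$ is chosen after $\pi^k$ is revealed, telescoping gives reward regret at most $\zeta\max_{r\in\mathcal R}D_f(r,r^0)\le\sqrt K$, with no gradient-norm stability term. You need only convexity of $f$ and of $\mathcal R$, not the strong convexity or Cauchy--Schwarz step you flag as the main obstacle.

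\textbf{Arithmetic slip.} With $\beta=\sqrt K/D$ you get $\beta D+KR_{\max}^2/(2\beta)=\sqrt K\,(1+DR_{\max}^2/2)$, not $O((D+R_{\max}^2)\sqrt K)$. The additive $\sqrt K$ is a lower-order term, which the paper likewise absorbs into $\mathcal O((D+B)R_{\max}^2\sqrt K)$.
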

\begin{remark}
    Theorem~\ref{thm:upper-dual} suggests the upper bounds for the iteration steps $K\leq\mathcal{O}(R^4_{\max}(D+B)^2\epsilon^{-2})$. Similar setting (adversarial imitation formulation with KL-constrained policy update) have been studied in OGAP \citep{liu2021provably} but with linear MDP and without estimation error. They achieve $\tilde{\mathcal{O}}(1/\sqrt{K})$ suboptimality gap upper bound, which matches our results although with different setting. Prior work considering the setting of Nash policy optimization with general preference also achieves the upper bound for $K$ with similar order \citep{zhang2024iterative}.
\end{remark}
\begin{remark}
    From Proposition~\ref{prop:bounded-reward}, we have that using the mixed Pearson $\chi^2$ divergence as the choice for the convex regularizer leads to a bounded reward, where $R_{\max}=1/c$. This suggests the theoretical benefit of leveraging $\chi^2$ divergence as the regularization compared to using KL divergence or TV distance as in SPIN \citep{chen2024self}, since a bounded reward may result in small $R_{\max}$, and a tighter upper bound for the duality gap according to Theorem~\ref{thm:upper-dual}.
\end{remark}
\begin{remark}
    The reward space $\mathcal{R}$ in Theorem~\ref{thm:upper-dual} is defined by strictly enforcing the regularizer $\psi(r)$, which can be seen as turning the Lagrangian dual form in~\eqref{eqn:ail-formulation} into a constrained optimization by turning $\psi(r)$ from to regularizer to a hard constraint on the reward space. By employing Assumption~\ref{ass:realize}, we assume that the optimal reward is still in the constrained reward space. 
\end{remark}

Following Theorem~\ref{thm:upper-dual}, leveraging a mixed $\chi^2$ regularizer $\psi(r)=(1/2)c\cdot\EE_{\pi^\star}[(r(x,y))^2]+(1/2)c\cdot\EE_{\pi}[(r(x,y))^2]$, by Proposition~\ref{prop:bounded-reward} we can have the bounded reward property, i.e., the reward that solves the variational form of Pearson $\chi^2$ divergence is bounded by $[-1/c,1/c]$. In this case, we can have a bounded $R_{\max}$, which leads to a tighter upper bound depicted in Theorem~\ref{thm:upper-dual}, compared to unbounded formulation in SPIN \citep{chen2024self}.

\section{$\chi^2$ Self-Play Imitation Finetuning}
We consider a setting that given a finetuning dataset $\cD^\star$ containing query-response pairs sampled from an oracle $\pi^\star$ similar with SPIN \citep{chen2024self}. Since Proposition~\ref{prop:bounded-reward} and Theorem~\ref{thm:upper-dual} has shown theoretical benefit of leveraging $\chi^2$ divergence as the convex regularizer in the self-play imitation setting. We aim to derive a practical algorithm under the scope of using $\chi^2$ divergence by choosing a identical link function $\sigma(t)=t$ and a convex regularizer $\psi(r)=\alpha\cdot\EE_{\pi^\star}[(r(x,y))^2]+(1-\alpha)\cdot\EE_{\pi}[(r(x,y))^2]$ for the formulation in~\eqref{eqn:two-stage-formulation}. The KL-regularized policy optimization~\eqref{eqn:two-stage-formulation} yields the following closed-form solution:
\begin{align*}
    r(x,y)=\beta\log\left(\frac{\pi(y|x)}{\pi^{k}(y|x)}\right)+\beta \log Z(x).
\end{align*}
To avoid estimating the partition function $Z(x)$, we establish a reparameterization of the reward by subtracting the partition function:
\begin{align}
\label{eqn:reward-mapping}
    r(x,y):=\beta\log\!\left(\frac{\pi(y|x)}{\pi^{k}(y|x)}\right)\!.
\end{align}
Using this reward reparameterization, we can turn the $\chi^2$ regularized two-stage formulation in~\eqref{eqn:two-stage-formulation} into a singe stage least square optimization problem in the following proposition:
\begin{table*}[t]
\centering
\scriptsize
\setlength{\tabcolsep}{2.0pt}
\renewcommand{\arraystretch}{1.05}
\resizebox{\textwidth}{!}{
\begin{tabular}{@{}l|cccc|cccc|cccc@{}}
\toprule
& \multicolumn{4}{c|}{\textbf{Qwen3-4B}}
& \multicolumn{4}{c|}{\textbf{Mistral-7B}}
& \multicolumn{4}{c}{\textbf{Qwen3-14B}} \\
\textbf{Methods}
& \textbf{Arc} & \textbf{MMLU} & \textbf{HS} & \textbf{WG}
& \textbf{Arc} & \textbf{MMLU} & \textbf{HS} & \textbf{WG}
& \textbf{Arc} & \textbf{MMLU} & \textbf{HS} & \textbf{WG} \\
\midrule
Base
& 51.62 & 68.33 & 67.57 & 65.43
& 53.33 & 53.18 & 74.47 & 71.67
& 58.70 & 77.97 & 81.34 & 73.64 \\
SFT
& 54.46 & 68.58 & 69.68 & 66.80
& 54.24 & 54.08 & 76.11 & 72.77
& 59.02 & 77.76 & 82.54 & 73.82 \\
\midrule
SPIN Iter-1
& 53.84 & 68.10 & 67.98 & 66.06
& \textbf{54.21} & 54.11 & 75.52 & 72.93
& 59.24 & 77.36 & 82.11 & 73.18 \\
SPACE Iter-1
& 54.62 & 68.45 & 68.36 & 66.37
& 53.86 & 54.43 & \textbf{76.29} & 72.94
& 59.43 & 77.82 & 81.93 & \textbf{73.65} \\
\rowcolor{spifcyan}
SPIF Iter-1 (Ours)
& \textbf{55.12} & \textbf{68.66} & \textbf{70.48} & \textbf{68.11}
& 54.05 & \textbf{54.60} & 75.92 & \textbf{73.09}
& \textbf{59.71} & \textbf{78.05} & \textbf{82.45} & 73.52 \\
\midrule
SPIN Iter-2
& 54.58 & 67.90 & 68.10 & 67.13
& \textbf{54.52} & 54.40 & 75.39 & 73.14
& 59.43 & 77.84 & 82.70 & 73.78 \\
SPACE Iter-2
& 55.37 & 68.24 & 69.86 & 67.05
& 54.18 & 54.81 & 76.47 & 73.21
& 59.74 & 77.65 & 82.93 & 73.80 \\
\rowcolor{spifcyan}
SPIF Iter-2 (Ours)
& \textbf{56.66} & \textbf{68.75} & \textbf{71.43} & \textbf{68.34}
& 54.41 & \textbf{55.02} & \textbf{76.63} & \textbf{74.33}
& \textbf{60.02} & \textbf{78.12} & \textbf{83.04} & \textbf{74.05} \\
\midrule
SPIN Iter-3
& 55.12 & 68.06 & 68.32 & 67.61
& 54.60 & 54.38 & 75.47 & 73.42
& 59.80 & 77.96 & 82.62 & 73.02 \\
SPACE Iter-3
& 56.84 & 68.16 & 70.34 & 67.45
& 54.16 & 55.02 & 76.60 & 73.45
& 59.93 & 77.81 & 83.08 & 73.64 \\
\rowcolor{spifcyan}
SPIF Iter-3 (Ours)
& \textbf{57.11} & \textbf{68.83} & \textbf{71.92} & \textbf{68.82}
& \textbf{54.70} & \textbf{55.24} & \textbf{77.14} & \textbf{75.02}
& \textbf{60.23} & \textbf{78.08} & \textbf{83.33} & \textbf{74.43} \\
\bottomrule
\end{tabular}
}
\caption{\textbf{Main Results.}
We report results over three iterations and five random seeds on three language models, comparing our method against the supervised finetuning (SFT) baseline, SPIN~\citep{chen2024self}, and SPACE \citep{wang2026space}.
Our approach consistently outperforms the existing baselines across most settings.
}
\label{tab:main-results}
\vspace{-15pt}
\end{table*}
\begin{proposition}
Updating via~\eqref{eqn:two-stage-formulation} under the mapped reward defined in~\eqref{eqn:reward-mapping} and the regularizer in the form of $\psi(r)=c\alpha\cdot\EE_{\pi^\star}[(r(x,y))^2]+c(1-\alpha)\cdot\EE_{\pi}[(r(x,y))^2]$ is equivalent to minimizing the following regularized objective:
\begin{align*}
&\pi^{k+1}=\textstyle{\argmin_{\pi}}~\mathcal{L}(\pi)+\EE_{\rho,\pi^\star,\pi^k}~\zeta D_{f}(\pi,\pi^{k}),
\end{align*}
where $\mathcal{L}(\pi)$ is the least square objective:
\begin{align*}
\mathcal{L}(\pi)&:=\alpha~\mathbb{E}_{\rho,\pi^{\star}(x)}\Big[\beta\log\frac{\pi(y|x)}{\pi^{k}(y|x)}-r_{\max}\Big]^2\\
&+\!(1-\alpha)~\mathbb{E}_{\rho, \pi^{k}(x)}\Big[\beta\log\frac{\pi(y|x)}{\pi^{k}(y|x)}-r_{\min}\Big]^2,
\end{align*}
in which $r_{\max}=1/(2c\alpha)$, $r_{\min}=-1/[2c(1-\alpha)]$ and $D_{f}(\pi,\pi^k)=(1/2)\EE_{\pi^\star(x)}[\beta\log(\pi(y|x)/\pi^{k}(y|x))]^2$.
\label{prop:least-square}
\end{proposition}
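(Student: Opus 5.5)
The plan is to substitute the reparameterization~\eqref{eqn:reward-mapping} into the reward step of~\eqref{eqn:two-stage-formulation}. This identifies optimizing over $r$ with optimizing over $\pi$. The reward objective then becomes a function of $\pi$, which we complete the square on.

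\textbf{Step 1 (reparameterization).} With the identity link $\sigma(t)=t$, the reward step maximizes
\[
\EE_{\rho}\big[\EE_{\pi^\star}r-\EE_{\pi^k}r-\psi(r)-\zeta D_f(r,r^{k-1})\big].
\]
Here $\psi(r)=c\alpha\EE_{\pi^\star}[r^2]+c(1-\alpha)\EE_{\pi^k}[r^2]$, where the second expectation is taken under the current sampling policy $\pi^k$. Since the policy step has the closed form $\pi^{k+1}\propto\pi^k\exp(\beta^{-1}r^k)$, every reward $r$ induces a policy. Conversely, a policy $\pi$ corresponds to the reward $r_\pi(x,y)=\beta\log(\pi(y|x)/\pi^k(y|x))$, up to the $x$-dependent normalizer $\beta\log Z(x)$.

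We drop this normalizer as in~\eqref{eqn:reward-mapping}. This is the step that needs care: we must argue that shifting $r$ by a function of $x$ alone does not change which policy is optimal. We handle it by noting that $\pi^{k+1}$ is unchanged by such shifts. We then simply \emph{define} the single-stage problem over the reparameterized class $\{r_\pi\}$, which is how SPIN-type derivations proceed. After this, $r^k=r_{\pi^{k+1}}$, so maximizing over $r$ is the same as minimizing the negated objective over $\pi$.

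\textbf{Step 2 (complete the square).} For each term, write
\[
c\alpha\,\EE_{\pi^\star}[r^2]-\EE_{\pi^\star}[r]=c\alpha\,\EE_{\pi^\star}\big[(r-r_{\max})^2\big]-c\alpha\, r_{\max}^2,
\qquad r_{\max}=\tfrac{1}{2c\alpha}.
\]
Similarly,
\[
c(1-\alpha)\,\EE_{\pi^k}[r^2]+\EE_{\pi^k}[r]=c(1-\alpha)\,\EE_{\pi^k}\big[(r-r_{\min})^2\big]-\text{const},
\qquad r_{\min}=-\tfrac{1}{2c(1-\alpha)}.
\]
Dropping the constants and dividing by $c>0$, which rescales only $\zeta$ (absorbed into $\zeta$), the negated reward objective becomes $\mathcal{L}(\pi)+\zeta' D_f$ with $r=r_\pi$ plugged in. This is exactly the stated least-squares form.

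\textbf{Step 3 (Bregman term).} Choose $f(r)=\tfrac12\EE_{\pi^\star}[r^2]$. Its Bregman divergence is $D_f(r,r^{k-1})=\tfrac12\EE_{\pi^\star}[(r-r^{k-1})^2]$. Under the reparameterization, the previous reward $r^{k-1}$ relative to the new anchor $\pi^k$ is identified with the zero reward, i.e. $\pi=\pi^k$. This gives $D_f=\tfrac12\EE_{\pi^\star}[(\beta\log\pi/\pi^k)^2]$, matching the statement.

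This identification of the previous iterate with $r\equiv0$ in the new parameterization is the main obstacle. It is heuristic unless we define $D_f(\pi,\pi^k)$ directly in policy space. We will present it as the natural choice of a Bregman proximal term centered at the previous iterate. With it, the equivalence of argmaxes is immediate.
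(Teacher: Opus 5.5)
Your proposal is correct and follows essentially the same route as the paper's proof. The steps match: you complete the square in the linear-plus-quadratic reward objective to get the targets $1/(2c\alpha)$ and $-1/(2c(1-\alpha))$, divide by $c$ and absorb the factor into $\zeta$, take a quadratic Bregman divergence, and substitute $r=\beta\log(\pi/\pi^k)$. The only differences are minor: the paper completes the square before substituting, and takes the quadratic Bregman term over both $\pi^\star$ and $\pi^k$ samples, absorbing constants into $\zeta$. It also silently makes the same identification of the previous reward with the zero log-ratio that you explicitly flag as heuristic.
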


\begin{remark}
    Similar formulations as in Proposition~\ref{prop:least-square} has been previously explored in some literatures proposing stable versions of GANs \citep{mao2017least} and adversarial imitation learning \citep{al2023ls}. The key difference of the formulation proposed in Proposition~\ref{prop:least-square} compared to prior work is that we plugged the closed form solution given by the KL regularized policy optimization objective into the original least square reward learning objective using the reward mapping defined in~\eqref{eqn:reward-mapping}.
\end{remark}

\begin{remark}
    As discussed in Sec.~\ref{sec:single-stage}, leveraging the convex regularizer $\psi(r)=c\alpha\cdot\EE_{\pi^\star}[(r(x,y))^2]+c(1-\alpha)\cdot\EE_{\pi}[(r(x,y))^2]$ with coefficient $\alpha$ for reward learning objective is equivalent to measuring the mixed $\chi^2$ divergence $D_{\chi^2}(\pi^\star\Vert\alpha\pi^\star+(1-\alpha)\pi^{k})$. Therefore, $\alpha$ can be seen as a coefficient for mix-up ratio between oracle data generated from $\pi^\star$ and the data from the previous policy $\pi^{k}$. The data mix-up strategy has been applied in the practical implementation of SPIN \citep{chen2024self}. Usually, we set $\alpha=0.5$ to consider the balanced sampling scenario, i.e., drawing the same amount of data from the oracle policy $\pi^\star$ and the previous step policy $\pi^{k}$.
\end{remark}

By Proposition~\ref{prop:least-square}, we can retrieve the learning objective of our proposed $\chi^2$ self-play imitation finetuning algorithm using finite dataset approximation and balanced sampling ($\alpha=0.5$) for one iteration:
\begin{align}
    \pi^{k+1}{=}\underset{\pi}{\argmin}~\widehat{\mathcal{L}}(\pi){+}\frac{\zeta}{2}\underset{(x,y)\sim\mathcal{D}^\star\cup\mathcal{D}^k}{\EE}\Big[\log\!\frac{\pi(y|x)}{\pi^{k}(y|x)}\Big]^2\!\!,
\label{eqn:chi-squared-self-play}
\end{align}
where $\widehat{\mathcal{L}}(\pi)$ is the empirical loss with dataset $\mathcal{D}^\star$ and $\mathcal{D}^k$:
\begin{align*}
    \widehat{\mathcal{L}}(\pi)&:=\frac{1}{2}~\mathbb{E}_{(x,y)\sim\cD^\star}\Big[\beta\log\frac{\pi(y|x)}{\pi^{k}(y|x)}-r_{\max}\Big]^2\nonumber\\
    &\quad+\frac{1}{2}~\mathbb{E}_{(x,y)\sim\cD^{k}}\Big[\beta\log\frac{\pi(y|x)}{\pi^{k}(y|x)}-r_{\min}\Big]^2\nonumber,
\end{align*}
with $r_{\max}=1/c$ and $r_{\min}=-1/c$. Intuitively, $\widehat{\mathcal{L}}(\pi)$ corresponds to a least squares objective that pushes the rewards of expert samples in $\mathcal{D}^\star$ toward $r_{\max}$ while pulling the rewards of samples from the previous iteration dataset $\mathcal{D}^k$ toward $r_{\min}$, thereby creating a clear margin between the two classes of data for discrimination. The second term in~\eqref{eqn:chi-squared-self-play} serves as a regularizer that mitigates over optimization and encourages the updated reward to remain close to the reward from the previous iteration, which aligns with the mirror descent update applied to the reward player in Algorithm~\ref{alg:self-play-imitation}.
\begin{algorithm}[t]
    \caption{Self-Play Imitation Finetuning (Practical)}
    \begin{algorithmic}[1]
    \label{alg:self-play-imitation-chi}
    \REQUIRE Number of self-play iterations $K$, expert dataset $\mathcal{D}^\star$, reference policy $\pi^{\text{ref}}$, sample size $M$.
    \STATE Initialize $\pi^1=\pi^{\text{ref}}$.
    \FOR{$k = 1, 2, \dots, K$}
        \STATE Sample a dataset $\mathcal{D}^{k}$ using current policy $\pi^{k}$.
        \STATE Update policy via~\eqref{eqn:chi-squared-self-play}.
    \ENDFOR
    \RETURN $\pi^{K+1}$
    \end{algorithmic}
\end{algorithm}

\section{Empirical Results}
Here we present a detailed empirical analysis of our self-play imitation finetuning (SPIF) under the $\chi^2$ divergence, along with a comparative evaluation against SPIN \citep{chen2024self}, SPACE \citep{wang2026space}, and a standard supervised finetuning (SFT) baseline. We conduct experiments using Qwen3-4B and Qwen3-14B~\citep{Yang2025Qwen3TR} and a instruction-following Mistral-7B model \citep{jiang2023mistral7b} on 50k samples subsampled from the UltraChat SFT dataset~\citep{Ding2023EnhancingCL}.

For self-play–based methods (ours, non-linear version of SPIN \citep{chen2024self} and SPACE \citep{wang2026space}), at each iteration $k$ we first generate synthetic responses by sampling $y^{k} \sim \pi^{k}(\cdot \mid x)$ for each prompt $x$ in the SFT dataset. The model is then trained following Algorithm~\ref{alg:self-play-imitation-chi} on data triples $(x, y^\star, y^{k})$.

We evaluate the resulting models on a diverse suite of benchmarks, including Arc Challenge \citep{allenai:arc}, MMLU \citep{Hendrycks2020MeasuringMM}, HellaSwag \citep{Zellers2019HellaSwagCA}, and WinoGrande \citep{Sakaguchi2019WinoGrande}, to assess instruction-following capabilities. Performance results over three self-play iterations are reported in Table~\ref{tab:main-results}. Our method consistently outperforms the supervised finetuning (SFT) baseline, SPIN \citep{chen2024self} and SPACE \citep{wang2026space} across most evaluation settings, demonstrating the effectiveness of the proposed algorithm. We provide detailed implementation descriptions and experimental settings in Appendix~\ref{sec:implementation}, ablation studies in Appendix~\ref{sec:ablation}, additional results on the mathematical capabilities in Appendix~\ref{sec:math} and additional results with six self-play iterations in Appendix~\ref{sec:six-iter}.

\noindent \textbf{Reward Dynamics Analysis.}
We analyze the reward dynamics during training for our SPIF approach with $\chi^2$ divergence, and compare it against the reward behavior of SPIN \citep{chen2024self}. We observe that although our reward exhibits a significantly smaller magnitude, it still effectively discriminates between data sampled from the expert policy $\pi^\star$ and data generated by the previous-iteration model $\pi^k$. Figure~\ref{fig:reward-plot} illustrates the reward distribution at the first self-play iteration for the Qwen-3-4B model. This empirical observation supports the theoretical analysis in Sec~\ref{sec:theoretical-analysis}, which shows that SPIF with $\chi^2$ regularization admits a bounded reward magnitude and consequently enjoys a tighter upper bound on the duality gap, as stated in Theorem~\ref{thm:upper-dual}, with a smaller $R_{\max}$.
\begin{figure}[t]
\centering
\begin{minipage}{0.48\linewidth}
\centering
\includegraphics[width=\linewidth]{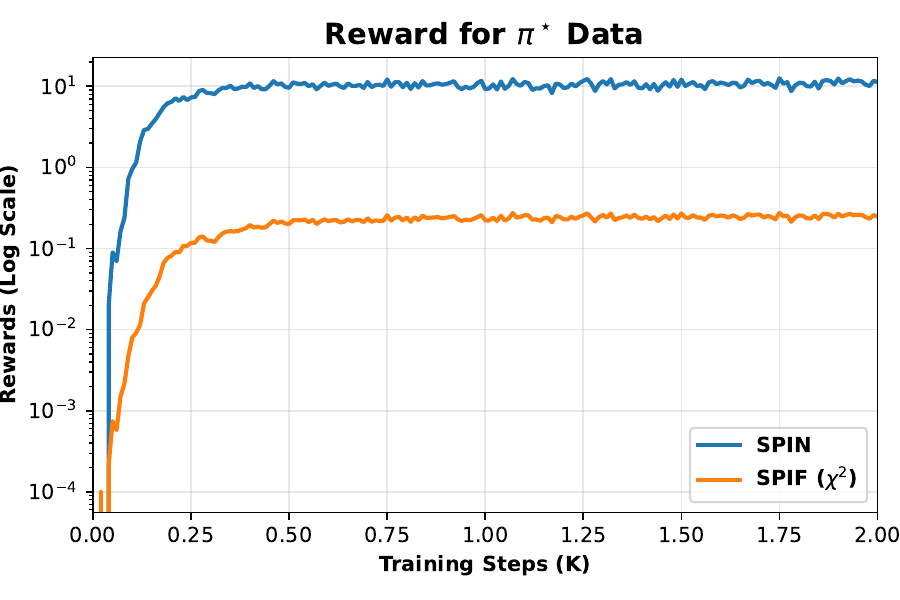}
\end{minipage}\hfill
\begin{minipage}{0.48\linewidth}
\centering
\includegraphics[width=\linewidth]{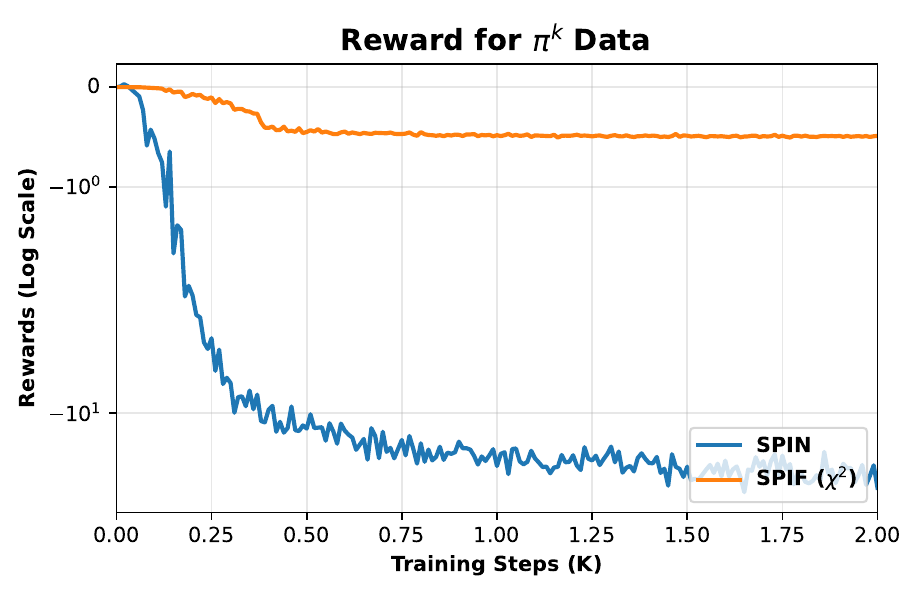}
\end{minipage}
\vspace{-0.5em}
\caption{\textbf{Reward Dynamics Analysis.} We plot the reward curves (log-scaled) during training for our approach, SPIF with $\chi^2$ regularization, and for SPIN \citep{chen2024self}. The results show that our method produces rewards with substantially smaller magnitude, which leads to more stable learning dynamics and is consistent with our theoretical analysis predicting a tighter duality gap.}
\label{fig:reward-plot}
\vspace{-12pt}
\end{figure}

\begin{figure}[t]
\centering
\begin{minipage}{0.48\linewidth}
\centering
\includegraphics[width=\linewidth]{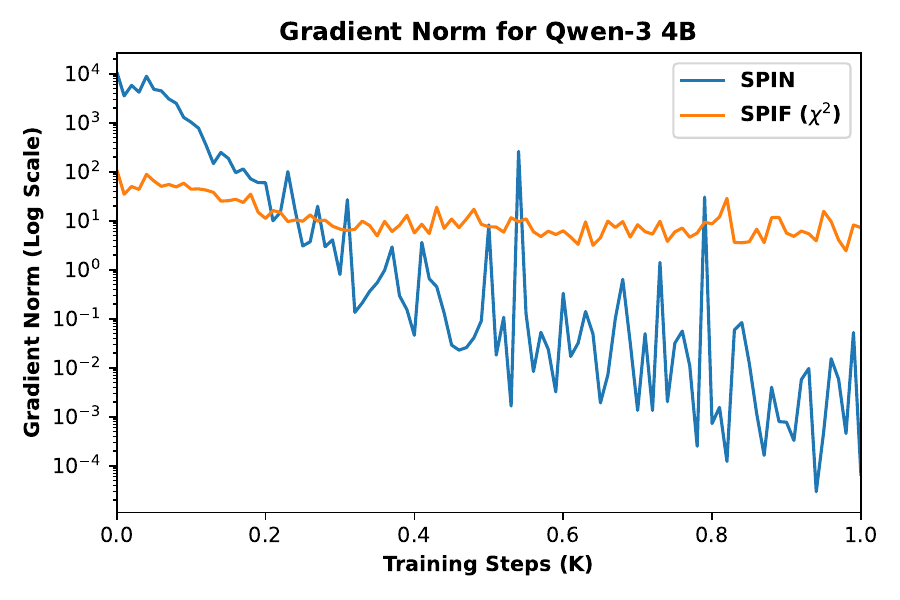}
\end{minipage}\hfill
\begin{minipage}{0.48\linewidth}
\centering
\includegraphics[width=\linewidth]{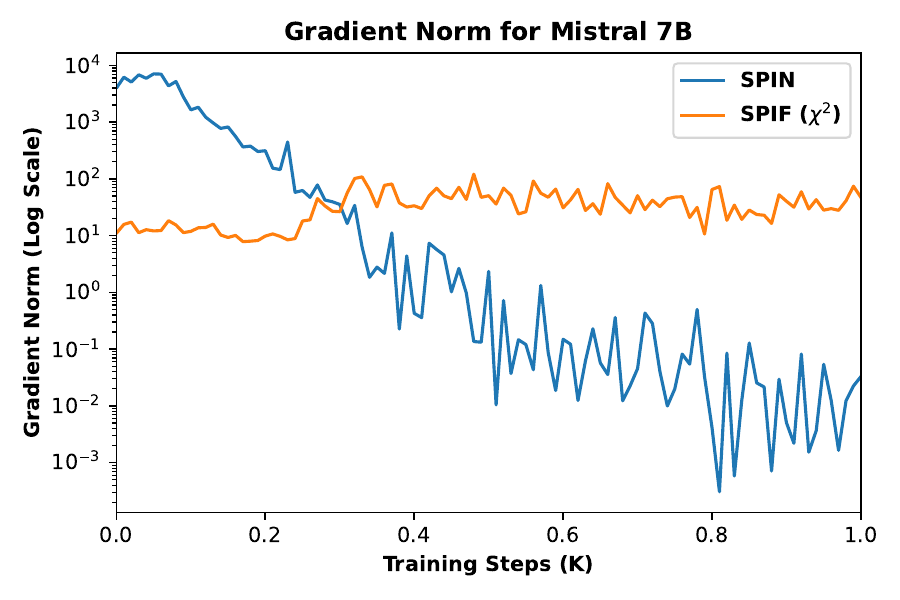}
\end{minipage}
\vspace{-0.5em}
\caption{\textbf{Gradient Norm Analysis.} We plot the gradient norms (log-scaled) during training for our approach, SPIF with $\chi^2$ regularization, and for SPIN \citep{chen2024self}. The results show that our method maintains significantly more stable gradient norms, indicating improved training stability compared to SPIN.}
\label{fig:grad-norm}
\vspace{-15pt}
\end{figure}

\noindent \textbf{Gradient Norm Analysis.}
We report the gradient norm dynamics during training for our SPIF approach with $\chi^2$ regularization and compare them against those of the original SPIN objective \citep{chen2024self}. Under the SPIN objective, the gradient norm is initially very large (near the order of $10^4$) and then rapidly collapses to near zero (on the order of $10^{-4}$), which can lead to unstable optimization behavior. In contrast, our $\chi^2$-regularized approach maintains a relatively small and stable gradient norm, typically in the range of $10^1$ to $10^2$, resulting in more stable training dynamics, as show in in Figure~\ref{fig:grad-norm}.

\section{Conclusion}

We presented a unified theoretical view of self-play post-training for language model alignment by formulating it as adversarial imitation learning.
With a game-theoretic analysis based on this perspective and clarifies the relationship underlying existing methods, we propose a self-play imitation finetuning algorithm based on a $\chi^2$-divergence variational objective, which yields bounded rewards and improved training stability. 
Experiments on various of models demonstrate consistent improvements over prior self-play methods, validating both the theoretical insights and the practical effectiveness of the proposed approach.
\section*{Limitations}

Our theoretical analysis relies on realizability assumptions. While such assumptions are standard in theoretical analyses, they may not always hold in practical LLM finetuning scenarios, where the reward and policy classes can be misspecified. In addition, our convergence result establishes an average-iterate guarantee, whereas practical self-play finetuning evaluates the last-iterate model produced by training. Establishing last-iterate convergence would require stronger assumptions on the game structure or optimization dynamics, which we leave for future work.
\bibliography{references}

@article{chen2024self,
  title={Self-play fine-tuning converts weak language models to strong language models},
  author={Chen, Zixiang and Deng, Yihe and Yuan, Huizhuo and Ji, Kaixuan and Gu, Quanquan},
  journal={arXiv preprint arXiv:2401.01335},
  year={2024}
}

@article{wu2024self,
  title={Self-play preference optimization for language model alignment},
  author={Wu, Yue and Sun, Zhiqing and Yuan, Huizhuo and Ji, Kaixuan and Yang, Yiming and Gu, Quanquan},
  journal={arXiv preprint arXiv:2405.00675},
  year={2024}
}

@article{zhang2025improving,
  title={Improving LLM general preference alignment via optimistic online mirror descent},
  author={Zhang, Yuheng and Yu, Dian and Ge, Tao and Song, Linfeng and Zeng, Zhichen and Mi, Haitao and Jiang, Nan and Yu, Dong},
  journal={arXiv preprint arXiv:2502.16852},
  year={2025}
}

@article{liu2021provably,
  title={Provably efficient generative adversarial imitation learning for online and offline setting with linear function approximation},
  author={Liu, Zhihan and Zhang, Yufeng and Fu, Zuyue and Yang, Zhuoran and Wang, Zhaoran},
  journal={arXiv preprint arXiv:2108.08765},
  year={2021}
}

@inproceedings{cai2020provably,
  title={Provably efficient exploration in policy optimization},
  author={Cai, Qi and Yang, Zhuoran and Jin, Chi and Wang, Zhaoran},
  booktitle={International Conference on Machine Learning},
  pages={1283--1294},
  year={2020},
  organization={PMLR}
}

@article{garg2021iq,
  title={Iq-learn: Inverse soft-q learning for imitation},
  author={Garg, Divyansh and Chakraborty, Shuvam and Cundy, Chris and Song, Jiaming and Ermon, Stefano},
  journal={Advances in Neural Information Processing Systems},
  volume={34},
  pages={4028--4039},
  year={2021}
}

@article{foster2024behavior,
  title={Is behavior cloning all you need? understanding horizon in imitation learning},
  author={Foster, Dylan J and Block, Adam and Misra, Dipendra},
  journal={Advances in Neural Information Processing Systems},
  volume={37},
  pages={120602--120666},
  year={2024}
}

@article{huang2024correcting,
  title={Correcting the mythos of kl-regularization: Direct alignment without overoptimization via chi-squared preference optimization},
  author={Huang, Audrey and Zhan, Wenhao and Xie, Tengyang and Lee, Jason D and Sun, Wen and Krishnamurthy, Akshay and Foster, Dylan J},
  journal={arXiv preprint arXiv:2407.13399},
  year={2024}
}

@inproceedings{mao2017least,
  title={Least squares generative adversarial networks},
  author={Mao, Xudong and Li, Qing and Xie, Haoran and Lau, Raymond YK and Wang, Zhen and Paul Smolley, Stephen},
  booktitle={Proceedings of the IEEE international conference on computer vision},
  pages={2794--2802},
  year={2017}
}

@article{al2023ls,
  title={Ls-iq: Implicit reward regularization for inverse reinforcement learning},
  author={Al-Hafez, Firas and Tateo, Davide and Arenz, Oleg and Zhao, Guoping and Peters, Jan},
  journal={arXiv preprint arXiv:2303.00599},
  year={2023}
}

@article{zhang2024iterative,
  title={Iterative nash policy optimization: Aligning llms with general preferences via no-regret learning},
  author={Zhang, Yuheng and Yu, Dian and Peng, Baolin and Song, Linfeng and Tian, Ye and Huo, Mingyue and Jiang, Nan and Mi, Haitao and Yu, Dong},
  journal={arXiv preprint arXiv:2407.00617},
  year={2024}
}

@article{Yang2025Qwen3TR,
  title={Qwen3 Technical Report},
  author={An Yang and Anfeng Li and Baosong Yang and Beichen Zhang and Binyuan Hui and Bo Zheng and Bowen Yu and Chang Gao and Chengen Huang and Chenxu Lv and Chujie Zheng and Dayiheng Liu and Fan Zhou and Fei Huang and Feng Hu and Hao Ge and Haoran Wei and Huan Lin and Jialong Tang and Jian Yang and Jianhong Tu and Jianwei Zhang and Jianxin Yang and Jiaxin Yang and Jingren Zhou and Jingren Zhou and Junyan Lin and Kai Dang and Keqin Bao and Ke‐Pei Yang and Le Yu and Li-Chun Deng and Mei Li and Min Xue and Mingze Li and Pei Zhang and Peng Wang and Qin Zhu and Rui Men and Ruize Gao and Shi-Qiang Liu and Shuang Luo and Tianhao Li and Tianyi Tang and Wenbiao Yin and Xingzhang Ren and Xinyu Wang and Xinyu Zhang and Xuancheng Ren and Yang Fan and Yang Su and Yi-Chao Zhang and Yinger Zhang and Yu Wan and Yuqiong Liu and Zekun Wang and Zeyu Cui and Zhenru Zhang and Zhipeng Zhou and Zihan Qiu},
  journal={ArXiv},
  year={2025},
  volume={abs/2505.09388}
}

@article{Ding2023EnhancingCL,
  title={Enhancing Chat Language Models by Scaling High-quality Instructional Conversations},
  author={Ning Ding and Yulin Chen and Bokai Xu and Yujia Qin and Zhi Zheng and Shengding Hu and Zhiyuan Liu and Maosong Sun and Bowen Zhou},
  journal={ArXiv},
  year={2023},
  volume={abs/2305.14233},
}

@article{allenai:arc,
      author    = {Peter Clark  and Isaac Cowhey and Oren Etzioni and Tushar Khot and
                    Ashish Sabharwal and Carissa Schoenick and Oyvind Tafjord},
      title     = {Think you have Solved Question Answering? Try ARC, the AI2 Reasoning Challenge},
      journal   = {arXiv:1803.05457v1},
      year      = {2018},
}

@article{Hendrycks2020MeasuringMM,
  title={Measuring Massive Multitask Language Understanding},
  author={Dan Hendrycks and Collin Burns and Steven Basart and Andy Zou and Mantas Mazeika and Dawn Xiaodong Song and Jacob Steinhardt},
  journal={ArXiv},
  year={2020},
  volume={abs/2009.03300}
}

@inproceedings{Zellers2019HellaSwagCA,
  title={HellaSwag: Can a Machine Really Finish Your Sentence?},
  author={Rowan Zellers and Ari Holtzman and Yonatan Bisk and Ali Farhadi and Yejin Choi},
  booktitle={Annual Meeting of the Association for Computational Linguistics},
  year={2019}
}

@article{Sakaguchi2019WinoGrande,
  title={WinoGrande},
  author={Keisuke Sakaguchi and Ronan Le Bras and Chandra Bhagavatula and Yejin Choi},
  journal={Communications of the ACM},
  year={2019},
  volume={64},
  pages={99 - 106}
}

@misc{jiang2023mistral7b,
      title={Mistral 7B}, 
      author={Albert Q. Jiang and Alexandre Sablayrolles and Arthur Mensch and Chris Bamford and Devendra Singh Chaplot and Diego de las Casas and Florian Bressand and Gianna Lengyel and Guillaume Lample and Lucile Saulnier and Lélio Renard Lavaud and Marie-Anne Lachaux and Pierre Stock and Teven Le Scao and Thibaut Lavril and Thomas Wang and Timothée Lacroix and William El Sayed},
      year={2023},
      eprint={2310.06825},
      archivePrefix={arXiv},
      primaryClass={cs.CL},
}

@article{chi2024diffusionpolicy,
	author = {Cheng Chi and Zhenjia Xu and Siyuan Feng and Eric Cousineau and Yilun Du and Benjamin Burchfiel and Russ Tedrake and Shuran Song},
	title ={Diffusion Policy: Visuomotor Policy Learning via Action Diffusion},
	journal = {The International Journal of Robotics Research},
	year = {2024},
}

@inproceedings{florence2022implicit,
  title={Implicit behavioral cloning},
  author={Florence, Pete and Lynch, Corey and Zeng, Andy and Ramirez, Oscar A and Wahid, Ayzaan and Downs, Laura and Wong, Adrian and Lee, Johnny and Mordatch, Igor and Tompson, Jonathan},
  booktitle={Conference on robot learning},
  pages={158--168},
  year={2022},
  organization={PMLR}
}

@article{rohatgi2025computational,
  title={Computational-statistical tradeoffs at the next-token prediction barrier: Autoregressive and imitation learning under misspecification},
  author={Rohatgi, Dhruv and Block, Adam and Huang, Audrey and Krishnamurthy, Akshay and Foster, Dylan J},
  journal={arXiv preprint arXiv:2502.12465},
  year={2025}
}

@article{ho2016generative,
  title={Generative adversarial imitation learning},
  author={Ho, Jonathan and Ermon, Stefano},
  journal={Advances in neural information processing systems},
  volume={29},
  year={2016}
}

@article{xu2024provably,
  title={Provably and practically efficient adversarial imitation learning with general function approximation},
  author={Xu, Tian and Zhang, Zhilong and Chen, Ruishuo and Sun, Yihao and Yu, Yang},
  journal={Advances in Neural Information Processing Systems},
  volume={37},
  pages={66108--66146},
  year={2024}
}

@article{li2025near,
  title={Near-Optimal Second-Order Guarantees for Model-Based Adversarial Imitation Learning},
  author={Li, Shangzhe and Zhou, Dongruo and Zhang, Weitong},
  journal={arXiv preprint arXiv:2510.09487},
  year={2025}
}

@article{calandriello2024human,
  title={Human alignment of large language models through online preference optimisation},
  author={Calandriello, Daniele and Guo, Daniel and Munos, Remi and Rowland, Mark and Tang, Yunhao and Pires, Bernardo Avila and Richemond, Pierre Harvey and Lan, Charline Le and Valko, Michal and Liu, Tianqi and others},
  journal={arXiv preprint arXiv:2403.08635},
  year={2024}
}

@inproceedings{mishra2022cross,
  title={Cross-task generalization via natural language crowdsourcing instructions},
  author={Mishra, Swaroop and Khashabi, Daniel and Baral, Chitta and Hajishirzi, Hannaneh},
  booktitle={Proceedings of the 60th Annual Meeting of the Association for Computational Linguistics (Volume 1: Long Papers)},
  pages={3470--3487},
  year={2022}
}

@article{thoppilan2022lamda,
  title={Lamda: Language models for dialog applications},
  author={Thoppilan, Romal and De Freitas, Daniel and Hall, Jamie and Shazeer, Noam and Kulshreshtha, Apoorv and Cheng, Heng-Tze and Jin, Alicia and Bos, Taylor and Baker, Leslie and Du, Yu and others},
  journal={arXiv preprint arXiv:2201.08239},
  year={2022}
}

@article{chung2024scaling,
  title={Scaling instruction-finetuned language models},
  author={Chung, Hyung Won and Hou, Le and Longpre, Shayne and Zoph, Barret and Tay, Yi and Fedus, William and Li, Yunxuan and Wang, Xuezhi and Dehghani, Mostafa and Brahma, Siddhartha and others},
  journal={Journal of Machine Learning Research},
  volume={25},
  number={70},
  pages={1--53},
  year={2024}
}

@article{ouyang2022training,
  title={Training language models to follow instructions with human feedback},
  author={Ouyang, Long and Wu, Jeffrey and Jiang, Xu and Almeida, Diogo and Wainwright, Carroll and Mishkin, Pamela and Zhang, Chong and Agarwal, Sandhini and Slama, Katarina and Ray, Alex and others},
  journal={Advances in neural information processing systems},
  volume={35},
  pages={27730--27744},
  year={2022}
}

@article{bai2022training,
  title={Training a helpful and harmless assistant with reinforcement learning from human feedback},
  author={Bai, Yuntao and Jones, Andy and Ndousse, Kamal and Askell, Amanda and Chen, Anna and DasSarma, Nova and Drain, Dawn and Fort, Stanislav and Ganguli, Deep and Henighan, Tom and others},
  journal={arXiv preprint arXiv:2204.05862},
  year={2022}
}

@article{rafailov2023direct,
  title={Direct preference optimization: Your language model is secretly a reward model},
  author={Rafailov, Rafael and Sharma, Archit and Mitchell, Eric and Manning, Christopher D and Ermon, Stefano and Finn, Chelsea},
  journal={Advances in neural information processing systems},
  volume={36},
  pages={53728--53741},
  year={2023}
}

@article{guo2025deepseek,
  title={Deepseek-r1: Incentivizing reasoning capability in llms via reinforcement learning},
  author={Guo, Daya and Yang, Dejian and Zhang, Haowei and Song, Junxiao and Zhang, Ruoyu and Xu, Runxin and Zhu, Qihao and Ma, Shirong and Wang, Peiyi and Bi, Xiao and others},
  journal={arXiv preprint arXiv:2501.12948},
  year={2025}
}

@inproceedings{abbeel2004apprenticeship,
  title={Apprenticeship learning via inverse reinforcement learning},
  author={Abbeel, Pieter and Ng, Andrew Y},
  booktitle={Proceedings of the twenty-first international conference on Machine learning},
  pages={1},
  year={2004}
}

@article{ren2024hybrid,
  title={Hybrid inverse reinforcement learning},
  author={Ren, Juntao and Swamy, Gokul and Wu, Zhiwei Steven and Bagnell, J Andrew and Choudhury, Sanjiban},
  journal={arXiv preprint arXiv:2402.08848},
  year={2024}
}

@article{rafailov2021visual,
  title={Visual adversarial imitation learning using variational models},
  author={Rafailov, Rafael and Yu, Tianhe and Rajeswaran, Aravind and Finn, Chelsea},
  journal={Advances in Neural Information Processing Systems},
  volume={34},
  pages={3016--3028},
  year={2021}
}

@article{ablett2023learning,
  title={Learning from guided play: Improving exploration for adversarial imitation learning with simple auxiliary tasks},
  author={Ablett, Trevor and Chan, Bryan and Kelly, Jonathan},
  journal={IEEE Robotics and Automation Letters},
  volume={8},
  number={3},
  pages={1263--1270},
  year={2023},
  publisher={IEEE}
}

@article{hejna2023contrastive,
  title={Contrastive preference learning: learning from human feedback without rl},
  author={Hejna, Joey and Rafailov, Rafael and Sikchi, Harshit and Finn, Chelsea and Niekum, Scott and Knox, W Bradley and Sadigh, Dorsa},
  journal={arXiv preprint arXiv:2310.13639},
  year={2023}
}

@article{tu2025enhancing,
  title={Enhancing LLM Reasoning with Iterative DPO: A Comprehensive Empirical Investigation},
  author={Tu, Songjun and Lin, Jiahao and Tian, Xiangyu and Zhang, Qichao and Li, Linjing and Fu, Yuqian and Xu, Nan and He, Wei and Lan, Xiangyuan and Jiang, Dongmei and others},
  journal={arXiv preprint arXiv:2503.12854},
  year={2025}
}

@article{cobbe2021gsm8k,
  title={Training Verifiers to Solve Math Word Problems},
  author={Cobbe, Karl and Kosaraju, Vineet and Bavarian, Mohammad and Chen, Mark and Jun, Heewoo and Kaiser, Lukasz and Plappert, Matthias and Tworek, Jerry and Hilton, Jacob and Nakano, Reiichiro and Hesse, Christopher and Schulman, John},
  journal={arXiv preprint arXiv:2110.14168},
  year={2021}
}

@article{wang2026space,
  title={SPACE: Noise contrastive estimation stabilizes self-play fine-tuning for large language models},
  author={Wang, Yibo and Huzhang, Guangda and Chen, Qingguo and Xu, Zhao and Luo, Weihua and Zhang, Kaifu and Zhang, Lijun},
  journal={Advances in Neural Information Processing Systems},
  volume={38},
  pages={60042--60072},
  year={2026}
}

@article{wang2026triplets,
  title={Triplets better than pairs: Towards stable and effective self-play fine-tuning for LLMs},
  author={Wang, Yibo and Sun, Hai-Long and Huzhang, Guangda and Chen, Qingguo and Xu, Zhao and Luo, Weihua and Zhang, Kaifu and Zhang, Lijun},
  journal={Advances in Neural Information Processing Systems},
  volume={38},
  pages={40980--41009},
  year={2026}
}

\newpage
\appendix
\onecolumn

\section{Discussion}
\label{sec:discussion}
\subsection{Adversarial Imitation Learning.}
Existing adversarial imitation learning methods typically optimize a single-stage min–max objective as in~\eqref{eqn:ail-formulation}, but are formulated under the full MDP setting rather than the contextual bandit setting considered in this paper. In particular, GAIL \citep{ho2016generative} employs a regularization that makes the resulting objective equivalent to minimizing the Jensen–Shannon divergence between expert and behavioral distributions. IQ-Learn \citep{garg2021iq} uses the regularizer $\psi(r)=\mathbb{E}_{\rho^\star}[r^2]$, which is equivalent to minimizing the $\chi^2$ divergence between the expert occupancy measure $\rho^\star$ and the learner occupancy measure $\rho^\pi$, where $\rho$ denotes the policy occupancy measure. LS-IQ \citep{al2023ls} further considers the regularizer $\psi(r)=\alpha~\mathbb{E}_{\rho^\star}[r^2] + (1-\alpha)~\mathbb{E}_{\rho^\pi}[r^2]$, which corresponds to minimizing the $\chi^2$ divergence between $\rho^\star$ and a mixture of $\rho^\star$ and $\rho^\pi$ with mixing coefficient $\alpha$. Notably, the regularization used in LS-IQ is closely related to the formulation adopted in this paper.

\subsection{Self-Play Imitation Finetuning}
Self-play imitation finetuning refers to methods that leverage an SFT dataset to perform self-play, with the goal of imitating behaviors in the SFT data rather than optimizing external signals such as preferences. Both SPIN \citep{chen2024self} and our proposed Algorithm~\ref{alg:self-play-imitation-chi} fall into this category. We show that both methods can be formulated within the standard adversarial imitation learning framework in~\eqref{eqn:ail-formulation}.

In particular, the linear variant of SPIN (using idential link function $\sigma(t)=t$) directly minimizes the total variation distance between the expert policy $\pi^\star$ and the learned policy $\pi$, as proved in Appendix~\ref{sec:linear-spin-tv}. The nonlinear variant of SPIN ($\sigma(t)=-\log(1+\exp(-t))$) can be viewed as minimizing a KL divergence between $\pi^\star$ and $\pi$, as shown in Appendix~\ref{sec:non-linear-spin}.

As self-play methods can be cast as imitation learning toward a prescribed objective (Table~\ref{tab:divergences}), they implicitly induce a capacity ceiling determined by the underlying imitation target. Consequently, the model capability attainable through such procedures is fundamentally bounded, implying that existing LLM self-play algorithms cannot achieve infinite capability gains via iterative self-improvement alone.

\subsection{Self-Play for General Preference Alignment}
In this sections, we extend our discussion to the general preference alignment with self-play without the need of Bradley-Terry preference model~\citep{wu2024self,zhang2024iterative,zhang2025improving}.
Similar with our proposed $\chi^2$ self-play imitation finetuning, these methods often rely on the squared loss in different settings. 
Below, we show that this line of work admits an AIL interpretation with respect to a general preference oracle and 
exhibit a close connection to $\chi^2$ divergence–regularized adversarial imitation learning. 
\looseness=-1

\noindent \textbf{SPPO.} Given an estimated probability $P(y\!\succ\! \pi^{k}|x)$ using the general preference model, the objective in SPPO~\citep{wu2024self} for policy $\pi^{k+1}$ can be written as:
\begin{align*}
\underset{\pi}{\argmin}\underset{\rho,\pi^k}{\EE}\Big[\!
\log\!\frac{\pi(y|x)}{\pi^{k}(y|x)} {-} \frac{1}{\beta}\!\left(P(y\succ \pi^{k}|x) - \tfrac{1}{2}\right)
\!\!\Big]^2.
\end{align*}
This update rule shares similarities with our proposed objective in~\eqref{eqn:chi-squared-self-play}
by optimizing the $\chi^2$ regularized AIL objective under a trust region constraint by slightly generalizing our proposed imitation framework in Sec.~\ref{sec:ail-view} to preference-based policy optimization using the following proposition:
\begin{proposition}
Given a preference model that outputs $w^k(x,y):=P(y\succ \pi^k\mid x)$, let
$y\sim\pi^k(\cdot|x)$ and define the weighted expectations:
\begin{align*}
\EE_{\pi_+^k}[f(x,y)] &:= \EE_{y\sim\pi^k(\cdot|x)}\!\left[w^k(x,y)\,f(x,y)\right],\\
\EE_{\pi_-^k}[f(x,y)] &:= \EE_{y\sim\pi^k(\cdot|x)}\!\left[(1-w^k(x,y))\,f(x,y)\right].
\end{align*}
With $\sigma(t) = t$, mixed $\chi^2$ regularizer and $r(x, y)$ being the reparameterized reward function in~\eqref{eqn:reward-mapping}, the SPPO objective is equivalent to optimizing:
\begin{align*}
\arg\max_{r}\;
&\EE_{\rho}\Big[
\sigma\!\big(
\EE_{\pi_+^{k}}[r(x,y)]
-
\EE_{\pi_-^{k}}[r(x,y)]
\big)
-
\phi(r,r^{k-1})
\Big],
\end{align*}
\vspace{-12pt}
\label{prop:sppo}
\end{proposition}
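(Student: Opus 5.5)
I will mirror the argument of Proposition~\ref{prop:least-square}, with the expert distribution $\pi^\star$ and the learner distribution $\pi^k$ replaced by the two weighted measures $\pi^k_+$ and $\pi^k_-$.

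\textbf{Step 1: write the objective pointwise.} With $\sigma(t)=t$, the reward objective is linear in $r$ minus the mixed $\chi^2$ regularizer. I take the regularizer on the weighted measures,
\[
\psi(r)=\tfrac{c}{2}\bigl(\EE_{\pi^k_+}[r^2]+\EE_{\pi^k_-}[r^2]\bigr),
\]
plus the Bregman term. Because $w^k+(1-w^k)=1$, both weighted measures are absolutely continuous with respect to $y\sim\pi^k$. Every term can therefore be written as a single expectation over $(x,y)$ with $y\sim\pi^k$. The objective becomes
\[
\EE_{\rho,\pi^k}\Bigl[(2w^k-1)\,r-\tfrac{c}{2}\,r^2\Bigr]-\zeta D_f(r,r^{k-1}).
\]
Here the regularizer collapses because $w^k r^2+(1-w^k)r^2=r^2$.

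\textbf{Step 2: complete the square.} For each $(x,y)$ the integrand equals
\[
-\tfrac{c}{2}\Bigl(r-\tfrac{1}{c}\bigl(w^k-\tfrac12\bigr)\cdot 2\Bigr)^2+\text{const},
\]
where the constant does not depend on $r$. Maximizing over $r$ is thus equivalent to minimizing
\[
\EE_{\rho,\pi^k}\Bigl[r(x,y)-\tfrac{2}{c}\bigl(P(y\succ\pi^k|x)-\tfrac12\bigr)\Bigr]^2.
\]
This is the same least-squares structure as in Proposition~\ref{prop:least-square}, where $r_{\max}$ and $r_{\min}$ came from the positive and negative weights. Here the target is a single weighted average of the two.

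\textbf{Step 3: substitute the reward reparameterization.} I plug in $r=\beta\log(\pi/\pi^k)$ from~\eqref{eqn:reward-mapping}, which uses the closed form of the KL-regularized policy step in~\eqref{eqn:two-stage-formulation}. Dividing by $\beta^2$ gives
\[
\EE_{\rho,\pi^k}\Bigl[\log\tfrac{\pi}{\pi^k}-\tfrac{2}{c\beta}\bigl(P(y\succ\pi^k|x)-\tfrac12\bigr)\Bigr]^2.
\]
Choosing $c=2$, or rescaling $\beta$, makes the coefficient $1/\beta$, which recovers the SPPO loss exactly. The Bregman term $\zeta D_f$ then acts only as the trust-region regularizer, as in Proposition~\ref{prop:least-square}. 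SPPO corresponds to the limit $\zeta\to 0$, or to absorbing this term.

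\textbf{Main obstacle.} The delicate point is Step 1: one must justify that the mixed regularizer is taken with respect to $\pi^k_\pm$, so that the quadratic terms sum to an unweighted $\EE_{\pi^k}[r^2]$. A different normalization would leave a residual $w$-dependent weight in the square. I would also check that the partition-function offset $\beta\log Z(x)$ dropped in~\eqref{eqn:reward-mapping} does not change the minimizer. I expect to argue this by noting that SPPO's target has the form $P-\tfrac12$, which is roughly centered under $\pi^k$, and by absorbing the offset as in SPPO's own derivation. Otherwise I would state the equivalence up to this $x$-dependent constant.
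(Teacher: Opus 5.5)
Your proof is correct and follows essentially the same route as the paper: rewrite both weighted expectations under $y\sim\pi^k$, collapse the mixed regularizer using $w^k+(1-w^k)=1$, complete the square, drop the Bregman term, and substitute $r=\beta\log(\pi/\pi^k)$. Your constants are actually the more careful ones. The paper collapses the regularizer to $c\,\EE_{\pi^k}[r^2]$ instead of $\tfrac{c}{2}\EE_{\pi^k}[r^2]$, so it matches SPPO at $c=1$ rather than your $c=2$. Your partition-function worry is unnecessary, because the statement takes $r$ to be the reparameterized reward of \eqref{eqn:reward-mapping} by definition.
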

\begin{remark}
Proposition~\ref{prop:sppo} provides a new derivation of SPPO algorithm, differs from the original derivation from \citet{wu2024self}. This new formulation
    shows that instead of imitating the expert policy $\pi^\star$ in original AIL formulation shown in~\eqref{eqn:two-stage-formulation}, SPPO is adversarially imitating the general preference oracle $P(y\succ \pi^k\mid x)$.
\end{remark}

\noindent \textbf{Iterative DPO.} Iterative DPO serves as a baseline in SPPO \citep{wu2024self} and has also been applied to enhance LLM reasoning \citep{tu2025enhancing}. In practice, it replaces the SPPO loss with a DPO-style objective while keeping the remaining components unchanged. In Appendix~\ref{sec:iterative_dpo}, we show that this procedure implicitly minimizes the KL divergence between the model policy and an expert policy $\pi^\star$ induced by the preference oracle iteratively.

\noindent \textbf{INPO.} INPO \citep{zhang2024iterative} formulates the general preference alignment problem as solving Nash policy with online mirror descent. For each iteration, it updates through the following update rule:
\begin{align*}
\pi^{k+1}=\argmin_\pi\underset{\substack{y, y' \sim \pi^k(\cdot | x) \\ x \sim \rho(x) \\ y_w, y_l \sim \lambda_p(y, y')}}{\EE}\bigg[\log\frac{\pi(y_w|x)}{\pi(y_l|x)} \notag -\frac{\tau}{\eta}\log\frac{\pi_{\text{ref}}(y_w|x)}{\pi_{\text{ref}}(y_l|x)} -\frac{\eta-\tau}{\eta}\log\frac{\pi^k(y_w|x)}{\pi^k(y_l|x)}-\frac{1}{2\eta}\bigg]^2.
\end{align*}
We show that this update rule is equivalent to a iterative AIL procedure operating on the preference oracle. Compared to SPPO \citep{wu2024self}, this equivalence arises under a different construction of $\pi_+^k$ and $\pi_-^k$, as well as a distinct reparameterization of $r$.
\begin{proposition}
Given a preference model that outputs $w^k(x,y):=P(y\succ \pi^k\mid x)$, let
$(y,y')\sim \pi^k(\cdot|x)\times \pi^k(\cdot|x)$ and define the weighted pairwise expectations:
\begin{align*}
\EE_{\pi_+^k}[f(x,y,y')] &:= \EE_{(y,y')\sim \pi^k\times\pi^k}\!\left[w^k(x,y)\, f(x,y,y')\right],\\
\EE_{\pi_-^k}[f(x,y,y')] &:= \EE_{(y,y')\sim \pi^k\times\pi^k}\!\left[w^k(x,y')\, f(x,y,y')\right].
\end{align*}
Then the INPO objective is equivalent to optimizing the reparameterized $r$ with the mixed $\chi^2$ regularizer for:
\begin{align*}
\arg\max_{r}\; \EE_{x\sim\rho}\Big[
\sigma\!\big(\EE_{\pi_+^k}[r(x,y,y')]-\EE_{\pi_-^k}[r(x,y,y')]\big)
-\phi(r,r^{k-1})
\Big],
\end{align*}
where $\sigma(t)=t$, and $r(x,y,y')$ is defined in:
\begin{align} r(x,y,y'):=\eta\!\log\frac{\pi(y|x)}{\pi(y'|x)}{-}{\tau}\!\log\frac{\pi_{\mathrm{ref}}(y|x)}{\pi_{\mathrm{ref}}(y'|x)}{-}{(\eta{-}\tau)}\!\log\frac{\pi^k(y|x)}{\pi^k(y'|x)}. \label{eqn:delta-r-inpo} \end{align}
\label{prop:inpo}
\vspace{-5pt}
\end{proposition}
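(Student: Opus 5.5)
We argue by direct expansion: write out the INPO squared loss, expand it around the reparameterized reward $r(x,y,y')$ in~\eqref{eqn:delta-r-inpo}, and match the result term by term with the mixed-$\chi^2$ regularized AIL objective, as in Proposition~\ref{prop:least-square} and Proposition~\ref{prop:sppo}.

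\textbf{Step 1: rewrite the loss.} Multiplying the inner bracket of the INPO loss by $\eta$ gives
\begin{align*}
\eta\Big[\log\frac{\pi(y_w|x)}{\pi(y_l|x)}-\frac{\tau}{\eta}\log\frac{\pi_{\text{ref}}(y_w|x)}{\pi_{\text{ref}}(y_l|x)}-\frac{\eta-\tau}{\eta}\log\frac{\pi^k(y_w|x)}{\pi^k(y_l|x)}-\frac{1}{2\eta}\Big]=r(x,y_w,y_l)-\tfrac12 .
\end{align*}
The argmin over $\pi$ is unchanged by the factor $\eta^2$. So INPO minimizes $\EE[(r(x,y_w,y_l)-\tfrac12)^2]$ over $\pi$, and hence over the induced class of $r$.

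\textbf{Step 2: unfold the preference sampling.} Given $(y,y')\sim\pi^k\times\pi^k$, the oracle $\lambda_p$ returns $(y_w,y_l)=(y,y')$ with probability $w^k(x,y)$ (more precisely $P(y\succ y'|x)$; see the last paragraph) and $(y',y)$ otherwise. The map $r$ is antisymmetric, $r(x,y',y)=-r(x,y,y')$, which follows directly from~\eqref{eqn:delta-r-inpo}. The loss therefore becomes
\begin{align*}
\EE_{\pi^+_k}\big[(r-\tfrac12)^2\big]+\EE_{\pi^-_k}\big[(r+\tfrac12)^2\big],
\end{align*}
using the weighted pairwise expectations from the statement. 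For the second term we use the exchangeability of $(y,y')$ under $\pi^k\times\pi^k$: relabel $y\leftrightarrow y'$, so that the weight becomes $w^k(x,y')$ acting on $r(x,y,y')$.

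\textbf{Step 3: match with the AIL objective.} Expanding the squares gives
\begin{align*}
-\big(\EE_{\pi_+^k}[r]-\EE_{\pi_-^k}[r]\big)+\EE_{\pi_+^k}[r^2]+\EE_{\pi_-^k}[r^2]+\text{const}.
\end{align*}
Negating, this is exactly $\sigma(\EE_{\pi_+^k}r-\EE_{\pi_-^k}r)-\psi(r)$ with $\sigma(t)=t$ and the mixed $\chi^2$ regularizer $\psi(r)=\tfrac c2(\EE_{\pi_+^k}r^2+\EE_{\pi_-^k}r^2)$ at $c=2$. The targets $\pm\tfrac12$ equal $\pm 1/c$, consistent with Proposition~\ref{prop:bounded-reward}.

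\textbf{Step 4: account for the remaining terms.} The Bregman part of $\phi(r,r^{k-1})$ is handled as in Proposition~\ref{prop:least-square}. The $\pi^k$ term inside $r$ plays the role of the trust-region/mirror-descent anchor. The $\pi_{\mathrm{ref}}$ term is the additional KL-to-reference regularization noted in Table~\ref{tab:divergences}. Alternatively, take $\zeta=0$ and absorb these terms into the definition of $r$, so that maximizing over $r$ in the reparameterized class coincides with minimizing over $\pi$.

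\textbf{Main obstacle.} The hard step is Step 2: making the weighting consistent with the definitions of $\pi_\pm^k$. $\lambda_p$ orders pairs by the pairwise preference $P(y\succ y'|x)$, not by $w^k(x,y)=P(y\succ\pi^k|x)$. The squared loss is quadratic in $r$, so the cross term and the $r^2$ terms pick up this pairwise weight, and swapping in $w^k$ is not immediate. I would first try to show the discrepancy affects only terms independent of $\pi$, or contributes identically after symmetrization. If it does not, I would state the proposition with the pairwise weights $P(y\succ y')$ and $P(y'\succ y)$. Under the independent-sampling and symmetrization used above, these reduce to the stated $w^k$ marginals in the linear term.
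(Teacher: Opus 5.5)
Your Step~2 is false as written, not merely the hard step, and the identity you carry into Step~3 does not hold for the $\pi_\pm^k$ of the statement. Unfolding $\lambda_p$ correctly gives, with $r=r(x,y,y')$,
\[
\EE_{\lambda_p}\big[(r(x,y_w,y_l)-\tfrac12)^2\big]=\EE_{\pi^k\times\pi^k}\big[P(y\succ y'|x)(r-\tfrac12)^2+P(y'\succ y|x)(r+\tfrac12)^2\big].
\]
So the quadratic term is $\EE_{\pi^k\times\pi^k}[r^2]$ with no preference weight, because the two pairwise probabilities sum to one. Your expression instead has $\EE_{\pi_+^k}[r^2]+\EE_{\pi_-^k}[r^2]=\EE_{\pi^k\times\pi^k}[(w^k(x,y)+w^k(x,y'))r^2]$. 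This differs by $2\EE_{\pi^k\times\pi^k}[(w^k(x,y)-\tfrac12)r^2]$, which depends on $\pi$ and is not a constant. The relabeling $y\leftrightarrow y'$ cannot fix this: it sends $P(y'\succ y|x)$ (or $1-w^k(x,y)$) to $P(y\succ y'|x)$ (or $1-w^k(x,y')$), never to $w^k(x,y')$.

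The error matters. Your Step~3 objective $\EE_{\pi_+^k}r-\EE_{\pi_-^k}r-(\EE_{\pi_+^k}r^2+\EE_{\pi_-^k}r^2)$ has the pointwise maximizer $\frac{w^k(x,y)-w^k(x,y')}{2(w^k(x,y)+w^k(x,y'))}$, which is not separable. Over the reparameterized class it therefore does not in general return INPO's fixed point $r=w^k(x,y)-w^k(x,y')$. The $\pi_\pm^k$-weighted regularizer at $c=2$ is the wrong identification.

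The missing idea is to treat the two terms differently, using that the reparameterized reward is separable: $r(x,y,y')=g(x,y)-g(x,y')$ with $g=\eta\log\pi-\tau\log\pi_{\mathrm{ref}}-(\eta-\tau)\log\pi^k$. The quadratic term needs only the symmetry of $r^2$ noted above. In the linear term, each of $g(x,y)$ and $g(x,y')$ involves one response, so averaging the pairwise preference over the partner yields marginals:
\[
\EE_{y'\sim\pi^k}P(y\succ y'|x)=w^k(x,y),\qquad \EE_{y\sim\pi^k}P(y\succ y'|x)=1-w^k(x,y').
\]
Together with $\EE_{\pi^k}[w^k]=\tfrac12$, this gives $\EE_{\pi^k\times\pi^k}[(2P(y\succ y'|x)-1)r]=2(\EE_{\pi_+^k}r-\EE_{\pi_-^k}r)$; note the factor $2$. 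Hence
\[
\EE_{\lambda_p}\big[(r(x,y_w,y_l)-\tfrac12)^2\big]=-2\Big[\EE_{\pi_+^k}r-\EE_{\pi_-^k}r-\tfrac12\EE_{\pi^k\times\pi^k}[r^2]\Big]+\tfrac14,
\]
which is the AIL objective with $\sigma(t)=t$ and $\psi(r)=\tfrac c2\EE_{\pi^k\times\pi^k}[r^2]$ at $c=1$.

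That is exactly how the paper's proof reads the ``mixed'' regularizer. It starts from this objective and completes the square into the regression $\EE_{\pi^k\times\pi^k}[(h^k-(w^k(x,y)-w^k(x,y'))/\eta)^2]$. It then cites Proposition~6 of the INPO paper for the equivalence with the $\lambda_p$-sampled loss; the computation above is essentially the content of that cited step.

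Your fallback with pairwise weights $P(y\succ y'|x)$ and $P(y'\succ y|x)$ would make your Steps~2--3 go through verbatim, since those weights do sum to one. However, that proves a different statement from the one posed. Dropping the Bregman term, as in your Step~4 alternative, matches what the paper does.
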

\begin{remark}
The reward reparameterization defined in~\eqref{eqn:delta-r-inpo} does not directly arise in closed form from a direct mirror descent optimization as in~\eqref{eqn:two-stage-formulation}. Instead, it has an additional KL regularizer $D_{\text{KL}}(\pi\Vert\pi_{\text{ref}})$, reflecting the fact that INPO \citep{zhang2024iterative} formulates a constrained optimization problem with respect to the reference policy $\pi_{\text{ref}}$. The resulting formulation involves a paired response $(y, y')$ for canceling the partition function. Nevertheless, the overall AIL interpretation remains consistent.
\end{remark}

\noindent \textbf{ONPO.} ONPO \citep{zhang2025improving} extends INPO by replacing standard online mirror descent with optimistic online mirror descent. Under the assumption that $m_k=\mathbb{E}_{y'\sim\pi_{k-1}(\cdot|x)}[P(y\succ y')]$ is known, and by introducing an additional policy player, ONPO achieves an improved duality gap upper bound of $\mathcal{O}(1/K)$, in contrast to the standard $\mathcal{O}(1/\sqrt{K})$ rate of online mirror descent. By adopting the same assumption and augmenting Algorithm~\ref{alg:self-play-imitation} with an additional policy player, our framework can similarly strengthen the result in Theorem~\ref{thm:upper-dual} to an $\mathcal{O}(1/K)$ rate.

\section{Proof of Theorem~\ref{thm:upper-dual}}
\subsection{Key Lemmas}
We first introduce the following lemmas:

\begin{lemma}[One-Step Descent, \citealt{cai2020provably}]
    For two policy distributions $\pi^\star$ and $\pi$, and a reward function $r:\mathcal{X}\times\mathcal{Y}\rightarrow[-R_{\max},R_{\max}]$, it holds for $\pi'(\cdot|x)\propto\pi(\cdot|x)\cdot\exp(\eta\cdot r(x,\cdot))$ that:
    \begin{align*}
       \langle r(x, \cdot),\pi^\star(\cdot|x)-\pi(\cdot|x)\rangle \leq \frac{\eta R^2_{\max}}{2}+\eta^{-1}\cdot\left(D_{\text{KL}}(\pi^\star(\cdot|x)\Vert\pi(\cdot|x))-D_{\text{KL}}(\pi^\star(\cdot|x)\Vert\pi'(\cdot|x))\right)
    \end{align*}
\label{lem:one-step}
\end{lemma}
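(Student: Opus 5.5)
The plan is to compute the difference of the two KL divergences exactly from the closed form of $\pi'$, and then to bound the log-partition term by Hoeffding's lemma. Throughout, $x$ is fixed, so I drop it from the notation. I write $Z=\sum_{y}\pi(y)\exp(\eta r(y))=\EE_{y\sim\pi}[\exp(\eta r(y))]$. With this notation the update reads $\log\pi'(y)=\log\pi(y)+\eta r(y)-\log Z$ for every $y$ in the support of $\pi$.

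\textbf{Step 1: dispose of the degenerate case.} If $D_{\text{KL}}(\pi^\star\Vert\pi)=\infty$, the right-hand side is $+\infty$ and there is nothing to prove. This uses the fact that $\pi'$ and $\pi$ have the same support, because $\exp(\eta r)>0$, so $D_{\text{KL}}(\pi^\star\Vert\pi')$ is finite exactly when $D_{\text{KL}}(\pi^\star\Vert\pi)$ is. Otherwise $\pi^\star\ll\pi$, and both divergences are finite.

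\textbf{Step 2: exact identity.} Expand both divergences under $\pi^\star$ and use the closed form of $\log\pi'$:
\begin{align*}
D_{\text{KL}}(\pi^\star\Vert\pi)-D_{\text{KL}}(\pi^\star\Vert\pi')=\EE_{\pi^\star}\big[\log\pi'(y)-\log\pi(y)\big]=\eta\,\langle r,\pi^\star\rangle-\log Z.
\end{align*}
Dividing by $\eta$ gives
\begin{align*}
\langle r,\pi^\star-\pi\rangle=\eta^{-1}\big(D_{\text{KL}}(\pi^\star\Vert\pi)-D_{\text{KL}}(\pi^\star\Vert\pi')\big)+\eta^{-1}\big(\log Z-\eta\,\EE_{\pi}[r]\big).
\end{align*}

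\textbf{Step 3: bound the log-partition term.} This is the only real estimate. Under $y\sim\pi$, the random variable $r(y)$ lies in $[-R_{\max},R_{\max}]$, an interval of length $2R_{\max}$. Hoeffding's lemma therefore gives
\begin{align*}
\log\EE_{\pi}\big[e^{\eta r}\big]\le\eta\,\EE_\pi[r]+\frac{\eta^2(2R_{\max})^2}{8}=\eta\,\EE_\pi[r]+\frac{\eta^2R_{\max}^2}{2}.
\end{align*}
Plugging this into Step 2 yields the claimed bound with exactly the constant $\eta R_{\max}^2/2$.

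The main point to get right is this constant. Using the cruder inequalities $e^{u}\le 1+u+u^2$ and $\log(1+u)\le u$ would lose a factor, so Hoeffding's lemma on the interval $[-R_{\max},R_{\max}]$ is the step that matches the stated $\eta R^2_{\max}/2$. Everything else is algebraic bookkeeping.
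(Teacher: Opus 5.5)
Your proof is correct, but it bounds the key term by a different tool than the paper. The paper, following Cai et al., starts from the three-point identity $\eta\la r,\pi^\star-\pi'\ra=D_{\text{KL}}(\pi^\star\Vert\pi)-D_{\text{KL}}(\pi^\star\Vert\pi')-D_{\text{KL}}(\pi'\Vert\pi)$. It then pays $\eta\la r,\pi'-\pi\ra\le\eta R_{\max}\Vert\pi-\pi'\Vert_1$ (H\"older) to move from $\pi'$ back to $\pi$. Finally it uses Pinsker, $D_{\text{KL}}(\pi'\Vert\pi)\ge\Vert\pi-\pi'\Vert_1^2/2$, and maximizes $\eta R_{\max}t-t^2/2$ over $t$ to obtain $\eta^2R_{\max}^2/2$.

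Since $D_{\text{KL}}(\pi'\Vert\pi)=\eta\EE_{\pi'}[r]-\log Z$, the paper's residual $\eta\la r,\pi'-\pi\ra-D_{\text{KL}}(\pi'\Vert\pi)$ equals your $\log Z-\eta\EE_\pi[r]$. So the two decompositions coincide, and the proofs differ only in how this centered log-moment-generating function is bounded. You apply Hoeffding's lemma in one line; the paper goes through H\"older, Pinsker and completing the square.

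Your route is shorter. It is essentially the cumulant argument of Lemma~\ref{lem:kl-upper} ($\psi(0)=\psi'(0)=0$, $\psi''\le R_{\max}^2$) applied to $\psi(\eta)$ itself rather than to $\eta\psi'(\eta)-\psi(\eta)$. It also yields a constant depending only on the oscillation $\max r-\min r$ rather than on $\Vert r\Vert_\infty$; the paper's route would need to center $r$ first to get that. The paper's route buys generality: Pinsker is just the $1$-strong convexity of negative entropy with respect to $\Vert\cdot\Vert_1$, so the same steps work for any strongly convex mirror map. Hoeffding's lemma, by contrast, relies on the specific exponential-weights form of $\pi'$.

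There is one slip in Step 1. If $D_{\text{KL}}(\pi^\star\Vert\pi)=\infty$, your own support observation forces $D_{\text{KL}}(\pi^\star\Vert\pi')=\infty$ as well. The right-hand side is then $\infty-\infty$, not $+\infty$. The accurate remark is that the statement is only meaningful when these divergences are finite, which the paper tacitly assumes. Alternatively, interpret the difference as $\eta\la r,\pi^\star\ra-\log Z$, as your Step 2 identity suggests; then the bound holds in every case by the same Hoeffding step, for $\eta>0$.
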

\begin{proof}
For any function $r:\cX\times\cY\rightarrow\RR$ and distributions $\pi(\cdot|x),\pi'(\cdot|x) \in\Delta(\cY)$ that satisfy 
\begin{align*}
\pi'(\cdot|x)\propto \pi(\cdot|x)\cdot \exp\bigl(\eta\cdot r(x, \cdot)\bigr),
\end{align*}
we have 
\begin{align}
\eta\cdot \la r, \pi^\star - \pi' \ra 
&=  \la Z+\log(\pi'/\pi), \pi^\star-\pi' \ra \nonumber\\
&=\la Z, \pi^\star-\pi' \ra + \la \log(\pi^\star/\pi), \pi^\star \ra + \la \log(\pi'/\pi^\star), \pi^\star \ra  + \la \log(\pi'/\pi), -\pi' \ra\nonumber \\
&= D_{\text{KL}}(\pi^\star\,\|\,\pi) - D_{\text{KL}}(\pi^\star\,\|\,\pi') - D_{\text{KL}}(\pi'\,\|\,\pi).
\label{eqn:kl-decompose}
\end{align}
Here $Z: \cX \rightarrow \RR$ is a constant function defined by
\begin{align*}
Z(x) = \log\Bigl(\sum_{y\in\cY}\pi(y|x)\cdot\exp\bigl(\eta\cdot r(x,y)\bigr)\Bigr),
\end{align*}
which implies that $\la Z, \pi^\star-\pi' \ra = 0$ in~\eqref{eqn:kl-decompose} as $\pi'(\cdot|x),\pi^\star(\cdot|x) \in\Delta(\cY)$. Moreover, by~\eqref{eqn:kl-decompose} we have
\begin{align}
\eta \cdot \la r(x,\cdot), \pi^\star(\cdot) - \pi(\cdot|x) \ra
&= \eta \cdot \la r(x,\cdot), \pi^\star(\cdot) - \pi'(\cdot|x) \ra - \eta \cdot \la r(x,\cdot), \pi(\cdot|x) - \pi'(\cdot|x) \ra \nonumber\\
&  \le D_{\text{KL}}\bigl(\pi^\star(\cdot|x)\,\big\|\,\pi(\cdot|x)\bigr) - D_{\text{KL}}\bigl(\pi^\star(\cdot|x)\,\big\|\,\pi'(\cdot|x)\bigr) - D_{\text{KL}}\bigl(\pi'(\cdot|x)\,\big\|\,\pi(\cdot|x)\bigr) \nonumber\\
&\qquad+ \eta \cdot \|r(x,\cdot)\|_{\infty}\cdot \| \pi(\cdot|x) - \pi'(\cdot|x)\|_1
\label{eqn:1-norm}
\end{align}
for any state $x \in \cX$. Meanwhile, by Pinsker's inequality, it holds that 
\begin{align}
D_{\text{KL}}(\pi' \,\|\, {\pi})\ge \|{\pi}-\pi'\|^2_1/2.
\label{eqn:pinsker}
\end{align}
Combining~\eqref{eqn:1-norm},~\eqref{eqn:pinsker}, and the fact that $\|r(x,\cdot)\|_{\infty}\le R_{\max}$, with Lemma~\ref{lem:kl-upper}, for any state $x \in \cX$, we obtain
\begin{align*}
\eta \cdot \la r(x,\cdot), \pi^\star(\cdot) - \pi(\cdot|x) \ra 
& \le D_{\text{KL}}\bigl(\pi^\star(\cdot|x)\,\big\|\,\pi(\cdot|x)\bigr) - D_{\text{KL}}\bigl(\pi^\star(\cdot|x)\,\big\|\,\pi'(\cdot|x)\bigr) - \| \pi(\cdot|x) - \pi'(\cdot|x)\|_1^2/2 \\
&\qquad + \eta R_{\max} \cdot \| \pi(\cdot|x) - \pi'(\cdot|x)\|_1 \\
& \le D_{\text{KL}}\bigl(\pi^\star(\cdot|x)\,\big\|\,\pi(\cdot|x)\bigr) - D_{\text{KL}}\bigl(\pi^\star(\cdot|x)\,\big\|\,\pi'(\cdot|x)\bigr)  + R^2_{\max}\eta^2/2,
\end{align*}
which concludes the proof.
\end{proof}

\begin{lemma}
If the $r$-player optimizes $r^k$ using Online Mirror Descent (OMD) against a $\pi$-player updated via Line 5 of Algorithm~\ref{alg:self-play-imitation}, and letting $BR^2_{\max}=\max_{r\in\mathcal{R}}D_{f}(r^\star,r)$, the regret of the $r$-player is bounded by:
\begin{align*}
    \max_{r\in\mathcal{R}} \sum_{k=1}^K \langle r, \pi^\star-\pi^k \rangle - \sum_{k=1}^K \langle r^k, \pi^\star-\pi^k \rangle \leq \mathcal{O}(R^2_{\max}B\sqrt{K})
\end{align*}
\label{lem:omd}
\end{lemma}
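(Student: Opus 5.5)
The plan is to recognize that the $r$-player faces a sequence of \emph{linear} losses. Its payoff at round $k$ is $\ell_k(r)=\langle r,\pi^\star-\pi^k\rangle$, meaning $\EE_{x\sim\rho}\big[\sum_y r(x,y)(\pi^\star(y|x)-\pi^k(y|x))\big]$. With $\sigma(t)=t$, Line 4 of Algorithm~\ref{alg:self-play-imitation} is exactly a regularized online mirror descent step with mirror map $f$ and step size $1/\zeta$:
\[
r^k=\argmax_{r\in\mathcal R}\ \langle r,g_k\rangle-\zeta D_f(r,r^{k-1}),
\qquad g_k=\pi^\star-\pi^k .
\]
Here $\psi$ is folded into the constraint set $\mathcal R$, as explained in the remark after Theorem~\ref{thm:upper-dual}. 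I would therefore invoke the standard OMD regret analysis, in close analogy with Lemma~\ref{lem:one-step} but on the reward side.

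\textbf{One-step inequality.} First, write the first-order optimality condition of the Bregman-proximal step. For any comparator $r\in\mathcal R$,
\[
\langle g_k, r-r^k\rangle\le \zeta\big(D_f(r,r^{k-1})-D_f(r,r^k)-D_f(r^k,r^{k-1})\big),
\]
which follows from the three-point identity for Bregman divergences. Next, bound the gain from moving past $r^{k-1}$ by writing
\[
\langle g_k,r-r^{k-1}\rangle=\langle g_k,r-r^k\rangle+\langle g_k,r^k-r^{k-1}\rangle .
\]
Use H\"older together with strong convexity of $f$, say $1$-strong convexity in a norm whose dual norm satisfies $\|g_k\|_*\lesssim R_{\max}$ after the scaling $D_f/R_{\max}^2$. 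This gives
\[
\langle g_k,r^k-r^{k-1}\rangle-\zeta D_f(r^k,r^{k-1})\le \frac{\|g_k\|_*^2}{2\zeta}.
\]

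\textbf{Telescoping.} Summing over $k$ and telescoping the Bregman terms yields
\[
\sum_k\langle r-r^k,g_k\rangle\le \zeta D_f(r,r^0)+\frac{K\,G^2}{2\zeta}+(\text{shift term}),
\]
where $G$ is the dual-norm bound on $g_k$. The shift term converts the indexing of $r^{k-1}$ versus $r^k$; since the losses are linear it contributes at most $\sum_k\|g_k\|_*\|r^k-r^{k-1}\|$. That quantity is again controlled by strong convexity and is absorbed into the same $K G^2/\zeta$ bound.

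\textbf{Parameter choice.} By realizability (Assumption~\ref{ass:realize}) the relevant comparator is $r^\star$. Because the objective is linear in $r$, the maximizer lies in $\mathcal R$, and $\max_r D_f(r^\star,r)\le BR_{\max}^2$. Plugging in $\zeta=\sqrt K/(BR_{\max}^2)$ gives $\zeta D_f\le\sqrt K$ and $KG^2/\zeta=\sqrt K\,BR_{\max}^2G^2$. If I normalize $G$ (which is $\le 2$ in $\ell_1$ or TV sense since $g_k$ is a difference of distributions), both terms are $\mathcal O(R_{\max}^2B\sqrt K)$ up to the constant-level bookkeeping implied by the scaling in $B$.

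\textbf{Main obstacle.} The hard step is the norm/strong-convexity bookkeeping. I have to pick the norm in which $f$ is strongly convex so that $\|\pi^\star-\pi^k\|_*$ is bounded, for instance $\ell_1$-type, which needs $f$ strongly convex in $\ell_\infty$ over rewards bounded by $R_{\max}$. I also have to make sure the comparator term is stated as $\max_r D_f(r,r^0)$ rather than $D_f(r^\star,\cdot)$. My first approach would be to take $f(r)=\tfrac12\EE[r^2]$, consistent with Proposition~\ref{prop:least-square}, so that $D_f$ is a squared norm, and then absorb the resulting constants into $B$.
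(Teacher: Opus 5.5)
Your route is the paper's: its proof simply invokes the standard OMD regret bound for the linear gains $g^k=\pi^\star-\pi^k$ with $\zeta=\sqrt K/(BR^2_{\max})$. The step of your elaboration that does not go through is the one you flag as the main obstacle. You carry a stability term $KG^2/(2\zeta)$ and a ``shift term''. Controlling them forces you to find a norm in which $f$ is strongly convex while $\|\pi^\star-\pi^k\|_*\le 2$.

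Your proposed choice fails. The function $f(r)=\tfrac12\EE_\mu[r^2]$ is strongly convex w.r.t.\ $\|\cdot\|_\infty$ only with modulus $\min_{x,y}\mu(x,y)$; test $r-r'$ on a single indicator to see this. That modulus is negligible or zero over an LLM response space. Moreover, $B$ is a fixed quantity defined through $D_f$, so there is nothing to absorb the constant into. If you use the $L^2(\mu)$ norm instead, the dual norm of $\pi^\star-\pi^k$ becomes a $\chi^2$-type quantity relative to $\mu$. That quantity has no dimension-free bound when $\mu$ is fixed across iterations, which telescoping requires.

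Fortunately that term is not needed. Line 4 computes $r^k$ from $\pi^k$, so the reward player moves second; a pure best response ($\zeta\to0$) would already have nonpositive regret. Your one-step inequality therefore already concerns exactly the quantity in the lemma, and summing it gives
\[
\sum_{k=1}^K\langle r-r^k,\pi^\star-\pi^k\rangle\le\zeta D_f(r,r^0)-\zeta D_f(r,r^K)-\zeta\sum_{k=1}^K D_f(r^k,r^{k-1})\le\zeta D_f(r,r^0),
\]
with no gradient-norm term at all. Your detour through $r^{k-1}$ analyses a lagged player who commits before seeing $\pi^k$, which is not the player in the lemma.

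Two caveats remain, and the paper glosses over both as well.
\begin{itemize}
\item The maximizing comparator is $\hat r=\argmax_{r\in\mathcal R}\langle r,\pi^\star-\bar\pi\rangle$, and realizability does not identify it with $r^\star$. You therefore need $BR^2_{\max}$ to bound $\max_{r\in\mathcal R}D_f(r,r^0)$, not $\max_{r}D_f(r^\star,r)$; mind the argument order as well.
\item With the stated $\zeta$ you get $\zeta D_f(r,r^0)\le\sqrt K$. This is $\mathcal O(R^2_{\max}B\sqrt K)$ only under the normalization $R^2_{\max}B\gtrsim1$, which should be stated explicitly rather than left as bookkeeping.
\end{itemize}
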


\begin{proof}
The $r$-player runs online mirror descent to maximize the sequence of objectives $\langle r, \pi^\star-\pi^k\rangle$. The gradients are $g^k \coloneqq \nabla_r \langle r,\pi^\star-\pi^k\rangle = \pi^\star-\pi^k$. For a linear objective and bounded Bregman divergence $\max_{r\in\mathcal{R}} D_f(r^\star,r)= R^2_{\max}B$, selecting the step size $\zeta=\sqrt{K}/(BR^2_{\max})$ yields the standard OMD regret bound for a maximizer:
$$
\max_{r\in\mathcal{R}} \sum_{k=1}^K \langle r, \pi^\star-\pi^k\rangle - \sum_{k=1}^K \langle r^k, \pi^\star-\pi^k\rangle \le \mathcal{O}(R^2_{\max}B\sqrt{K}),
$$
which concludes the proof.
\end{proof}

\subsection{Supporting Lemmas}

\begin{lemma}
\label{lem:kl-upper}
Let $\pi(\cdot|x) \in \Delta(\mathcal{Y})$ be a probability distribution over a discrete action set $\mathcal{Y}$ conditioned on state $x$, and let $r:\mathcal{X}\times\mathcal{Y}\to [-R_{\max},R_{\max}]$. Define the updated policy
\[
\pi'(y|x) \propto \pi(y|x) \exp(\eta r(x,y)), \quad \forall y \in \mathcal{Y}.
\]
Then the KL divergence between $\pi'$ and $\pi$ satisfies
\[
D_{\text{KL}}(\pi'(\cdot|x) \| \pi(\cdot|x)) \le \frac{1}{2} \eta^2R^2_{\max}.
\]
\end{lemma}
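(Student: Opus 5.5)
Fix $x$ and abbreviate $\pi=\pi(\cdot|x)$, $\pi'=\pi'(\cdot|x)$, $r=r(x,\cdot)$. The plan is to write $D_{\text{KL}}(\pi'\|\pi)$ through the log-partition function and then apply Hoeffding's lemma.

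\textbf{Step 1: express the KL through the log-partition.} Let $Z(x)=\log\sum_{y}\pi(y|x)\exp(\eta r(x,y))$, so that $\log(\pi'/\pi)=\eta r - Z(x)$ pointwise. Taking the expectation under $\pi'$ gives
\begin{align*}
D_{\text{KL}}(\pi'\|\pi)=\eta\,\EE_{\pi'}[r]-Z(x).
\end{align*}
Define $A(\lambda)=\log\EE_{\pi}[e^{\lambda r}]$. Then $Z(x)=A(\eta)$, $A(0)=0$, and $A'(\eta)=\EE_{\pi'}[r]$, since $\pi'$ is exactly the exponential tilt of $\pi$ at parameter $\eta$. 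Hence
\begin{align*}
D_{\text{KL}}(\pi'\|\pi)=\eta A'(\eta)-A(\eta)+A(0).
\end{align*}

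\textbf{Step 2: second-order Taylor bound.} The function $A$ is the cumulant generating function of $r$ under $\pi$. Its second derivative $A''(\lambda)$ is the variance of $r$ under the $\lambda$-tilted distribution. Since $r$ takes values in $[-R_{\max},R_{\max}]$, Popoviciu's inequality gives $A''(\lambda)\le (2R_{\max})^2/4=R_{\max}^2$.

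Taylor-expand $A(0)$ around $\eta$ with integral remainder:
\begin{align*}
A(0)=A(\eta)-\eta A'(\eta)+\int_0^\eta \lambda\, A''(\lambda)\,d\lambda .
\end{align*}
Substituting into Step 1 yields
\begin{align*}
D_{\text{KL}}(\pi'\|\pi)=\int_0^\eta \lambda A''(\lambda)\,d\lambda\le R_{\max}^2\,\eta^2/2,
\end{align*}
which is the claim. The case $\eta<0$ is handled symmetrically, with $\eta^2$ unchanged.

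\textbf{Main obstacle.} The only delicate point is getting the constant $1/2$ rather than $2$. This requires using the sharp variance bound $R_{\max}^2$ for a variable supported on an interval of length $2R_{\max}$, instead of the cruder $\EE[r^2]\le R_{\max}^2$ plus a mean term. Equivalently, one can invoke Hoeffding's lemma in its Bregman form, $A(0)-A(\eta)-A'(\eta)(0-\eta)\le \eta^2(b-a)^2/8$ with $b-a=2R_{\max}$.

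Finiteness of $\mathcal{Y}$ guarantees that all sums are finite and that $A$ is smooth, so the differentiation and the Taylor expansion are justified.
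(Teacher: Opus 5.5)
Your proof is correct and follows essentially the same route as the paper: both write $D_{\text{KL}}(\pi'\|\pi)=\eta A'(\eta)-A(\eta)=\int_0^\eta t\,A''(t)\,dt$ for the (centered or uncentered) cumulant generating function of $r$ under $\pi$, and bound the tilted variance $A''$ by $R_{\max}^2$. One small correction to your closing remark: Popoviciu is not needed and the constant $1/2$ is not delicate, because $\mathrm{Var}(r)=\EE[r^2]-(\EE r)^2\le\EE[r^2]\le R_{\max}^2$ already gives the same variance bound.
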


\begin{proof}
Fix $x$ and omit its notation for simplicity. 
Let
\[
Z=\mathbb{E}_{\pi}\!\left[e^{\eta r}\right]
=\sum_{y\in\cY} \pi(y|x)e^{\eta r(x,y)}, 
\quad f(\eta)=\log Z.
\]
The updated distribution can be written as
\[
\pi'(y|x)=\frac{\pi(y|x)e^{\eta r(x,y)}}{Z}.
\]
The KL divergence can be expressed as
\[
D_{\text{KL}}(\pi'\|\pi)
=\sum_{y\in\cY} \pi'(y|x)\log\frac{\pi'(y|x)}{\pi(y|x)}
=\eta\,\mathbb{E}_{\pi'}[r]-\log Z.
\]
Let $\mu=\mathbb{E}_{\pi}[r]$, and define the centered cumulant generating function
\[
\psi(\eta)=f(\eta)-\eta\mu
=\log\mathbb{E}_{\pi}\!\left[e^{\eta (r-\mu)}\right].
\]
It follows that
\[
\psi(0)=0, \quad 
\psi'(\eta)=\mathbb{E}_{\pi'}[r]-\mu, \quad 
\psi''(\eta)=\mathrm{Var}_{\pi'}(r).
\]
Thus,
\[
D_{\text{KL}}(\pi'\|\pi)
=\eta\psi'(\eta)-\psi(\eta)
=\int_0^\eta t\,\psi''(t)\,dt.
\]
Since $\|r\|_\infty\le R_{\max}$, we have $\psi''(t)=\mathrm{Var}_{\pi_t}(r)\le R_{\max}^2$ for all $t$. 
Therefore,
\[
D_{\text{KL}}(\pi'\|\pi)
\le \int_0^\eta t R_{\max}^2\,dt
= \frac{1}{2}\eta^2 R_{\max}^2.
\]
\end{proof}

\subsection{Detailed Proof}
\begin{proof}[Proof of Theorem~\ref{thm:upper-dual}]
According to Definition~\ref{def:dual-gap}, the duality gap is defined as:
\begin{align*}
    \mathrm{DualGap} &= \max_{r\in\mathcal{R}}{J}(\bar\pi,r) - \min_{\pi\in\Pi}{J}(\pi,\bar r) \\
    &= \max_{r\in\mathcal{R}} \langle r, \pi^\star-\bar\pi \rangle + \max_{\pi\in\Pi} \langle \bar r, \pi-\pi^\star \rangle.
\end{align*}
Multiplying by $K$ and using the definitions $\bar\pi = \frac{1}{K}\sum_{k=1}^K \pi^k$ and $\bar r = \frac{1}{K}\sum_{k=1}^K r^k$, we can decompose $K \cdot \mathrm{DualGap}$ by adding and subtracting $\sum_{k=1}^K \langle r^k, \pi^\star - \pi^k \rangle$:
\begin{align}
    K \cdot \mathrm{DualGap} &= \max_{r\in\mathcal{R}} \sum_{k=1}^K \langle r, \pi^\star - \pi^k \rangle + \max_{\pi\in\Pi} \sum_{k=1}^K \langle r^k, \pi - \pi^\star \rangle \nonumber\\
    &= \underbrace{\left( \max_{r\in\mathcal{R}} \sum_{k=1}^K \langle r, \pi^\star - \pi^k \rangle - \sum_{k=1}^K \langle r^k, \pi^\star - \pi^k \rangle \right)}_{\text{Regret}_r} \nonumber\\
    &\quad + \underbrace{\left( \max_{\pi\in\Pi} \sum_{k=1}^K \langle r^k, \pi - \pi^\star \rangle + \sum_{k=1}^K \langle r^k, \pi^\star - \pi^k \rangle \right)}_{\text{Regret}_\pi}.
\label{eqn:dual-gap-decompose}
\end{align}
Notice that the two inner terms in the second bracket combine such that $\langle r^k, \pi - \pi^\star \rangle + \langle r^k, \pi^\star - \pi^k \rangle = \langle r^k, \pi - \pi^k \rangle$. Thus, it exactly simplifies to the $\pi$-player's regret:
\begin{align*}
    \text{Regret}_\pi = \max_{\pi\in\Pi} \sum_{k=1}^K \langle r^k, \pi - \pi^k \rangle.
\end{align*}

By Lemma~\ref{lem:omd}, the $r$-player's regret is bounded by $\mathcal{O}(R^2_{\max}B\sqrt{K})$. 

For the $\pi$-player, treating $\pi \in \Pi$ as the comparator in Lemma~\ref{lem:one-step} and taking $\eta=\beta^{-1}$, we sum the one-step descent bound over $k=1,\dots,K$:
\begin{align*}
    \sum_{k=1}^K \langle r^k, \pi - \pi^k \rangle \leq \sum_{k=1}^K \frac{R^2_{\max}}{2\beta} + \beta \sum_{k=1}^K \left[ D_{\text{KL}}(\pi(\cdot|x)\Vert\pi^{k}(\cdot|x)) - D_{\text{KL}}(\pi(\cdot|x)\Vert\pi^{k+1}(\cdot|x)) \right].
\end{align*}
Taking the maximum over $\pi \in \Pi$, bounding the telescoping sum by $D = \max_{\pi\in\Pi} D_{\text{KL}}(\pi^\star\Vert\pi)$, and selecting $\beta=\sqrt{K}/D$, we obtain:
\begin{align*}
    \text{Regret}_\pi \leq \frac{K R^2_{\max}}{2\beta} + \beta D = \mathcal{O}(D \cdot R^2_{\max}\sqrt{K}).
\end{align*}

Substituting the bounds for $\text{Regret}_r$ and $\text{Regret}_\pi$ back into \eqref{eqn:dual-gap-decompose} yields:
\begin{align*}
    K \cdot \mathrm{DualGap} \leq \mathcal{O}(R^2_{\max}B\sqrt{K}) + \mathcal{O}(D \cdot R^2_{\max}\sqrt{K}) = \mathcal{O}\left( (D+B)R^2_{\max}\sqrt{K} \right).
\end{align*}
Dividing both sides by $K$, we obtain the final bound:
\begin{align*}
    \mathrm{DualGap} \leq \mathcal{O}\left(\frac{(D+B)R^2_{\max}}{\sqrt{K}}\right),
\end{align*}
which concludes the proof.
\end{proof}
\section{Proof of Proposition~\ref{prop:bounded-reward}}
\begin{proof}[Proof of Proposition~\ref{prop:bounded-reward}]
    We first proof the boundedness of the optimal reward. This proof follows the proof of Proposition A.2 in \citet{al2023ls}. For the mixed $\chi^2$ divergence, by the definition of the variational form of Pearson $\chi^2$ divergence, we have:
    \begin{align*}
        &2 D_{\chi^2}(\pi^\star\Vert(\pi+\pi^\star)/2)\\
        &=\max_{r}~2\EE_\rho\left(\EE_{\pi^\star}[r(x,y)]-\EE_{\pi}[r(x,y)]-\frac{c}{2}\EE_{\pi^\star}[(r(x,y))^2]-\frac{c}{2}\EE_{\pi}[(r(x,y))^2]\right)\\
        &=\max_r\int_{\mathcal{X}}\int_{\mathcal{Y}}\pi^\star(y|x)\rho(x)\left(r(x,y)-\frac{c}{2}r(x,y)^2\right)-\pi(y|x)\rho(x)\left(r(x,y)+\frac{c}{2}r(x,y)^2\right)\text{d}x\text{d}y.
    \end{align*}
    For any $a,b\in\mathbb{R}^+/\{0\}$, the function $r\rightarrow a(r-\frac{c}{2}r^2)-b(r+\frac{c}{2}r^2)$ reaches its maximum at $\frac{1}{c}\frac{a-b}{a+b}$, which is bounded by $[-1/c,1/c]$. By setting $a=\rho(x)\pi^\star(y|x)$ and $b=\rho(x)\pi(y|x)$, we can obtain the closed-form of the optimal reward under mixed $\chi^2$ divergence matching:
    \begin{align*}
        r^\star(x,y)=\frac{1}{c}\frac{\pi^\star(y|x)-\pi(y|x)}{\pi^\star(y|x)+\pi(y|x)}\mathbb{I}\big(\pi^\star(y|x) \neq 0 \wedge \pi(y|x) \neq 0\big),
    \end{align*}
    which is bounded by $[-1/c,1/c]$. Furthermore, we prove the boundedness of the mixed Pearson $\chi^2$ divergence $D_{\chi^2}(\pi^\star\Vert(\pi^\star+\pi)/2)$, i.e., $0\leq 2D_{\chi^2}(\pi^\star\Vert(\pi^\star+\pi)/2)\leq \frac{1}{c}$. The proof of this result is adapted from the proof of Proposition A.3 in \citet{al2023ls}. We consider the following algebraic transformation by plugging in the optimal reward formulation:
    \begin{align*}
        &2 D_{\chi^2}(\pi^\star\Vert(\pi+\pi^\star)/2)\\
        &=\int_{\mathcal{X}}\int_{\mathcal{Y}}\pi^\star(y|x)\rho(x)\left(r^\star(x,y)-\frac{c}{2}r^\star(x,y)^2\right)-\pi(y|x)\rho(x)\left(r^\star(x,y)+\frac{c}{2}r^\star(x,y)^2\right)\text{d}x\text{d}y\\
        &=\int_{\mathcal{X}}\int_{\mathcal{Y}}\pi^\star\rho\left(\frac{1}{c}\left(\frac{\pi^\star-\pi}{\pi^\star+\pi}\right)-\frac{1}{2c}\left(\frac{\pi^\star-\pi}{\pi^\star+\pi}\right)^2\right)\\
        &\qquad-\pi\rho\left(\frac{1}{c}\left(\frac{\pi^\star-\pi}{\pi^\star+\pi}\right)+\frac{1}{2c}\left(\frac{\pi^\star-\pi}{\pi^\star+\pi}\right)^2\right)\text{d}x\text{d}y\\
        &=\int_{\mathcal{X}}\int_{\mathcal{Y}}\frac{\rho(x)}{2c}\frac{(\pi^\star(y|x)-\pi(y|x))^2}{\pi^\star(y|x)+\pi(y|x)}\text{d}x\text{d}y\\
        &=\frac{1}{2c}\EE_\rho\left(\EE_{\pi^\star}\left[\frac{\pi^\star}{\pi^\star+\pi}\right]+\EE_{\pi}\left[\frac{\pi}{\pi^\star+\pi}\right]+\EE_{\pi^\star}\left[\frac{-2\pi}{\pi^\star+\pi}\right]\right)\leq\frac{1}{c},
    \end{align*}
    which concludes the proof.
\end{proof}
\section{Proof of Proposition~\ref{prop:ne}}
\begin{proof}
    By Definition of the optimization problem in~\eqref{eqn:ail-formulation}.
\end{proof}
\section{Proof of Proposition~\ref{prop:least-square}}
\begin{proof}[Proof of Proposition~\ref{prop:least-square}]
    Consider the reward update rule:
    \begin{align*}
        ( r)^k=\text{argmax}_{ r}~\mathcal{J}( r)\coloneq\EE_{\rho}\left[\sigma(\EE_{\pi^\star}~ r(x,y)-\EE_{\pi^{k}}~ r(x,y))-\psi( r,( r)^{k-1})\right],
    \end{align*}
    where $\psi( r,( r)^{k-1})=\zeta D_{f}( r,( r)^{k-1})+c\alpha\cdot\EE_{\pi^\star}[( r(x,y))^2]+c(1-\alpha)\cdot\EE_{\pi}[( r(x,y))^2]$ is the Bregman divergence constrained convex regularizer, and $\sigma(t)=t$ is the identical link function. By simple algebraic manipulation:
    \begin{align*}
        &\mathcal{J}(r)\\
        &=\EE_{\rho}\left[\EE_{\pi^\star}~ r(x,y)-\EE_{\pi^{k}}~ r(x,y)-\psi( r,( r)^{k-1})\right]\\
        &=\EE_{\rho}\left[\EE_{\pi^\star}~ r(x,y)-\EE_{\pi^{k}}~ r(x,y)-c\alpha\cdot\EE_{\pi^\star}[( r(x,y))^2]-c(1-\alpha)\cdot\EE_{\pi^k}[( r(x,y))^2]\right]\\
        &\qquad-\EE_\rho\zeta D_{f}( r,( r)^{k-1})\\
        &=\EE_{\rho}c\alpha\left[\frac{1}{c\alpha}\EE_{\pi^\star}~ r(x,y)-\cdot\EE_{\pi^\star}[( r(x,y))^2]\right]+\EE_\rho c(1-\alpha)\left[-\EE_{\pi^k}[( r(x,y))^2]-\frac{1}{c(1-\alpha)}\EE_{\pi^{k}}~ r(x,y)\right]\\
        &\qquad-\EE_\rho\zeta D_{f}( r,( r)^{k-1})\\
        &=-\EE_{\rho,\pi^\star}~c\alpha~\left[ r(x,y)-\frac{1}{2c\alpha}\right]^2+\frac{1}{4c\alpha}-\EE_{\rho,\pi^k}~c(1-\alpha)\left[ r(x,y)+\frac{1}{2c(1-\alpha)}\right]^2+\frac{1}{4c(1-\alpha)}\\
        &\qquad-\EE_\rho\zeta D_{f}( r,( r)^{k-1}).
    \end{align*}
    Turning the maximization to minimization:
    \begin{align*}
        &\text{argmax}_{ r}~\mathcal{J}( r)\\
        &=\text{argmin}_{ r}~\mathcal{L}( r)\\
        &\coloneq\EE_{\rho,\pi^\star}~c\alpha~\left[ r(x,y)-\frac{1}{2c\alpha}\right]^2+\frac{1}{4c\alpha}+\EE_{\rho,\pi^k}~c(1-\alpha)\left[ r(x,y)+\frac{1}{2c(1-\alpha)}\right]^2+\frac{1}{4c(1-\alpha)}\\
        &\qquad+\EE_\rho\zeta D_{f}( r,( r)^{k-1}),
    \end{align*}
    which is equivalent to:
    \begin{align*}
        \text{argmin}_{ r}\EE_{\rho,\pi^\star}~\alpha~\left[ r(x,y)-\frac{1}{2c\alpha}\right]^2+\EE_{\rho,\pi^k}~(1-\alpha)\left[ r(x,y)+\frac{1}{2c(1-\alpha)}\right]^2+\EE_\rho~\zeta' D_{f}( r,( r)^{k-1}),
    \end{align*}
    where $\zeta':=\zeta/c$ and removing the constants that do not affect the learning objective. Taking $D_f(x,x')=\frac{1}{2}\Vert x-x'\Vert^2$ over both data generated by $\pi^\star$ and $\pi^k$ and plugging in the reparameterization of $ r$, i.e, $ r=\beta\log(\pi/\pi^k)$ will lead to the final objective:
    \begin{align*}
        \text{argmin}_{\pi}&\EE_{\rho,\pi^\star}~\alpha~\left[\beta\log\frac{\pi(y|x)}{\pi^k(y|x)}-\frac{1}{2c\alpha}\right]^2+\EE_{\rho,\pi^k}~(1-\alpha)\left[\beta\log\frac{\pi(y|x)}{\pi^k(y|x)}+\frac{1}{2c(1-\alpha)}\right]^2\\
        &+\EE_{\rho,\pi^\star, \pi^k}~\zeta''\left[\log\frac{\pi(y|x)}{\pi^{k}(y|x)}\right]^2.
    \end{align*}
    Taking $\zeta'':=\beta^2\zeta/c$ concludes the proof.
\end{proof}
\section{Proof of Proposition~\ref{prop:sppo}}
\begin{proof}
From Proposition~\ref{prop:sppo}, we consider the general preference alignment formulation
with the reparameterized reward $r$ defined in~\eqref{eqn:reward-mapping}. For each $x\sim\rho(x)$, we sample
$y\sim\pi^k(\cdot|x)$ and denote $w^k(x,y):=P(y\succ\pi^k\mid x)$.
The reward-player objective is
\begin{align}
    \max_{r}\;
    \EE_{\rho}\Big[
    \EE_{\pi_+^k}r(x,y)
    -
    \EE_{\pi_-^k}r(x,y)
    -
    \psi( r,r^{k-1})
    \Big].
\end{align}

Taking balanced sampling with $\alpha=0.5$, the mixed $\chi^2$ divergence regularizer without
the Bregman divergence $D_f(r,r^{k-1})$ reduces to
\begin{align}
    \phi(r,r^{k-1})
    =
    \phi(r)
    =
    \frac{c}{2}
    \EE_{\pi_+^k}\!\big[(r(x,y))^2\big]
    +
    \frac{c}{2}
    \EE_{\pi_-^k}\!\big[(r(x,y))^2\big].
\end{align}

By simple algebraic transformation,
\begin{align*}
    &
    \EE_{\pi_+^k}r(x,y)
    -
    \EE_{\pi_-^k}r(x,y)
    -
    \psi(r)
    \\
    &=
    \EE_{y\sim\pi^k(\cdot|x)}
    \big[
    w^k(x,y)r(x,y)
    -
    (1-w^k(x,y))r(x,y)
    \big]
    -
    c\EE_{y\sim\pi^k(\cdot|x)}\big[(r(x,y))^2\big]
    \\
    &=
    \EE_{y\sim\pi^k(\cdot|x)}
    \big[
    (2w^k(x,y)-1)r(x,y)
    -
    c(r(x,y))^2
    \big].
\end{align*}

Discarding constant terms that do not affect the optimization, the above
objective is equivalent to
\begin{align}
    \min_{r}\;
    \EE_{y\sim\pi^k(\cdot|x)}
    \Big[
    r(x,y)
    -
    \frac{2w^k(x,y)-1}{2c}
    \Big]^2.
    \label{eq:sppo-ls}
\end{align}

Substituting the reparameterized reward in~\eqref{eqn:reward-mapping},
\begin{align*}
r(x,y)
:=
\beta\log\frac{\pi(y|x)}{\pi^k(y|x)},
\end{align*}
the minimization in~\eqref{eq:sppo-ls} can be written as
\begin{align}
    \min_{\pi}\;
    \mathcal{L}(\pi)
    :=
    \EE_{y\sim\pi^k(\cdot|x)}
    \left[
    \log\frac{\pi(y|x)}{\pi^k(y|x)}
    -
    \frac{1}{c\beta}\Big(w^k(x,y)-\tfrac{1}{2}\Big)
    \right]^2.
\end{align}

The original SPPO algorithm~\citep{wu2024self} sets $c=1$ for the $\chi^2$
regularization, which completes the proof.
\end{proof}
\section{Proof of Proposition~\ref{prop:inpo}}
\begin{proof}
From Proposition~\ref{prop:inpo}, we consider the general preference alignment formulation
with the reparameterized reward $r$ defined in~\eqref{eqn:delta-r-inpo}. For each $x\sim\rho(x)$, we sample
$(y,y')\sim\pi^k(\cdot|x)\times\pi^k(\cdot|x)$ and define
$w^k(x,y):=P(y\succ\pi^k\mid x)$.
The reward-player objective is
\begin{align}
    \max_{r}\;
    \EE_{\rho}\Big[
    \EE_{(y,y')\sim\pi^k\times\pi^k}
    \big[(w^k(x,y)-w^k(x,y'))r(x,y,y')\big]
    -\psi(r,r^{k-1})
    \Big].
\end{align}

Taking balanced sampling with $\alpha=0.5$, the mixed $\chi^2$ divergence regularizer without
Bregman divergence $D_f(r,r^{k-1})$ reduces to
\begin{align}
    \psi(r,r^{k-1})
    =
    \psi(r)
    =
    \frac{c}{2}
    \EE_{(y,y')\sim\pi^k\times\pi^k}
    \big[r(x,y,y')^2\big].
\end{align}

By simple algebraic transformation,
\begin{align*}
    &
    \EE_{(y,y')\sim\pi^k\times\pi^k}
    \big[(w^k(x,y)-w^k(x,y'))r(x,y,y')\big]
    -\psi(r)
    \\
    &=
    -\frac{c}{2}
    \EE_{(y,y')\sim\pi^k\times\pi^k}
    \Big[
    r(x,y,y')
    -
    \frac{w^k(x,y)-w^k(x,y')}{c}
    \Big]^2
    +\text{const}.
\end{align*}

Discarding constant terms that do not affect the optimization and setting $c=1$,
the reward-player optimization is equivalent to
\begin{align}
    \min_{r}\;
    \mathcal{L}(r)
    :=
    \EE_{(y,y')\sim\pi^k\times\pi^k}
    \Big[
    r(x,y,y')
    -
    \big(w^k(x,y)-w^k(x,y')\big)
    \Big]^2.
    \label{eqn:delta-r-obj-inpo}
\end{align}

We further substitute the reward reparameterization in~\eqref{eqn:delta-r-inpo},
\begin{align*}
r(x,y,y')
=
\eta\log\frac{\pi(y|x)}{\pi(y'|x)}
-
\tau\log\frac{\pi_{\mathrm{ref}}(y|x)}{\pi_{\mathrm{ref}}(y'|x)}
-
(\eta-\tau)\log\frac{\pi^k(y|x)}{\pi^k(y'|x)}.
\end{align*}

The minimization in~\eqref{eqn:delta-r-obj-inpo} can thus be written as
\begin{align}
    \min_{\pi}\;
    \mathcal{L}(\pi)
    :=
    \EE_{(y,y')\sim\pi^k\times\pi^k}
    \left[
    h^k(\pi,x,y,y')
    -
    \frac{w^k(x,y)-w^k(x,y')}{\eta}
    \right]^2,
\end{align}
where
\begin{align*}
h^k(\pi,x,y,y')
=
\log\frac{\pi(y|x)}{\pi(y'|x)}
-
\frac{\tau}{\eta}\log\frac{\pi_{\mathrm{ref}}(y|x)}{\pi_{\mathrm{ref}}(y'|x)}
-
\frac{\eta-\tau}{\eta}\log\frac{\pi^k(y|x)}{\pi^k(y'|x)}.
\end{align*}

By Proposition~6 in \citet{zhang2024iterative}, minimizing the above objective
recovers the policy update used in INPO, which completes the proof.
\end{proof}
\section{Linear SPIN as TV Distance Minimization}
\label{sec:linear-spin-tv}
In this section, we show that linear SPIN is equivalent to minimizing the total variation (TV) distance between the model distribution and the expert data distribution. Choosing $\psi(r)=0$ if $|r|\leq R_{\max}$ and $\psi(r)=\infty$ otherwise, setting $\sigma(t)=t$,~\eqref{eqn:ail-formulation} is equivalent to:
\begin{align*}
    \max_{r\in\mathcal{R}} \min_{\pi\in\Pi}\underset{x\sim\rho(x)}{\EE}\left(\underset{y\sim\pi^\star}{\EE}r(x,y)-\underset{y\sim\pi}{\EE}r(x,y)\right)\mathbb{I}(r\leq R_{\max}),
\end{align*}
where $R_{\max}$ can be arbitrarily large since original SPIN doesn't constrain the reward magnitude explicitly. By leveraging the reward reparameterization in~\eqref{eqn:reward-mapping}:
\begin{align*}
    r(x,y)=\beta\log\left(\frac{\pi(y|x)}{\pi^k(y|x)}\right),
\end{align*}
it recovers the identical version of SPIN. By algebraic transformation:
\begin{align}
\label{eqn:linear-spin-ineq}
    \max_{r\in\mathcal{R}} \min_{\pi\in\Pi}~\mathcal{J}(r,\pi)&\coloneq\underset{x\sim\rho(x)}{\EE}\left(\underset{y\sim\pi^\star}{\EE}r(x,y)-\underset{y\sim\pi}{\EE}r(x,y)\right)\mathbb{I}(r\leq R_{\max})\nonumber\\
    &=\underset{x\sim\rho(x)}{\EE}\langle r(x,\cdot),\pi^\star(\cdot|x)-\pi(\cdot|x)\rangle\mathbb{I}(r\leq R_{\max})\nonumber\\
    &\leq R_{\max}\EE_{\rho}\vert\pi^\star(\cdot|x)-\pi(\cdot|x)\vert.
\end{align}
Consider $r^\star(x,y)$ as the optimal reward solution for reward maximization part, by Assumption~\ref{ass:realize}:
\begin{align*}
    r^\star(x,y)=R_{\max}\cdot\text{sgn}(\pi^\star(y|x)-\pi(y|x)).
\end{align*}
Therefore, the inequality in~\eqref{eqn:linear-spin-ineq} reaches equal when the optimal reward is reached. In this case,
\begin{align*}
    \max_{r\in\mathcal{R}} \min_{\pi\in\Pi}~\mathcal{J}(r,\pi)=\min_{\pi\in\Pi}2R_{\max}\cdot \EE_{x\sim\rho}~D_{\text{TV}}(\pi(\cdot|x),\pi^\star(\cdot|x))
\end{align*}
holds because $D_{\text{TV}}(\mu,\nu)=\frac{1}{2}\Vert\mu-\nu\Vert_1$. This is a Total Variation distance minimization.
\section{Non-Linear SPIN as KL Divergence Minimization}
\label{sec:non-linear-spin}
We begin by the following lemma for contraction:
\begin{lemma}[KL contraction toward $\pi^{\star}$]
Let $\beta\ge 1$ and $\alpha\coloneqq 1/\beta\in(0,1]$, and define the update
\begin{equation}
\label{eq:update}
\pi^{k+1}(y|x)
\;\propto\;
\pi^k(y|x)\Big(\frac{\pi^\star(y|x)}{\pi^k(y|x)}\Big)^{\alpha}
\;=\;
\pi^k(y|x)^{1-\alpha} \pi^\star(y|x)^{\alpha}.
\end{equation}
Then the reverse KL to data contracts geometrically:
\begin{equation*}
D_{\text{KL}}\big(\pi^\star\|\pi^{k+1}\big)
\;\le\;
(1-\alpha)\,D_{\text{KL}}\big(\pi^\star\|\pi^{k}\big)
\;=\;
\Big(1-\frac{1}{\beta}\Big)D_{\text{KL}}\big(\pi^\star\|\pi^{k}\big).
\end{equation*}
Consequently,
\begin{equation*}
D_{\text{KL}}\big(\pi^\star\|\pi^{k}\big)\;\le\;\Big(1-\frac{1}{\beta}\Big)^k D_{\text{KL}}\big(\pi^\star\|\pi^{\mathrm{ref}}\big)\xrightarrow[k\to\infty]{}0.
\end{equation*}
\label{lem:kl-contraction}
\end{lemma}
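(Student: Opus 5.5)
The plan is a direct computation: the update in \eqref{eq:update} is a geometric interpolation between $\pi^k$ and $\pi^\star$, so its log-density is affine in $\log\pi^k$ and $\log\pi^\star$, and the KL divergence can be expanded term by term. Everything is done pointwise in $x$; taking $\EE_{x\sim\rho}$ at the end preserves the inequality. We may assume $D_{\text{KL}}(\pi^\star\|\pi^k)<\infty$, i.e.\ $\pi^\star(\cdot|x)\ll\pi^k(\cdot|x)$, since otherwise the claim is trivial.

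First I write the normalizer explicitly. Let $Z_k(x)=\sum_{y}\pi^k(y|x)^{1-\alpha}\pi^\star(y|x)^{\alpha}$, so that
\begin{align*}
\log\pi^{k+1}(y|x)=(1-\alpha)\log\pi^k(y|x)+\alpha\log\pi^\star(y|x)-\log Z_k(x).
\end{align*}
Substituting this into $D_{\text{KL}}(\pi^\star\|\pi^{k+1})=\EE_{y\sim\pi^\star}[\log\pi^\star-\log\pi^{k+1}]$, the $\alpha\log\pi^\star$ term cancels part of $\log\pi^\star$. This gives the exact identity
\begin{align*}
D_{\text{KL}}\big(\pi^\star\|\pi^{k+1}\big)=(1-\alpha)\,D_{\text{KL}}\big(\pi^\star\|\pi^{k}\big)+\log Z_k(x).
\end{align*}

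The only substantive step is to show $\log Z_k(x)\le 0$, i.e.\ $Z_k\le 1$. I would get this from Hölder's inequality with exponents $1/(1-\alpha)$ and $1/\alpha$:
\begin{align*}
\sum_y(\pi^k)^{1-\alpha}(\pi^\star)^\alpha\le\Big(\sum_y\pi^k\Big)^{1-\alpha}\Big(\sum_y\pi^\star\Big)^{\alpha}=1.
\end{align*}
Equivalently, one can apply the weighted AM--GM inequality $a^{1-\alpha}b^\alpha\le(1-\alpha)a+\alpha b$ pointwise and sum over $y$. The edge case $\alpha=1$ (i.e.\ $\beta=1$) gives $\pi^{k+1}=\pi^\star$ and $Z_k=1$, which is consistent with the bound.

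This yields the one-step contraction $D_{\text{KL}}(\pi^\star\|\pi^{k+1})\le(1-1/\beta)\,D_{\text{KL}}(\pi^\star\|\pi^k)$. Induction from $\pi^1=\pi^{\mathrm{ref}}$ then gives a geometric bound of the form $(1-1/\beta)^{k-1}D_{\text{KL}}(\pi^\star\|\pi^{\mathrm{ref}})$. This is at most the stated $(1-1/\beta)^k$ bound if the indexing starts from $\pi^0=\pi^{\mathrm{ref}}$; the index shift is cosmetic. Since $0\le 1-1/\beta<1$, the bound tends to $0$ as $k\to\infty$. I expect no real obstacle beyond the Hölder step and checking that support conditions keep every quantity finite.
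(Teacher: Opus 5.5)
Your proof is correct and follows the paper's route. Both derive the exact identity $D_{\text{KL}}(\pi^\star\|\pi^{k+1})=(1-\alpha)D_{\text{KL}}(\pi^\star\|\pi^k)+\log Z_k$ and then show $Z_k\le 1$: the paper applies Jensen to the concave map $t\mapsto t^{\alpha}$ on $\EE_{\pi^k}[(\pi^\star/\pi^k)^{\alpha}]$, while your Hölder/weighted AM--GM step is equivalent and avoids dividing by $\pi^k$. Your index-shift remark is also apt, since the paper's ``recursively $k$ times'' implicitly takes $\pi^0=\pi^{\mathrm{ref}}$, whereas Algorithm~\ref{alg:self-play-imitation} sets $\pi^1=\pi^{\mathrm{ref}}$.
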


\begin{proof}
We begin by explicitly writing the normalized update rule. Let $Z(x)$ be the normalization constant (partition function) for the update in~\eqref{eq:update}:
\begin{equation*}
    Z(x) = \sum_y \pi^k(y|x)^{1-\alpha} \pi^\star(y|x)^{\alpha} = \mathbb{E}_{y \sim \pi^k(\cdot|x)} \left[ \left( \frac{\pi^\star(y|x)}{\pi^k(y|x)} \right)^{\alpha} \right].
\end{equation*}
The normalized policy is then given by:
\begin{equation*}
    \pi^{k+1}(y|x) = \frac{1}{Z(x)} \pi^k(y|x)^{1-\alpha} \pi^\star(y|x)^{\alpha}.
\end{equation*}
We now expand the KL divergence $D_{\text{KL}}(\pi^\star \| \pi^{k+1})$:
\begin{align}
    D_{\text{KL}}(\pi^\star \| \pi^{k+1}) &= \mathbb{E}_{y \sim \pi^\star} \left[ \log \frac{\pi^\star(y|x)}{\pi^{k+1}(y|x)} \right]\nonumber \\
    &= \mathbb{E}_{y \sim \pi^\star} \left[ \log \pi^\star(y|x) - \log \left( \frac{\pi^k(y|x)^{1-\alpha} \pi^\star(y|x)^{\alpha}}{Z(x)} \right) \right]\nonumber \\
    &= \mathbb{E}_{y \sim \pi^\star} \left[ \log \pi^\star(y|x) - (1-\alpha)\log \pi^k(y|x) - \alpha \log \pi^\star(y|x) + \log Z(x) \right] \nonumber\\
    &= (1-\alpha) \mathbb{E}_{y \sim \pi^\star} \left[ \log \pi^\star(y|x) - \log \pi^k(y|x) \right] + \log Z(x)\nonumber \\
    &= (1-\alpha) D_{\text{KL}}(\pi^\star \| \pi^k) + \log Z(x). \label{eq:kl_expansion}
\end{align}
To bound the remainder term $\log Z(x)$, we invoke Jensen's inequality. Since $\beta \ge 1$, we have $\alpha = 1/\beta \in (0, 1]$. Consequently, the function $f(t) = t^\alpha$ is concave. Applying Jensen's inequality to the expectation of the likelihood ratio under $\pi^k$:
\begin{align*}
    Z(x) &= \mathbb{E}_{y \sim \pi^k} \left[ \left( \frac{\pi^\star(y|x)}{\pi^k(y|x)} \right)^{\alpha} \right] \\
    &\le \left( \mathbb{E}_{y \sim \pi^k} \left[ \frac{\pi^\star(y|x)}{\pi^k(y|x)} \right] \right)^{\alpha} \\
    &= \left( \sum_y \pi^k(y|x) \frac{\pi^\star(y|x)}{\pi^k(y|x)} \right)^{\alpha} \\
    &= \left( \sum_y \pi^\star(y|x) \right)^{\alpha} = 1^\alpha = 1.
\end{align*}
Thus, $Z(x) \le 1$, which implies $\log Z(x) \le 0$. Substituting this inequality back into~\eqref{eq:kl_expansion}, we obtain:
\begin{equation*}
    D_{\text{KL}}(\pi^\star \| \pi^{k+1}) \le (1-\alpha) D_{\text{KL}}(\pi^\star \| \pi^k).
\end{equation*}
Substituting $\alpha = 1/\beta$ yields the geometric contraction:
\begin{equation*}
    D_{\text{KL}}(\pi^\star \| \pi^{k+1}) \le \left(1 - \frac{1}{\beta}\right) D_{\text{KL}}(\pi^\star \| \pi^k).
\end{equation*}
Applying this inequality recursively $k$ times leads to the final convergence rate:
\begin{equation*}
    D_{\text{KL}}(\pi^\star \| \pi^k) \le \left(1 - \frac{1}{\beta}\right)^k D_{\text{KL}}(\pi^\star \| \pi^{\mathrm{ref}}).
\end{equation*}
This completes the proof.
\end{proof}
Consider the choice of logistic link function $\sigma(t)=-\log(1+\exp(-t))$ in~\eqref{eqn:two-stage-formulation}, $\psi(r)=\infty \cdot \mathbf{1}[|r|_{\infty} > R_{\max}]$ without the Bregman constraint over $r$, it recovers the original SPIN \citep{chen2024self} objective with non-identical link function. SPIN has proved the following lemma:
\begin{lemma}[Theorem 5.4 in \citealt{chen2024self}]
    Consider the choice of logistic link function in SPIN. Suppose $\pi^{k+1}$ is the global minimum for the SPIN objective at iteration $k$, then the opponent player at iteration $k+1$ satisfies:
    \begin{equation*}
        \pi^{k+1}(y|x)\propto\pi^k(y|x)\left(\pi^\star(y|x)/\pi^k(y|x)\right)^{1/\beta}.
    \end{equation*}
    \label{lem:update}
\end{lemma}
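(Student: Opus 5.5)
The plan is to reduce the SPIN objective at iteration $k$ to a family of pointwise scalar problems. I would then show that a single function attains the pointwise minimum of every one of them at once.

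\textbf{Step 1: rewrite the objective in terms of a free function.} Write $f(x,y)\coloneqq\beta\log\frac{\pi(y|x)}{\pi^k(y|x)}$ and let $\ell(t)=\log(1+e^{-t})=-\sigma(t)$ be the logistic loss. Minimizing the SPIN objective is then the same as minimizing
\[
\mathcal{L}(f)=\EE_{x\sim\rho}\,\EE_{y\sim\pi^\star(\cdot|x),\,y'\sim\pi^k(\cdot|x)}\big[\ell\big(f(x,y)-f(x,y')\big)\big].
\]
For fixed $x$ and an unordered pair $\{y,y'\}$ with $y\neq y'$, I would group the two ordered contributions. Set $a=\pi^\star(y|x)\pi^k(y'|x)$, $b=\pi^\star(y'|x)\pi^k(y|x)$ and $d=f(x,y)-f(x,y')$. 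The contribution of the pair is $\rho(x)\,[\,a\,\ell(d)+b\,\ell(-d)\,]$. Diagonal terms $y=y'$ contribute the constant $\ell(0)$.

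\textbf{Step 2: solve the scalar problem.} For $a,b>0$ the map $d\mapsto a\ell(d)+b\ell(-d)$ is strictly convex. Its derivative is $-a(1-s(d))+b\,s(d)$, where $s$ is the logistic sigmoid. This vanishes exactly when $s(d)=a/(a+b)$, that is at
\[
d=\log\frac{a}{b}=\log\frac{\pi^\star(y|x)}{\pi^k(y|x)}-\log\frac{\pi^\star(y'|x)}{\pi^k(y'|x)}.
\]
Every pair term is therefore bounded below by its value at this $d$. Summing these bounds gives a lower bound on $\mathcal{L}(f)$ that holds for every $f$.

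\textbf{Step 3: attain the bound with one function.} The choice $f^\star(x,y)=\log\frac{\pi^\star(y|x)}{\pi^k(y|x)}+c(x)$, with $c(x)$ arbitrary, realizes the optimal difference $d$ for every pair simultaneously. Hence $f^\star$ is a global minimizer. Conversely, strict convexity forces any global minimizer to match these differences on every pair with positive weight. So on the joint support, $f$ equals $f^\star$ up to an $x$-dependent constant.

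\textbf{Step 4: translate back to the policy.} Setting $\beta\log\frac{\pi^{k+1}}{\pi^k}=\log\frac{\pi^\star}{\pi^k}+c(x)$ and exponentiating gives
\[
\pi^{k+1}(y|x)\propto\pi^k(y|x)\Big(\frac{\pi^\star(y|x)}{\pi^k(y|x)}\Big)^{1/\beta}.
\]
The normalizing constant absorbs $c(x)$. Assumption~\ref{ass:realize} guarantees that this policy lies in $\Pi$, so the unconstrained minimizer over $f$ is achievable within the model class.

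\textbf{Main obstacle.} The delicate point is the uniqueness and support issue in Step 3. Pairs where $a$ or $b$ vanishes have no finite minimizer in $d$; the loss is only pushed toward $\pm\infty$. This must be handled either by assuming $\pi^\star$ and $\pi^k$ have full (or common) support, or by interpreting the limit $\pi^{k+1}=0$ off the support of $\pi^\star$. I would state the full-support assumption explicitly, following \citet{chen2024self}.
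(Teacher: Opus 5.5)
Your argument is correct and is essentially the proof of Theorem~5.4 in \citet{chen2024self}. The paper gives no proof of its own and simply imports that result. Chen et al.\ also symmetrize over the ordered pairs $(y,y')$ and $(y',y)$, minimize $a\ell(d)+b\ell(-d)$ at $d=\log(a/b)$, and observe that $\log(\pi^\star/\pi^k)+c(x)$ attains every pairwise optimum at once. Your explicit handling of the case $a=0$ or $b=0$ is more careful than the original, which tacitly assumes strictly positive probabilities.

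One justification is wrong, however. Assumption~\ref{ass:realize} only asserts $\pi^\star\in\Pi$ and $r^\star\in\mathcal{R}$. It does not imply that the normalized tilt $\pi^k(\pi^\star/\pi^k)^{1/\beta}$ lies in $\Pi$, because a general policy class need not be closed under geometric interpolation with $\pi^k$. \citet{chen2024self} state this membership as a separate hypothesis of Theorem~5.4, and the paper's restatement silently drops it. You should therefore do one of two things. Either read ``global minimum'' as a minimum over all (full-support) conditional distributions, in which case Step~4 needs no class condition at all. Or add the explicit hypothesis that this tilted policy belongs to $\Pi$. Without one of these, the minimizer over $\Pi$ need not have the stated form.
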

By first applying Lemma~\ref{lem:update} for SPIN objective then applying Lemma~\ref{lem:kl-contraction}, we can prove that SPIN with non-identical link function contracts under KL divergence between $\pi^\star$ and $\pi^k$ for multiple iterations.
\section{Iterative DPO as KL Divergence Minimization}
\label{sec:iterative_dpo}

In this section, we analyze the theoretical properties of Iterative DPO \citep{tu2025enhancing,wu2024self}. We demonstrate that iteratively solving the KL-regularized reinforcement learning objective—using the current policy as the reference—constitutes a contraction mapping that minimizes the KL divergence between the current policy and the optimal policy implied by the preference oracle. Let $\mathcal{X}$ be the input space and $\mathcal{Y}$ be the output space. We assume access to a preference oracle $P(y_1 \succ y_2 | x)$ which adheres to the Bradley-Terry (BT) model. Under the BT model, the preference probability is determined by a latent reward function $r^\star(x,y)$:
\begin{equation*}
    P(y_1 \succ y_2 | x) = \frac{\exp(r^\star(x, y_1))}{\exp(r^\star(x, y_1)) + \exp(r^\star(x, y_2))} = \sigma(r^\star(x, y_1) - r^\star(x, y_2)).
\end{equation*}
Let $\pi^\star$ denote the optimal policy that perfectly captures the underlying reward structure, i.e., $\pi^\star(y|x) \propto \exp(r^\star(x,y))$.

We consider an iterative setting where, at step $k$, we optimize a policy $\pi$ against a reference policy $\pi^k$ (the policy from the previous iteration). The objective is to maximize the expected reward subject to a KL-divergence constraint:
\begin{equation} \label{eq:objective}
    \max_{\pi} \mathcal{J}_k(\pi) = \mathbb{E}_{y \sim \pi(\cdot|x)} [r^\star(x, y)] - \beta D_{\text{KL}}(\pi(\cdot|x) || \pi^k(\cdot|x)).
\end{equation}

\begin{lemma} \label{prop:update_rule}
Given the reference policy $\pi^k$ and the preference oracle $P$, the optimal policy $\pi^{k+1}$ maximizing the objective in~\eqref{eq:objective} is given by:
\begin{equation*}
    \pi^{k+1}(y|x) \propto \pi^k(y|x) \left( \frac{P(y \succ y_{\mathrm{ref}} | x)}{1 - P(y \succ y_{\mathrm{ref}} | x)} \right)^{1/\beta}
\end{equation*}
where preference estimates are taken relative to a baseline $y_{\text{ref}}$ drawn from $\pi^k$.
\end{lemma}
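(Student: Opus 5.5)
The plan has two parts: first the closed form of the KL-regularized update, then a rewrite of the reward in terms of the oracle's preference probabilities.

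\emph{Closed form.} Fix $x$ and maximize $\mathcal{J}_k(\pi)$ over the simplex $\Delta(\mathcal{Y})$. This is the same Gibbs-variational computation used for the policy step of Algorithm~\ref{alg:self-play-imitation}. Introducing a Lagrange multiplier for normalization, or equivalently writing $\mathcal{J}_k(\pi) = -\beta D_{\text{KL}}(\pi\Vert \tilde\pi) + \beta\log Z(x)$ with $\tilde\pi \propto \pi^k\exp(r^\star/\beta)$, shows that the unique maximizer is
\begin{align*}
\pi^{k+1}(y|x)\propto \pi^k(y|x)\exp\big(r^\star(x,y)/\beta\big).
\end{align*}

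\emph{Expressing $r^\star$ through $P$.} Under the Bradley--Terry model, for any fixed baseline response $y_{\mathrm{ref}}$,
\begin{align*}
\frac{P(y\succ y_{\mathrm{ref}}|x)}{1-P(y\succ y_{\mathrm{ref}}|x)} = \exp\big(r^\star(x,y)-r^\star(x,y_{\mathrm{ref}})\big).
\end{align*}
Therefore $\exp(r^\star(x,y)/\beta)$ equals the $1/\beta$-th power of this odds ratio, multiplied by $\exp(r^\star(x,y_{\mathrm{ref}})/\beta)$. That extra factor does not depend on $y$, so it is absorbed into the normalization constant, and the proportionality claimed in the lemma follows.

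\emph{Main obstacle.} The delicate step is the phrase ``$y_{\mathrm{ref}}$ drawn from $\pi^k$.'' If $y_{\mathrm{ref}}$ is a single fixed sample (conditioned on $x$), the argument above goes through verbatim. The statement becomes problematic if $P(y\succ y_{\mathrm{ref}}|x)$ is read as an average over $y_{\mathrm{ref}}\sim\pi^k$, i.e.\ $P(y\succ\pi^k|x)$. The log-odds of an averaged sigmoid is not $r^\star(x,y)$ minus a $y$-independent constant, so the identity would hold only approximately. For that reason I would state and prove the lemma conditionally on a fixed realization of $y_{\mathrm{ref}}$, and note that the resulting policy does not depend on which realization is used, because the normalization cancels it. Finally, I would tie this back to the KL-contraction argument by observing that, when $\pi^\star\propto\exp(r^\star)$, the update becomes $\pi^{k+1}\propto\pi^k(\pi^\star/\pi^k)^{0}\cdots$. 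In fact $\pi^k\exp(r^\star/\beta)$ has the same form as the update in Lemma~\ref{lem:kl-contraction} exactly when $\pi^k$ is itself proportional to $\pi^{\mathrm{ref}}$ times an exponential, which motivates the subsequent contraction result.
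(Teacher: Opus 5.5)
Your argument is correct and matches the paper's proof: the Boltzmann closed form $\pi^{k+1}\propto\pi^k\exp(r^\star/\beta)$, then the Bradley--Terry odds identity, with the $y$-independent factor $\exp(r^\star(x,y_{\mathrm{ref}})/\beta)$ absorbed into the normalizer. Treating $y_{\mathrm{ref}}$ as a fixed realization, rather than using the averaged $P(y\succ\pi^k\mid x)$, is a fair sharpening that the paper leaves implicit.

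Drop the closing remark about Lemma~\ref{lem:kl-contraction}, though. It is garbled, and the condition you state is not the right one. The update $\pi^k\exp(r^\star/\beta)$ differs from $(\pi^k)^{1-1/\beta}(\pi^\star)^{1/\beta}$ by the factor $(\pi^k)^{-1/\beta}$, so the two updates coincide only when $\pi^k(\cdot|x)$ is uniform on its support.
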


\begin{proof}
The optimization problem in~\eqref{eq:objective} has a well-known closed-form solution given by the Boltzmann distribution:
\begin{equation} \label{eq:boltzmann_update}
    \pi^{k+1}(y|x) = \frac{1}{Z_k(x)} \pi^k(y|x) \exp\left( \frac{r^\star(x,y)}{\beta} \right).
\end{equation}
To express the reward $r^\star(x,y)$ in terms of the oracle, we utilize the Bradley-Terry relationship. The odds of preferring $y$ over a reference output $y_{\text{ref}}$ are:
\begin{equation*}
    \frac{P(y \succ y_{\text{ref}} | x)}{P(y_{\text{ref}} \succ y | x)} = \frac{e^{r^\star(x,y)}}{e^{r^\star(x,y_{\text{ref}})}} = \exp(r^\star(x,y) - r^\star(x,y_{\text{ref}})).
\end{equation*}
Taking the logarithm yields the reward function (up to a constant shift $r^\star(x, y_{\text{ref}})$ which is absorbed into the partition function):
\begin{equation*}
    r^\star(x,y) = \log \left( \frac{P(y \succ y_{\text{ref}} | x)}{1 - P(y \succ y_{\text{ref}} | x)} \right) + C.
\end{equation*}
Substituting this expression back into~\eqref{eq:boltzmann_update} yields the proposition:
\begin{equation*}
    \pi^{k+1}(y|x) \propto \pi^k(y|x) \left( \frac{P(y \succ y_{\text{ref}} | x)}{1 - P(y \succ y_{\text{ref}} | x)} \right)^{1/\beta}.
\end{equation*}
\end{proof}
We now show that this iterative process monotonically reduces the distance to the optimal policy $\pi^\star$.

\begin{proposition}[Contraction for Iterative DPO]
Let $\pi^\star$ be the global optimal policy implied by the reward oracle. The sequence of policies $\{\pi^k\}_{k=0}^{\infty}$ generated by the update rule in Proposition \ref{prop:update_rule} satisfies the following contraction inequality:
\begin{equation*}
    D_{\text{KL}}(\pi^\star || \pi^{k+1}) \leq D_{\text{KL}}(\pi^\star || \pi^k) - D_{\text{KL}}(\pi^{k+1} || \pi^k).
\end{equation*}
Consequently, $D_{\text{KL}}(\pi^\star || \pi^{k+1}) < D_{\text{KL}}(\pi^\star || \pi^k)$ for all non-trivial updates ($\pi^{k+1} \neq \pi^k$).
\end{proposition}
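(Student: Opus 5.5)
The plan is to derive the inequality from the three-point identity already established inside the proof of Lemma~\ref{lem:one-step}. The first step is to rewrite the update of Lemma~\ref{prop:update_rule} in Boltzmann form \eqref{eq:boltzmann_update}:
\[
\pi^{k+1}\propto\pi^k\exp(\eta\, r^\star), \qquad \eta=1/\beta .
\]
This shows the update is exactly of the form $\pi'\propto\pi\exp(\eta r)$ with $\pi=\pi^k$, $\pi'=\pi^{k+1}$ and $r=r^\star$. The additive constant $C$ and the choice of baseline $y_{\mathrm{ref}}$ are absorbed into the partition function $Z_k(x)$.

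Next, for each fixed $x$, I will reuse \eqref{eqn:kl-decompose} verbatim with comparator $\pi^\star$. The partition function term vanishes because $\pi^\star$ and $\pi^{k+1}$ are both normalized, which gives
\[
D_{\text{KL}}(\pi^\star\|\pi^{k+1}) = D_{\text{KL}}(\pi^\star\|\pi^k) - D_{\text{KL}}(\pi^{k+1}\|\pi^k) - \eta\,\langle r^\star(x,\cdot),\,\pi^\star(\cdot|x)-\pi^{k+1}(\cdot|x)\rangle .
\]
The claimed contraction is then equivalent to the sign condition
\[
\langle r^\star,\pi^\star-\pi^{k+1}\rangle\ge 0,
\]
that is, $\pi^\star$ earns at least as much expected reward as $\pi^{k+1}$. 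Given that condition, the strict statement follows at once. Gibbs' inequality gives $D_{\text{KL}}(\pi^{k+1}\|\pi^k)>0$ whenever $\pi^{k+1}\neq\pi^k$, so the KL to $\pi^\star$ strictly decreases. Averaging over $x\sim\rho$ extends everything to the contextual version.

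\textbf{Main obstacle: the sign condition.} It holds trivially if $\pi^\star$ is read as the \emph{reward-optimal} policy $\pi^\star\in\argmax_\pi\EE_\pi[r^\star]$, which is supported on $\argmax_y r^\star(x,y)$. In that case $\EE_{\pi^\star}r^\star=\max_y r^\star\ge\EE_{\pi^{k+1}}r^\star$.

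Under the literal Gibbs definition $\pi^\star\propto\exp(r^\star)$, however, the condition is not automatic: a sharper distribution such as $\pi^{k+1}$ can exceed $\pi^\star$ in expected reward. I would handle this case as follows.
\begin{itemize}
\item First try: check whether the sign condition can be recovered from the explicit form of the iterates. Since $\exp(r^\star)\propto\pi^\star$, we have
\[
\pi^{k+1}\propto(\pi^k)\,(\pi^\star)^{1/\beta},
\]
so the iterates are explicit tempered products of $\pi^{\mathrm{ref}}$ and $\pi^\star$.
\item Fallback: if that check fails, state the proposition under the reward-maximizer reading of $\pi^\star$. That reading is the one consistent with "the global optimal policy implied by the reward oracle."
\end{itemize}
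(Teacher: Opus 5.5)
Your argument is the paper's: it writes $\pi^{k+1}=\pi^k e^{r^\star/\beta}/Z_k$ and substitutes $\log Z_k=\beta^{-1}\EE_{\pi^{k+1}}[r^\star]-D_{\text{KL}}(\pi^{k+1}\|\pi^k)$ to get exactly your three-point identity, then asserts $\EE_{\pi^\star}[r^\star]\ge\EE_{\pi^{k+1}}[r^\star]$ because ``$\pi^\star$ maximizes the expected reward'', i.e.\ it effectively uses your fallback reading even though it defined $\pi^\star\propto\exp(r^\star)$. You can drop the ``first try'' entirely, because under the Gibbs reading the proposition is false: with $\pi^0=\pi^\star$ and $r^\star(x,\cdot)$ non-constant, $\pi^1\propto(\pi^\star)^{1+1/\beta}\neq\pi^0$ yet $D_{\text{KL}}(\pi^\star\|\pi^1)>0=D_{\text{KL}}(\pi^\star\|\pi^0)$, and more generally (for full-support $\pi^{\mathrm{ref}}$) $\pi^k\propto\pi^{\mathrm{ref}}e^{kr^\star/\beta}$ concentrates on $\argmax_y r^\star(x,y)$, so $D_{\text{KL}}(\pi^\star\|\pi^k)\to\infty$; the reward-maximizer reading is the only one under which the statement, and the paper's proof, hold.
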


\begin{proof}
We expand the KL divergence term $D_{\text{KL}}(\pi^\star || \pi^{k+1})$:
\begin{align}
    D_{\text{KL}}(\pi^\star || \pi^{k+1}) &= \sum_y \pi^\star(y) \log \frac{\pi^\star(y)}{\pi^{k+1}(y)} \nonumber \\
    &= \sum_y \pi^\star(y) \left[ \log \pi^\star(y) - \log \left( \frac{\pi^k(y) \exp(r^\star(y)/\beta)}{Z_k} \right) \right] \nonumber \\
    &= \sum_y \pi^\star(y) \log \frac{\pi^\star(y)}{\pi^k(y)} - \frac{1}{\beta} \mathbb{E}_{y \sim \pi^\star}[r^\star(y)] + \log Z_k \nonumber \\
    &= D_{\text{KL}}(\pi^\star || \pi^k) - \frac{1}{\beta} \mathbb{E}_{\pi^\star}[r^\star] + \log Z_k. \label{eq:expansion}
\end{align}
Next, we relate the log-partition function $\log Z_k$ to the KL divergence between steps. By definition:
\begin{equation*}
    D_{\text{KL}}(\pi^{k+1} || \pi^k) = \mathbb{E}_{\pi^{k+1}} \left[ \log \frac{\pi^{k+1}}{\pi^k} \right] = \mathbb{E}_{\pi^{k+1}} \left[ \frac{r^\star}{\beta} - \log Z_k \right] = \frac{1}{\beta} \mathbb{E}_{\pi^{k+1}}[r^\star] - \log Z_k.
\end{equation*}
Solving for $\log Z_k$:
\begin{equation} \label{eq:zk_sub}
    \log Z_k = \frac{1}{\beta} \mathbb{E}_{\pi^{k+1}}[r^\star] - D_{\text{KL}}(\pi^{k+1} || \pi^k).
\end{equation}
Substituting~\eqref{eq:zk_sub} into~\eqref{eq:expansion}:
\begin{equation*}
    D_{\text{KL}}(\pi^\star || \pi^{k+1}) = D_{\text{KL}}(\pi^\star || \pi^k) - D_{\text{KL}}(\pi^{k+1} || \pi^k) - \frac{1}{\beta} \underbrace{\left( \mathbb{E}_{\pi^\star}[r^\star] - \mathbb{E}_{\pi^{k+1}}[r^\star] \right)}_{\Delta \geq 0}.
\end{equation*}
Since $\pi^\star$ maximizes the expected reward, the term $\Delta$ is non-negative. Since $D_{\text{KL}}$ is non-negative, the inequality holds strictly, proving contraction towards the oracle distribution.
\end{proof}
\begin{table}[t]
    \centering
    \begin{tabular}{c|c}
    \toprule
        \textbf{Hyperparameters} & \textbf{Value} \\
    \midrule
        $\beta$ & $1e-3$\\
        $\zeta$ & $1e-3$\\
        $\alpha$ & $0.5$\\
        $c$ & $2$\\
        Batch Size & 32 \\
        Optimizer & AdamW \\
        Learning Rate & $5e-7$\\
        Learning Rate Scheduler & Linear\\
        Warm-up Ratio & $0.1$\\
        Epochs per Iteration & $3$\\
        Generation Length & 256\\
    \bottomrule
    \end{tabular}
    \caption{\textbf{Hyperparameters.} This table lists the hyperparameters used for the algorithm, training, and generation.}
    \label{tab:hyperparams}
\end{table}
\section{Experimental Settings and Implementation Details}
\label{sec:implementation}
\subsection{Hyperparameters}
In this section, we describe in detail the hyperparameters used in the practical implementation of our method. We adopt a single fixed set of hyperparameters across all self play iterations and model architectures. Table~\ref{tab:hyperparams} summarizes the hyperparameters used in our algorithm, including the values of $\alpha$, $\beta$, $\zeta$, and $c$, as well as the training hyperparameters such as batch size, optimizer, learning rate, and scheduler, and the generation configuration including generation length.
\subsection{Experimental Settings}
\paragraph{Dataset Preparation} For the first iteration, we use 50k examples subsampled from the UltraChat SFT dataset \citep{Ding2023EnhancingCL}. For each prompt $x$, we construct $(x, y, y')$ tuples, where $y$ is the reference response from the dataset and $y'$ is the response generated by the model. The pairs $(x, y)$ are included in $\mathcal{D}^\star$, while $(x, y')$ form $\mathcal{D}^0$ for training. Starting from the second iteration, we train using both our method and SPIN \citep{chen2024self} on the datasets from the two most recent iterations, resulting in a combined dataset of 100k examples.

\paragraph{Baseline Methods} For the SFT baseline, we reproduce supervised fine tuning by directly applying maximum likelihood estimation on $\mathcal{D}^\star$. For the SPIN baseline \citep{chen2024self}, we use the authors’ official implementation with their recommended hyperparameter settings.

\paragraph{Evaluation Metrics}
We evaluate performance on Arc Challenge using a 25-shot setting with \texttt{acc\_norm} as the evaluation metric. For MMLU, we use a 5-shot setting and report \texttt{acc}. For HellaSwag, we adopt a 10-shot evaluation with \texttt{acc\_norm}. Finally, for WinoGrande, we use a 5-shot setting and report \texttt{acc}.

\section{Ablation Studies}
\label{sec:ablation}
\noindent \textbf{Ablation on Hyperparameter $c$.} We conduct an ablation study on the hyperparameter $c$, which controls the reward targets $r_{\max}$ and $r_{\min}$ in the least-squares regression objective. A larger value of $c$ results in a smaller margin between $r_{\max}$ and $r_{\min}$, whereas a smaller value of $c$ induces a larger margin and higher reward magnitude. We vary $c$ to examine the effect of this trade-off.

In our main experiments, we set $c=2$, corresponding to $r_{\max}=0.5$ and $r_{\min}=-0.5$. When $c$ is reduced to $0.5$, the resulting larger reward magnitude leads to degraded performance, which is consistent with the theoretical predictions in Sec~\ref{sec:theoretical-analysis}. Conversely, setting $c=8$ yields a small reward magnitude and a narrow margin, potentially limiting the ability to discriminate between data generated by the expert policy $\pi^\star$ and the previous-iteration policy $\pi^k$.

Figure~\ref{fig:ablation-c} summarizes the results for these settings, reporting the mean performance of the Qwen-3 4B model across four evaluation benchmarks (Arc-Challenge, MMLU, HellaSwag, WinoGrande) over three self-play iterations (with iteration 0 corresponding to the base model).

\noindent \textbf{Ablation on $D_f(r,r^{k-1})$ Constraint on Reward.} Our method includes an additional regularization term that constrains the deviation between successive policies $\pi^{k+1}$ and $\pi^k$, inducing a mirror descent structure. We ablate this component by removing the final term in~\eqref{eqn:chi-squared-self-play}. The results show that omitting this regularizer leads to noticeable performance degradation, empirically demonstrating its effectiveness. We report results across four benchmarks in Table~\ref{tab:ablation-regularizer} after three self-play iterations on the Qwen-3-4B model.

\begin{table}[t]
    \centering
    \begin{tabular}{c|cc}
    \toprule
        \textbf{Scores (3 Iters)} & \textbf{w/ Regularizer} & \textbf{w/o Regularizer} \\
        \midrule
        Arc-Challenge & 57.11 & 56.87 \\
        MMLU & 68.83 & 68.79 \\
        HellaSwag & 71.92 & 70.05 \\
        WinoGrande & 68.82 & 68.43 \\
        \bottomrule
    \end{tabular}
    \caption{\textbf{Ablation on the Reward Constraint.} We conduct an ablation study on the reward regularizer that enforces mirror descent on the reward player. The results demonstrate that this regularization improves self-play performance. All results are reported after three self-play iterations on the Qwen-3-4B model.}
    \label{tab:ablation-regularizer}
\end{table}

\begin{figure}[t]
\centering
\includegraphics[width=0.5\linewidth]{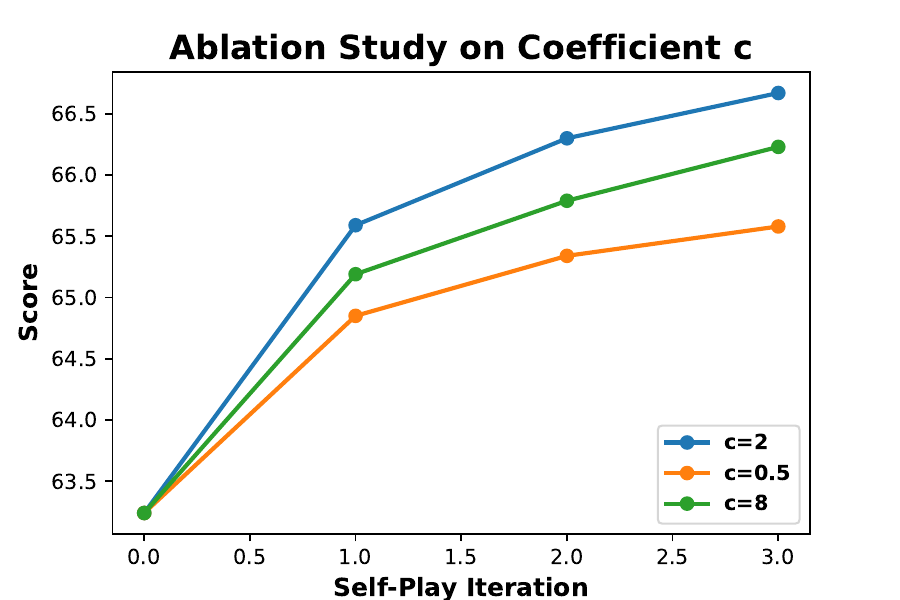}
\caption{\textbf{Ablation on Hyperparameter $c$.} We evaluate the impact of the hyperparameter $c$ by setting $c \in \{0.5, 2, 8\}$ and examining its effect on the self-play performance of our method. Performance is measured as the mean score across the four benchmarks used in the main experiments. We observe that both overly small and overly large values of $c$ lead to performance degradation, highlighting the importance of an appropriate balance in reward scaling. \looseness=-1}
\label{fig:ablation-c}
\end{figure}

\section{Additional Results on Mathematical Capabilities}
\label{sec:math}
To evaluate the mathematical reasoning capabilities of models aligned with our method, we conduct an additional evaluation on GSM8K~\citep{cobbe2021gsm8k}. We compare the aligned Qwen3-4B and Mistral-7B models over three self-play iterations against SFT, SPIN~\citep{chen2024self} and SPACE \citep{wang2026space}. Results are shown in Table~\ref{tab:gsm8k-results}.

\begin{table}[t]
\centering
\small
\setlength{\tabcolsep}{6pt}
\renewcommand{\arraystretch}{1.1}
\begin{tabular}{@{}lccccc@{}}
\toprule
\textbf{Model} & \textbf{Base} & \textbf{SFT} & \textbf{SPIN (3 Iters)} & \textbf{SPACE (3 Iters)} & \textbf{SPIF (3 Iters)} \\
\midrule
Qwen3-4B   & 84.91 & 85.67 & 86.31 & 87.24 & \textbf{88.55} \\
Mistral-7B & 33.36 & 36.94 & 39.13 & 39.52 & \textbf{40.44} \\
\bottomrule
\end{tabular}
\caption{\textbf{GSM8K Results.}
We report GSM8K accuracy for the base model, SFT, SPIN, SPACE and SPIF methods after three self-play iterations. Results are reported with five random seeds.}
\label{tab:gsm8k-results}
\end{table}

\section{Results with Larger Number of Iterations}
\label{sec:six-iter}
The main result in this paper conducts the self-play algorithm in three iterations. In this section, we demonstrate the result over six iterations on the average scores on four baselines (Arc Challenge \citep{allenai:arc}, MMLU \citep{Hendrycks2020MeasuringMM}, HellaSwag \citep{Zellers2019HellaSwagCA}, and WinoGrande \citep{Sakaguchi2019WinoGrande}) with Qwen3 4B model. Results show that the performance generally saturates as the number of iteration increases during the self-play, showing that the self-play methods cannot bootstrap infinitely and has an upper limit, aligns with the theoretical results of our adversarial imitation learning theoretical framework.

\begin{figure}[h]
    \centering
    \includegraphics[width=0.5\linewidth]{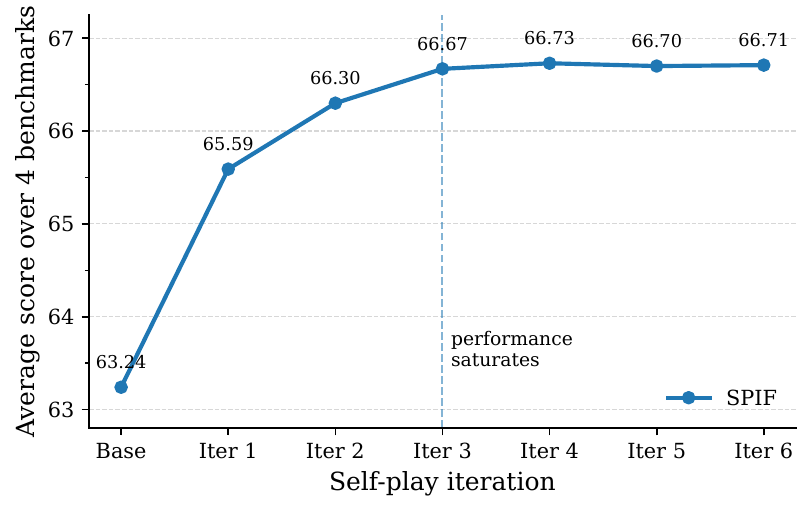}
    \caption{\textbf{Saturation of SPIF over self-play iterations.}
    We report the average score over four benchmarks for Qwen3-4B across six SPIF iterations.
    The performance improves substantially in the first few iterations and saturates after Iteration 3,
    consistent with the interpretation of SPIF as an imitation learning method.}
    \label{fig:spif-saturation}
\end{figure}

\section{Limitations}

Our theoretical analysis relies on realizability assumptions. While such assumptions are standard in theoretical analyses, they may not always hold in practical LLM finetuning scenarios, where the reward and policy classes can be misspecified. In addition, our convergence result establishes an average-iterate guarantee, whereas practical self-play finetuning evaluates the last-iterate model produced by training. Establishing last-iterate convergence would require stronger assumptions on the game structure or optimization dynamics, which we leave for future work.

\section{Generated Samples}

\begin{table}[h]
    \centering
    \begin{tabular}{p{0.18\linewidth}|p{0.75\linewidth}}
    \toprule
        Prompt & What is the best place to try traditional Indian sweets in Jaipur?\\
    \midrule
    Ground Truth & One of the best places to try traditional Indian sweets in Jaipur is Laxmi Misthan Bhandar (LMB) in Johari Bazaar. They have been serving authentic Rajasthani sweets for over 300 years and are known for their decadent desserts like Ghewar, Rasgulla, and Mawa Kachori. Their ambiance and range of sweets make for a memorable experience.\\
    \midrule
       Base Model  & The best place to try traditional Indian sweets in Jaipur is Kishore Ganj. This bustling market area is famous for its wide variety of street food, including a wide range of traditional Indian sweets like Gajar ka Halwa, Jalebi, Kheer, Barfi, Ladoo, and Mithai.
       You can also find sweet shops (called Mithai Mandis) along the streets of Kishore Ganj, where local vendors offer a variety of sweet treats. For a more authentic experience, you can visit Kishore Ganj Bazaar, which is a popular spot for both locals and tourists to enjoy the local flavors and street food culture of Jaipur.
       Other nearby areas like Rajouri Garden and Anand Niketan also have good options for traditional Indian sweets, but Kishore Ganj is widely regarded as the best place to try them in Jaipur.\\
    \midrule
       Iteration 1 & The best place to try traditional Indian sweets in Jaipur is the famous Jaipur Sweet Shop, located on the corner of Bapu Road and Jai Singh Road. This shop is known for its wide variety of sweet dishes, including laddoos, jalebis, kheer, and much more. The shop has been in operation for over 100 years and is a favorite among locals and tourists alike.\\
    \midrule
       Iteration 2 &  The best place to try traditional Indian sweets in Jaipur is the Chandni Chowk Bazaar.\\
    \midrule
       Iteration 3 & The best place to try traditional Indian sweets in Jaipur is Laxmi Mishtan Bhandar, located in the heart of the city.\\
    \bottomrule
    \end{tabular}
    \caption{\textbf{Generated Samples.} We present generated samples from the Qwen-3 4B model across different self-play finetuning iterations for a single prompt with our proposed method.}
    \label{tab:generated-samples}
\end{table}

\end{document}